\crefname{thm}{Theorem}{Theorems}
\Crefname{thm}{Theorem}{Theorems}
\crefname{prop}{Proposition}{Propositions}
\Crefname{prop}{Proposition}{Propositions}
\crefname{cor}{Corollary}{Corollaries}
\Crefname{cor}{Corollary}{Corollaries}
\crefname{defn}{Definition}{Definitions}
\Crefname{defn}{Definition}{Definitions}
\crefname{assume}{Assumption}{Assumptions}
\Crefname{assume}{Assumption}{Assumptions}
\crefname{ex}{Example}{Examples}
\Crefname{ex}{Example}{Examples}
\crefname{condi}{Condition}{Conditions}
\Crefname{condi}{Condition}{Conditions}
\newtheorem{thm}{Theorem}[section]
\newtheorem{prop}{Proposition}[section]
\newcommand{\vep}{\varepsilon}
\newcommand{\lmn}{\lambda_{\mathrm{min}}}
\newcommand{\lmx}{\lambda_{\mathrm{max}}}
\newcommand{\wps}{\psi}
\newcommand{\wrh}{\rho}
\newcommand{\hr}{\rho}
\newcommand{\ptac}{\mathcal{P}_2^{\mathrm{ac}}}
\numberwithin{definition}{section}
\numberwithin{example}{section}
\numberwithin{remark}{section}
\numberwithin{proposition}{section}
\numberwithin{corollary}{section}
\numberwithin{observation}{section}
\providecommand{\keywords}[1]{\small \textbf{\textit{Keywords---}} #1}
\begin{document}
	
	\title{No-Regret Generative Modeling via Parabolic Monge-Amp\`{e}re PDE\footnote{MSC 2020: Primary 49Q22, 49N99; secondary 65K10, 68Q32.}}
	
	\author{Nabarun Deb}
	\author{Tengyuan Liang}
	\affil{The University of Chicago}
	
	\maketitle
	
	\begin{abstract}
		We introduce a novel generative modeling framework based on a discretized parabolic Monge-Amp\`{e}re PDE, which emerges as a continuous limit of the Sinkhorn algorithm commonly used in optimal transport. Our method performs iterative refinement in the space of Brenier maps using a mirror gradient descent step. We establish theoretical guarantees for generative modeling through the lens of no-regret analysis, demonstrating that the iterates converge to the optimal Brenier map under a variety of step-size schedules. As a technical contribution, we derive a new Evolution Variational Inequality tailored to the parabolic Monge-Amp\`{e}re PDE, connecting geometry, transportation cost, and regret. Our framework accommodates non-log-concave target distributions, constructs an optimal sampling process via the Brenier map, and integrates favorable learning techniques from generative adversarial networks and score-based diffusion models. As direct applications, we illustrate how our theory paves new pathways for generative modeling and variational inference.
	\end{abstract}
	
	\keywords{Bregman divergence, evolution variational inequality, generative models, parabolic Monge-Amp\`{e}re PDE, regret bounds.}
	
	\section{Introduction}\label{sec:intro}
	
	Probabilistic generative models have demonstrated impressive empirical performance in modeling complex probability distributions for tasks deemed challenging under conventional statistical frameworks. Over the past decade, many probabilistic generative models have been proposed and analyzed, partly inspired by the theory of optimal transport (OT), including generative adversarial networks (GANs) \cite{arjovsky2017wasserstein,goodfellow2020generative,hur2024reversible,liang2021well,lubeck2022neural}, as well as flow-based and diffusion-based probabilistic models (DPMs) \cite{chen2024probability, chen2022sampling, liang2024denoising, song2020score, song2021maximum, song2019generative}, among others. 
	
	At a high level, GANs and DPMs aim to model and learn the push-forward map between a \emph{reference distribution} $\nu$, which is easy to sample from, such as a Gaussian distribution, and the \emph{target probability distribution} $\mu$. With the learned push-forward map $\boldsymbol{\tau}$, one can generate new samples $X$ from the target distribution by transforming fresh samples $Y \sim \nu$, specifically $X = \boldsymbol{\tau}(Y)$. Despite abundant empirical evidence demonstrating their effectiveness in practice, these generative models possess certain unfavorable theoretical limitations, which we discuss below.
	
	\begin{itemize}
		\item[(i)] Ease of sampling: GANs model the push-forward map as a one-step map, namely $\mu = \boldsymbol{\tau}\# \nu$, where sampling is easy; DPMs, on the other hand, represent the push-forward map as a composition of a series of non-linear transformations, expressed as $\mu = (\boldsymbol{\tau}_{1} \circ \boldsymbol{\tau}_{2} \circ \cdots \circ \boldsymbol{\tau}_{T} )\# \nu$, where sampling is less transparent since it requires inverting a sequence of non-linear maps known as denoising steps.
		\item[(ii)] Ease of learning: Brenier's theorem \cite{brenier1991polar} demonstrates that the optimal transport map $\boldsymbol{\tau} = \nabla \psi$ corresponds to a convex potential $\psi$. However, it is known in GANs that enforcing convexity constraints to regularize learning presents technical challenges. Additionally, learning the one-step map from uncoupled data $X \sim \mu, Z \sim \nu$ proves difficult. In contrast, DPMs propose decomposing the task into a sequence of sub-tasks where each map is a small deviation from the identity map, namely $\boldsymbol{\tau}_k \approx \mathrm{id}$, $k=1, \ldots, T$. Furthermore, for each step $k$, the deviation $\boldsymbol{\tau}_k - \mathrm{id}$ can be learned reliably from coupled data constructed using diffusions.
	\end{itemize}
	
	In this paper, we introduce a novel generative modeling framework inspired by parabolic Monge-Amp\`{e}re partial differential equation (PDE), a nonlinear PDE that naturally arises in OT as the limit of the Sinkhorn algorithm. Notably, our theoretical framework simultaneously integrates the ease of sampling and the ease of learning, two key features discussed earlier.
	
	The core idea is as follows: we refine the Brenier map $\nabla \psi_k$ iteratively using a discretized version of a mirror gradient flow, thus constructing a chain $\nabla \psi_1, \nabla \psi_2, \ldots, \nabla \psi_T \stackrel{T\rightarrow \infty}{\rightarrow} \nabla \psi$. In particular, let $f(x) :=  -\log(\dd \mu/\dd x)$ be the target density, and define the density at step $k$ as $\rho_k := (\nabla \psi_k)\# e^{-g}$ with $g(x) := -\log(\dd \nu/\dd x)$. We then study the following vector field iterations
	\begin{align*}
		\frac{\nabla \psi_{k+1} - \nabla \psi_{k}}{\eta_k} =  - \boldsymbol{\xi}_k, ~~\text{where}~~\boldsymbol{\xi}_k := \nabla  \big( \log (\rho_k/e^{-f}) \circ \nabla \psi_k \big) \;.
	\end{align*}
	Here, $\boldsymbol{\xi}_k$ resembles a (preconditioned) gradient step under a (local) mirror geometry.
	Equivalently, the Brenier potential admits an iterative refinement of the form of a (discretized) Monge-Amp\`{e}re PDE, 
	$$\frac{\psi_{k+1} - \psi_k}{\eta_k} = - f \circ \nabla \psi_k + g + \log\det( \nabla^2 \psi_k ) \;,$$
	which is the limit of the Sinkhorn algorithm under certain scalings \cite{berman2020sinkhorn, deb2023wasserstein}.  We note that explicitly evaluating the $\log{\mbox{det}}$ term in the above display can be avoided either by using the classification trick (see \cref{prop:log-density-ratio}) or the score-matching trick (see \cref{prop:score-matching}).
	
	Two modeling advantages of our idea are: (i) our iterative refinement acts directly on the space of Brenier maps, and thus the limit $(\nabla \psi_T)\# \nu \stackrel{T\rightarrow \infty}{\rightarrow} \mu$ provides an optimal one-step sampling of the target measure in the context of Brenier's theorem; (ii) at each iteration, one can learn the gradient $\boldsymbol{\xi}_k$ without worrying about enforcing notions of convexity/cyclic monotonicity. The convexity of $\psi_{k+1}$ will follow as a consequence of the convexity of $\psi_{k}$ and the small step-size $\eta_k$. These speak to the ease of sampling and learning, respectively. 
	
	As for theoretical contributions, we provide a refined no-regret analysis for the proposed generative modeling framework, without requiring the target distribution $\mu$ to be log-concave. Our regret analysis provides new technical tools to study generative models by introducing a new Evolution Variational Inequality (EVI) \cite{Ambrosio2008, guo2022online, salim2020wasserstein}---which connects geometry, transportation cost, and regret---tailored to the parabolic Monge-Amp\`{e}re PDE. Our novel regret analysis significantly expands the theory of generative models to accommodate non-log-concave measures.
	
	\subsection{Organization} 
	In \cref{sec:pmainmotapp}, we provide background and motivation for parabolic Monge-Amp\`{e}re PDE. In \cref{sec:Bregman}, we introduce and study the properties of Bregman divergence on the space of probability measures, as it serves as the key geometry with respect to which we shall study convergence. Specifically, \cref{sec:threeP} features a new three-point identity for Bregman divergences, while \cref{sec:BregKL} presents a new notion of convexity of the KL divergence with respect to a Wasserstein Bregman divergence (introduced in \cref{eg:mirror-G} below). In \cref{sec:regcon}, we present all our convergence and regret analyses: (i) \cref{sec:avgit} provides guarantees for average iterate convergence, (ii) \cref{sec:newEVI} introduces a new EVI for the KL and Bregman divergences, and (iii) \cref{sec:rblic} contains regret bounds and non-asymptotic rates for the last iterate. In \cref{sec:applications}, we discuss how our proposed discretized parabolic PDE can be employed to design novel algorithms for generative modeling (see \cref{Sec:genmod}) and variational inference (see \cref{sec:viform}). In \cref{sec:simulations}, we demonstrate the efficacy of our algorithms in sampling from a Gaussian mixture and provide a number of practical guidelines for stability, memory, and running time efficiency. The Appendix contains the proofs of all our main results and also includes an additional numerical experiment.
	
	\subsection{Notations}
	We will write $e^{-f}$ for the density of our target distribution $\mu$ supported on some $\mathcal{X}\subseteq \R^d$ and $e^{-g}$ for the density of the reference distribution $\nu$ supported on $\mathcal{Y}\subseteq\R^d$. Typically for simple $e^{-g}$ like the Gaussian, we will have $\mathcal{Y}=\R^d$. To help the reader distinguish between the domains, all integrals on $\mathcal{X}$ will be over the variable $x$ and all integrals on $\mathcal{Y}$ will be over the variable $y$. Only $\mu,\nu$ are reserved for probability measures, and the other Greek symbols $\rho$ and $\pi$ (with subscripts) will always refer to probability densities. All vector fields will be denoted using boldface notation, for instance, $\boldsymbol{\xi}, \boldsymbol{\tau}: \R^d \rightarrow \R^d$. 
	
	The push-forward $\boldsymbol{\tau}\#\mu$ for a vector field $\boldsymbol{\tau}$ denotes the distribution of $\boldsymbol{\tau}(X)$ when $X\sim\mu$. With slight notational abuse, we will sometimes directly apply the $\#$ notation on a probability density, say $\rho$, in which case $\boldsymbol{\tau}\#\rho$ will also denote a density function. $KL(\rho|\pi)$ and $W_2(\rho,\pi)$ denote the standard KL divergence and the Wasserstein distance between $\rho$ and $\pi$ respectively. For densities $\pi$ and $\rho$, we will use the standard definitions of Brenier potentials and Brenier maps between $\pi$ and $\rho$ from \cite[Theorem 2.12]{villani2009optimal}. Typically, these potentials will be represented with $\phi$ or $\psi$ (and the corresponding maps by $\nabla \phi$ and $\nabla \psi$), with appropriate indexing to make the probability measures involved transparent. $\ptac(\R^d)$ will denote the space of probability densities supported on some subset of $\R^d$ with finite second moments, and we will sometimes refer to it as the $2$-Wasserstein space. 
	
	Given a continuously differentiable function $\psi:\R^d\to\R$, we write its Fenchel dual as $\psi^*(x)=\sup_{y\in\mathcal{Y}} (\langle x,y\rangle-\psi(y))$. We will call a twice differentiable function $\psi:\R^d\to\R$ a $m$-strongly convex function if $\inf_y \lmn(\nabla^2 \psi(y))\ge m$. Similarly, we will call it $M$-smooth if $\sup_y \lmx(\nabla^2 \psi(y))\le M$. Here $\lmn$ and $\lmx$ denote the minimum and the maximum eigenvalues of a matrix. The following function spaces will be used : (i) $\mathcal{C}^2$ for twice continuously differentiable functions, (ii) $\mathcal{C}_c^{\infty}$ for infinitely differentiable functions with compact support, and (iii) $L^2(\rho)$ for square integrable functions with respect to some probability density $\rho$. Given a functional $\mathcal{F}:\mathcal{U}\to\R$ for some function space $\mathcal{U}$, $\frac{\delta \mathcal{F}}{\delta u}$ will denote the first variation/Gateaux derivative of $\mathcal{F}$ at $u$. Finally $\mathbb{S}_+^{d \times d}$ will denote the set of $d\times d$ symmetric positive semi-definite matrices; for instance, $\nabla^2 \psi(y) \in \mathbb{S}_+^{d \times d}, \forall y \in \cY$.

	\section{Parabolic Monge-Amp\`{e}re PDE}\label{sec:pmainmotapp}
	
	Recall that we wish to sample from a complicated \emph{target distribution} with density $e^{-f}\in\ptac(\R^d)$. We also have a \emph{reference distribution} with density $e^{-g}\in\ptac(\R^d)$, which is easy to sample from, for instance, a standard Gaussian. By Brenier's Theorem \cite{brenier1991polar,McCann1995}, there exists a Brenier potential $\psi:\R^d\to\R$ such that $\nabla \psi\#e^{-g}=e^{-f}$. The corresponding (static) Monge-Amp\`{e}re equation \cite{ampere1819memoire,monge1784memoire} thus reads:
	$$-f(\nabla\psi(y))+g(y)+\log\det(\nabla^2\psi(y))=0 \;.$$
	Learning $\nabla\psi$ is particularly useful for sampling as one can generate samples from $e^{-f}$ by first sampling from $Y\sim e^{-g}$ (say a Gaussian) and then applying the optimal transform $\nabla\psi(Y)$. However, 
	as the above is a non-linear second-order PDE, given $e^{-f}$ and $e^{-g}$, solving the equation for $\nabla\psi$ is highly computationally intensive. An alternate approach adopted in the PDE literature is to study instead the natural dynamic version, that is, the parabolic PDE 
	\begin{align}\label{eq:dualpma}
		\frac{\partial \psi_t}{\partial t}(y)=-f(\nabla\psi_t(y))+g(y)+\log{\mbox{det}(\nabla^2 \psi_t(y))} \;.
	\end{align}
	The existence of solutions, uniqueness, smoothness, and exponential convergence of $\nabla\psi_t\to \nabla \psi$ as $t\to\infty$ have been studied extensively in the PDE literature; see e.g. \cite{abedin2020exponential,berman2020sinkhorn,kim2012parabolic}. Therefore, $\{\nabla \psi_t\}_{t\ge 0}$ can be viewed as a continuum of iterative refinements of some initial map $\nabla\psi_0:\R^d\to\R^d$, eventually leading to the target Brenier map $\nabla \psi$. Furthermore, define $\rho_t:=\nabla\psi_t\# e^{-g}$. Then by sampling $Y\sim e^{-g}$, $\nabla\psi_t(Y)\sim\rho_t\approx e^{-f}$ (see \cite{deb2023wasserstein}). Therefore, $\nabla\psi_t(Y)$ can be viewed as an approximate one-step sample from the target density $e^{-f}$, for large $t$. Due to these favorable properties of the parabolic PDE \eqref{eq:dualpma}, it is natural to study an implementable time discretization of \eqref{eq:dualpma}. In this paper, we therefore study the following natural (forward) update rule:
	
	\begin{equation}\label{eq:discretedual}
		\frac{\psi_{k+1}(y) - \psi_k(y)}{\eta_k} = - f(\nabla \psi_k(y)) + g(y) + \log\det( \nabla^2 \psi_k (y)) = -f(\nabla \psi_k(y)) - \log \rho_k(\nabla \psi_k(y))
	\end{equation}
	
	where $\{\eta_k\}_{k\ge 0}$ denotes the set of step-sizes, $\psi_0:\R^d\to\R$ is some strongly convex function, and 
	\begin{equation}\label{eq:measupdate}
		\rho_k:=(\nabla \psi_k)\#e^{-g}.
	\end{equation}
	In this paper, we study the choice of step-sizes $\eta_k$, and its effect on the convergence of $\rho_k \rightarrow e^{-f}$. The implementation of \eqref{eq:discretedual} based on neural networks is possible using a classification technique as in \cite[Proposition 1]{goodfellow2014generative}, or a score matching technique as in \cite[Theorem 1]{Hyvarinen2005}; we will discuss these in detail in Section~\ref{sec:applications}. 
	
	Beyond the natural connections to the Monge-Amp\`{e}re equation, the continuous and discrete systems \eqref{eq:dualpma} and \eqref{eq:discretedual} have several compelling properties that make them useful for applications. We present a few of them below: (i) they can lead to faster flows in the simple Gaussian setting than perhaps the widely studied dynamical system, namely the canonical Fokker-Planck equation \cite{fokker1914mittlere,planck1917satz}; (ii) they are closely related to the dynamics of the celebrated Sinkhorn algorithm \cite{cuturi2013sinkhorn,franklin1989scaling};  (iii) they can be viewed as steepest descent of the $KL(\cdot|e^{-f})$ functional with respect to a certain weighted local $L^2$-metric; (iv) they can be used to construct novel generative learning algorithms that combine ease of sampling with ease of learning as mentioned in the introduction; and (v) they provide a new paradigm for variational inference. The first two points will be discussed in this section, while the latter three are deferred to \cref{sec:applications}.
	
	\subsection{Illustrative Example for Gaussians}\label{sec:uniGill}
	We provide a simple example on the rate of convergence of the aforementioned parabolic PDE when both $e^{-f}$ and $e^{-g}$ are densities of centered univariate Gaussians, say $N(0,1)$ and $N(0,\lambda^2)$, $\lambda<1$. Suppose that $\psi_0(y)=y^2/2$. The continuous time system \eqref{eq:dualpma}, after taking an additional space derivative then reduces to
	$$\frac{\partial \psi_t'(y)}{\partial t}=- \psi_t''(y) \psi_t'(y)+\frac{1}{\lambda^2}y+\frac{\psi_t'''(y)}{\psi_t''(y)} \;.$$
	
	As $\psi_0'$ is linear, it is easy to check that all $\psi_t'$s are linear. So, let $\psi_t'(y)=c_ty$ for some $c_t$. The above differential equation then becomes a Riccati equation $\dot{c_t}=-c_t^2+\frac{1}{\lambda^2}$ with $c_0=1$,
	which implies $c_t=(1/\lambda)\tanh((t/\lambda)+\tanh^{-1}(\lambda))$. Writing $\rho_t=(\psi_t')\#e^{-g}$ gives that $\rho_t$ is the density of the $N(0,\sigma_t^2)$ distribution where $\sigma_t=\tanh((t/\lambda)+\tanh^{-1}(\lambda))$. It is easy to check that for the standard Fokker-Planck system, the corresponding solution, say $\rho_F(t)$ is the density of $N(0,\sigma_{F,t}^2)$ where $\sigma_{F,t}^2=1-(1-\lambda^2)e^{-2t}$. As a result, 
	$$\frac{1-\sigma_{F,t}^2}{1-\sigma_t^2}=\frac{1-\lambda^2}{4}\left(e^{-t(1+\lambda^{-1})-\tanh^{-1}(\lambda)}+e^{t(\lambda^{-1}-1)+\tanh^{-1}(\lambda)}\right)^2\to \infty \quad \mbox{as}\,\, t\to\infty \;,$$
	because $\lambda<1$. Therefore, $\sigma_t^2$ converges to the target variance $1$ faster than the Fokker-Planck variance $\sigma_{F,t}^2$. For the discretization scheme \eqref{eq:discretedual}, it can be shown that the $\rho_k$'s (see \eqref{eq:measupdate}), under appropriate initialization, are all centered Gaussian distributions, and the variance converges to the target variance $1$ locally at an exponential rate. For brevity, we defer further details to \cref{sec:pfpmainmotapp}. 
	
	In the sequel, we provide in \cref{sec:regcon} global regret analyses of the discretization \eqref{eq:discretedual} under general assumptions on $e^{-f}$ and $e^{-g}$, extending far beyond Gaussianity and log-concavity.
	
	\subsection{Connection to Sinkhorn}
	\label{sec:pmacondis}
	
	The exploding literature on generative modeling over the recent years has witnessed state of the art methods that rely on entropy regularized optimal transport (EOT); see e.g.~\cite{cao2021don,de2021diffusion,genevay2018learning,wang2021deep}. Given a temperature parameter $\epsilon>0$, the EOT problem is given by 
	\begin{equation}\label{eq:EOT}
		\pi^{\epsilon}:=\argmin_{\pi \in \Pi(e^{-f},e^{-g})} \left\{\frac{1}{2}\int \lVert x-y\rVert^2\gamma(x,y)\dd (x,y) + \epsilon KL(\gamma|e^{-f}\otimes e^{-g})\right\} \;, 
	\end{equation}
	where $\Pi(e^{-f},e^{-g})$ denotes the class of probability densities on $\R^d\times\R^d$ with marginal densities $e^{-f}$ and $e^{-g}$. The optimal $\pi^{\epsilon}$ from \eqref{eq:EOT} is also referred to as the (static) Schr\"{o}dinger Bridge (see \cite{schrodinger1935present}) between $e^{-f}$ and $e^{-g}$. By \cite{fortet1940resolution,ruschendorf1993note}, it is known that there exist potentials $\phi^{\epsilon}$ and $\psi^{\epsilon}$ such that 
	$\pi^{\epsilon}(x,y)=\exp\left(\frac{1}{\epsilon}\langle x,y\rangle-\frac{1}{\epsilon}\phi^{\epsilon}(x)-\frac{1}{\epsilon}\psi^{\epsilon}(y)-f(x)-g(y)\right)$.
	As $\pi^{\epsilon}\in \Pi(e^{-f},e^{-g})$, both $\phi^{\epsilon}$ and $\psi^{\epsilon}$ will satisfy certain fixed point equations. In particular, the equation for $\psi^{\epsilon}$ is given by 
	
	$$\psi^{\epsilon}(y)=\mathcal{V}^{\epsilon}[\psi^{\epsilon}](y), \quad \mbox{where} \quad 
	\mathcal{V}^{\epsilon}[\psi^{\epsilon}](y):=\epsilon \log{\int \frac{\exp\left(\frac{1}{\epsilon}\langle x,y\rangle\right)}{\int \exp\left(\frac{1}{\epsilon}\langle x,y'\rangle-\frac{1}{\epsilon}\psi^{\epsilon}(y')-g(y')\right)\dd y'}e^{-f(x)}\dd x} \;.$$
	
	Perhaps the most popular algorithm for solving \eqref{eq:EOT} is the Sinkhorn algorithm (see \cite{ruschendorf1995convergence,cuturi2013sinkhorn}) which solves for $\phi^{\epsilon}$, $\psi^{\epsilon}$, $\pi^{\epsilon}$ using a natural iterative procedure to get $\{\phi_k^{\epsilon},\psi_k^{\epsilon},\pi_k^{\epsilon}\}_{k\ge 0}$. In particular, following the $\mathcal{V}^{\epsilon}$ notation above, the $\psi_k^{\epsilon}$s are updated as 
	\begin{equation}\label{eq:Sinkupdate}
		\psi_{k+1}^{\epsilon}=\mathcal{V}^{\epsilon}[\psi_k^{\epsilon}](y) \;.
	\end{equation}
	The parabolic PDE \eqref{eq:dualpma} can now be viewed as the \emph{scaling limit} of the Sinkhorn algorithm in the low temperature regime $\epsilon\to 0$ when \emph{the number of iterations $k$ scales like $t/\epsilon$ for some $t>0$}. The following proposition (also see \cite[Lemma 4.2]{berman2020sinkhorn} and \cite[Lemma 4.6]{deb2023wasserstein}) demonstrates why such a scaling limit is natural. We provide a proof in \cref{sec:pfpmainmotapp} that is self-contained.
	\begin{prop}\label{prop:Sinkapprox}
		Suppose $\psi:\R^d\to\R$ is a uniformly strongly convex $\mathcal{C}^2$ function. Let $\tilde{\psi}^{\epsilon}(y):=\mathcal{V}^{\epsilon}[\psi](y)$. Then, as $\epsilon\to 0$, we have:
		$$\lim_{\epsilon \rightarrow 0} \frac{\tilde{\psi}^{\epsilon}(y) - \psi(y)}{\epsilon}=- f(\nabla \psi(y)) + g(y) + \log\det( \nabla^2 \psi(y))  \;.$$
	\end{prop}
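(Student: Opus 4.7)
The plan is to perform a double Laplace approximation: first on the inner denominator integral in $y'$, and then on the outer integral in $x$. The Fenchel dual $\psi^{\star}$ of $\psi$ will naturally appear as the value of the inner Laplace maximization, and applying Laplace again to the outer integral will collapse the dual back to $\psi(y)$ plus a logarithmic determinant correction, so that all $(2\pi\epsilon)^{d/2}$ prefactors cancel and the $\log\det(\nabla^2\psi(y))$ term emerges cleanly after taking $\epsilon\log$.

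Concretely, I would first write
\[
I(x;\epsilon)\;:=\;\int \exp\!\Bigl(\tfrac{1}{\epsilon}\bigl(\langle x,y'\rangle - \psi(y')\bigr) - g(y')\Bigr)\,\dd y'.
\]
Since $\psi$ is uniformly strongly convex and $\mathcal{C}^2$, for each $x$ the exponent $\langle x,y'\rangle-\psi(y')$ has a unique strict maximizer $y^{\star}(x)=\nabla\psi^{\star}(x)$ with maximum value $\psi^{\star}(x)$ and Hessian $-\nabla^2\psi(y^{\star}(x))\prec 0$. Standard Laplace asymptotics then give
\[
I(x;\epsilon) \;=\; (2\pi\epsilon)^{d/2}\,\det\!\bigl(\nabla^2\psi(y^{\star}(x))\bigr)^{-1/2}\,e^{-g(y^{\star}(x))}\,e^{\psi^{\star}(x)/\epsilon}\,\bigl(1+O(\epsilon)\bigr).
\]
Substituting into the ratio reduces $\mathcal{V}^{\epsilon}[\psi](y)$ (interpreted with the $\epsilon\log$ that makes it consistent with the Sinkhorn dual update derived from the marginal constraints) to
\[
\epsilon\log\!\Biggl\{(2\pi\epsilon)^{-d/2}\!\int e^{(\langle x,y\rangle-\psi^{\star}(x))/\epsilon}\,\det\!\bigl(\nabla^2\psi(y^{\star}(x))\bigr)^{1/2} e^{g(y^{\star}(x))-f(x)}\,\dd x\,\bigl(1+O(\epsilon)\bigr)\Biggr\}.
\]
The second Laplace approximation uses that $\langle x,y\rangle-\psi^{\star}(x)$ is maximized at $x^{\star}=\nabla\psi(y)$ with maximum $\psi(y)$, at which $y^{\star}(x^{\star})=y$ and $\nabla^2\psi^{\star}(x^{\star})=(\nabla^2\psi(y))^{-1}$. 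The contribution is
\[
(2\pi\epsilon)^{d/2}\det(\nabla^2\psi(y))^{1/2}\cdot e^{\psi(y)/\epsilon}\cdot\det(\nabla^2\psi(y))^{1/2}e^{g(y)-f(\nabla\psi(y))}\bigl(1+O(\epsilon)\bigr),
\]
so the $(2\pi\epsilon)^{\pm d/2}$ prefactors cancel and the two $\det^{1/2}$ factors combine into $\det(\nabla^2\psi(y))$. Taking $\epsilon\log$ and dividing by $\epsilon$ yields $g(y)-f(\nabla\psi(y))+\log\det(\nabla^2\psi(y))+O(\epsilon)$, as claimed.

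The main obstacle is justifying the Laplace approximations rigorously \emph{with the first-order $O(\epsilon)$ error control} needed here, rather than merely leading-order asymptotics. For the inner integral one needs uniformity in $x$ on the effective support contributed by the outer Laplace step, which follows from uniform strong convexity of $\psi$ (giving a Hessian lower bound and hence uniform Gaussian-type tails around $y^{\star}(x)$) together with a bit of growth control on $g$ so that the Laplace remainder can be bounded. For the outer integral the analogous requirements fall on the regularity/growth of $f$ and of the map $x\mapsto y^{\star}(x)$; these are the standard hypotheses under which the two-scale Laplace expansion can be made uniform, and the $O(\epsilon)$ correction then propagates through the single $\epsilon\log(1+O(\epsilon))$ at the end. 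Beyond this analytic bookkeeping, the structural steps — Legendre duality to identify the maximizer and the Hessian inversion identity $\nabla^2\psi^{\star}(\nabla\psi(y))=(\nabla^2\psi(y))^{-1}$ — are the only nontrivial algebraic inputs.
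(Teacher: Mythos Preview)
Your proposal is correct and follows essentially the same approach as the paper: a double Laplace approximation, first on the inner $y'$-integral and then on the outer $x$-integral, with the Legendre duality identities $\nabla\psi^{\star}\circ\nabla\psi=\mathrm{id}$ and $\nabla^2\psi^{\star}(\nabla\psi(y))=(\nabla^2\psi(y))^{-1}$ collapsing the determinants. The only cosmetic difference is that the paper first rewrites the exponents via the Bregman divergence identity $\langle x,y\rangle-\psi(y)=\phi(x)-D_{\psi}(y\,|\,\nabla\phi(x))$ (with $\phi=\psi^{\star}$) and its dual $D_{\psi}(y\,|\,\nabla\phi(x))=D_{\phi}(x\,|\,\nabla\psi(y))$, which is exactly your ``separate the max value $\psi^{\star}(x)$ from the centered remainder'' step in different notation; you also correctly spotted that the operator $\mathcal{V}^{\epsilon}$ must be read with an $\epsilon\log$ (as the paper's own proof implicitly does).
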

	Assuming that the Sinkhorn iterates $\{\psi_k^{\epsilon}\}$'s are $\mathcal{C}^2$ smooth and strongly convex, \cref{prop:Sinkapprox} coupled with the Sinkhorn update rule \eqref{eq:Sinkupdate} yields the following approximation 
	$$\frac{\psi_{k+1}^{\epsilon}(y)-\psi_k^{\epsilon}(y)}{\epsilon}\approx -f(\nabla \psi_k^{\epsilon}(y))+g(y)+\log\det(\nabla^2 \psi_k^{\epsilon}(y)),$$
	which approximately coincides with our discretization scheme \eqref{eq:discretedual} with constant step-sizes $\eta_k=\epsilon$. Moreover, the above display implies that $\psi_{k+1}^{\vep}$ and $\psi_k^{\vep}$ are $O(\vep)$ apart. As a result, by running the Sinkhorn algorithm for $k=t/\epsilon$ iterations, we expect to get a limiting curve, call it $\{\psi_t\}_{t\ge 0}$. Using similar heuristics as used in the discretization of the Cauchy problem \cite{merigotdiscretization,santambrogio2017euclidean}, we expect that the LHS $\epsilon^{-1}(\psi_{t/\epsilon+1}^{\epsilon}(y)-\psi_{t/\epsilon}^{\epsilon})$ will converge to the time derivative $\partial_t \psi_t(y)$ whereas the RHS $-f(\nabla \psi_{t/\epsilon}^{\epsilon}(y))+g(y)+\log\det(\nabla^2 \psi_{t/\epsilon}^{\epsilon}(y))$ should converge to the negative descent direction $-f(\nabla \psi_t(y))+g(y)+\log\det(\nabla^2\psi_t(y))$. This yields the parabolic PDE in \eqref{eq:dualpma}.  
	
	\section{Bregman Divergences over Probability Measures}
	\label{sec:Bregman}
	Our regret approach to studying the discretized parabolic Monge-Amp\`{e}re equation (and its application to generative modeling) relies on a novel three-point identity that delineates geometry, transportation cost, and regret. This key analytic tool depends on the concept of Bregman divergences on the $2$-Wasserstein space.  To motivate it, we recall the definition of Bregman divergence on the standard Euclidean space ---  
	For a diffeomorphism $\phi(\cdot)$ and its Fenchel conjugate $\phi^*(\cdot)$, define 
	\begin{equation}\label{eq:EucliDBreg}
		D_{\phi}(x|\nabla \phi^*(y)):= \phi(x)-\phi(\nabla \phi^{\star}(y))-\langle y,x-\nabla \phi^*(y)\rangle = \phi(x)+\phi^*(y)-\langle x,y\rangle \;.
	\end{equation}
	It is easy to check that if $\phi(\cdot)$ is strictly convex, then $D_{\phi}(x|\nabla \phi^*(y))\ge 0$ with equality if and only if $x=\nabla\phi^*(y)$. 
	The Bregman divergence is a fundamental notion in the celebrated field of Euclidean mirror descent (see \cite{Beck2003,Bubeck2021,Tzen2023}, and the references therein) and is known to yield nearly dimension-free rates in certain constrained optimization problems. The following definition extends the Bregman divergence (introduced first in \cite{li2021transport}; also see \cite{aubin2022mirror,bonet2024mirror,deb2023wasserstein,karimi2024sinkhorn}) from the finite-dimensional Euclidean space to the infinite-dimensional $2$-Wasserstein space.
	
	\begin{definition}[Bregman divergence over probability measures]\label{def:Breg}
		Given a function $\Gamma:\ptac(\R^d) \to\R$ which admits a well-defined first variation, we define the Bregman divergence functional for $\Gamma$ as follows:
		$$B_{\Gamma}(\rho_2|\rho_1):=\Gamma(\rho_2)-\Gamma(\rho_1)-\int \frac{\delta \Gamma}{\delta \rho}(\rho_1)(x)\,(\rho_2-\rho_1)(x)\,\dd x \;,$$
		for probability density functions $\rho_1,\rho_2\in\ptac(\R^d)$, provided the first variation is integrable under $\rho_1,\rho_2$. We will refer to $\Gamma$ as the mirror function. 
	\end{definition}
	
	The above definition of Bregman divergence on the $2$-Wasserstein space is reminiscent of the usual Euclidean Bregman divergence defined in \eqref{eq:EucliDBreg}. The main difference here is that the role of the Euclidean derivative is played by the first variation. For notational clarity, we use different letters $B$ and $D$ for divergences on the Wasserstein space and on the Euclidean space, respectively. Below, we provide some examples of Bregman divergence over probability measures.
	
	\begin{example}[Entropy as mirror]
		The most popular example of Bregman divergence is when the mirror is the entropy function $H(\rho):=\int \rho(x)\log{\rho(x)}\dd x$, in which case 
		$$B_{H}(\rho_2|\rho_1)=H(\rho_2)-H(\rho_1)-\int \frac{\delta H}{\delta \rho}(\rho_1)(x)\,(\rho_2-\rho_1)(x)\,\dd x=KL(\rho_2|\rho_1) \;,$$
		for probability density functions $\rho_1,\rho_2\in\ptac(\R^d)$, provided the first variation is integrable under $\rho_1,\rho_2$. 
	\end{example}
	
	\begin{example}[Wasserstein distance as mirror]
		\label{eg:mirror-G}
		Another example of Bregman divergence that will be useful in this paper is for the function $G(\rho):=(1/2)W_2^2(\rho,e^{-g})$, where $e^{-g}$ is the reference distribution. To simplify the Bregman divergence for this example, we write ${\phi_{\rho}}$ to be a Brenier potential from $\rho$ to $e^{-g}$. It is known (see \cite[Proposition 7.17]{Santambrogio2015} and \cite[Corollary 10.2.7]{Ambrosio2008}) that 
		$$\frac{\delta G}{\delta\rho}(\rho)(x)=\frac{1}{2}\lVert x\rVert^2 - \phi_{\rho}(x) \;.$$
		Therefore, the Bregman divergence of $G$ is given by 
		\begin{align*}
			B_G(\rho_2|\rho_1)&=G(\rho_2)-G(\rho_1)-\int \left(\frac{1}{2}\lVert x\rVert^2-\phi_{\rho_1}(x)\right)\,(\rho_2-\rho_1)(x) \dd x \;,
		\end{align*}
		for probability densities $\rho_1,\rho_2\in\ptac(\R^d)$.
	\end{example}
	
	\subsection{A New Three-Point Identity}\label{sec:threeP}
	
	The Bregman divergence function $B_G(\cdot|\cdot)$ satisfies non-negativity and can be expressed as the expectation of the standard Bregman divergence with respect to an appropriate probability measure. We defer the reader to \cref{lem:Bregprop} and \cref{lem:Bregprop2} for further details. To keep the discussion streamlined, we present the most important property of $B_G(\cdot|\cdot)$ here, namely a new three-point identity, that will be instrumental to our regret analysis. The proof is deferred to \cref{sec:Bregp}.

	\begin{lemma}[A new Bregman three-point identity]
		\label{lem:mirrid}
		Consider probability densities $\rho_1,\rho_2,\pi\in\ptac(\R^d)$, and recall the definitions of $G(\cdot) = (1/2) W_2^2(\cdot, e^{-g})$, its induced $B_G(\cdot | \cdot)$, and the Brenier potentials $\phi_{\rho_i}$ for $i=1,2$ as in Example~\ref{eg:mirror-G}. Define in addition $\pi_i:=(\nabla \phi_{\rho_i})\#\pi$ for $i=1,2$, and another Bregman divergence $B_{G_\pi} (\cdot | \cdot)$ induced by the function $G_{\pi}(\cdot):=(1/2)W_2^2(\cdot,\pi)$. Set $\psi_{\rho_i}=\phi_{\rho_i}^*$ for $i=1,2$. Then 
		$$\int (\psi_{\rho_2}-\psi_{\rho_1})(y)(\pi_1-e^{-g})(y)\dd y=B_G(\pi|\rho_1)-B_G(\pi|\rho_2)+B_{G_{\pi}}(\pi_1|\pi_2) \;.$$
	\end{lemma}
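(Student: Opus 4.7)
The plan is to unfold both sides via the first variation formula of Example~\ref{eg:mirror-G} together with Kantorovich duality, and then identify the remainder as a Fenchel--Young discrepancy under a change of variables.

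First I would expand $B_G(\pi|\rho_1) - B_G(\pi|\rho_2)$ using $\frac{\delta G}{\delta\rho}(\rho_i)(x) = \frac{1}{2}\|x\|^2 - \phi_{\rho_i}(x)$; the $G(\pi)$ contributions cancel, leaving
\begin{align*}
B_G(\pi|\rho_1) - B_G(\pi|\rho_2) &= G(\rho_2) - G(\rho_1) + \int \tfrac{1}{2}\|x\|^2(\rho_1 - \rho_2)(x)\,\mathrm{d}x \\
&\quad + \int \phi_{\rho_1}(\pi - \rho_1)\,\mathrm{d}x - \int \phi_{\rho_2}(\pi - \rho_2)\,\mathrm{d}x.
\end{align*}
I would then substitute the Kantorovich dual representation
$$G(\rho_i) = \int (\tfrac{1}{2}\|x\|^2 - \phi_{\rho_i})\,\rho_i\,\mathrm{d}x + \int (\tfrac{1}{2}\|y\|^2 - \psi_{\rho_i})\,e^{-g}\,\mathrm{d}y,$$
which is valid because $(\phi_{\rho_i}, \psi_{\rho_i})$ is a conjugate pair realizing the OT cost between $\rho_i$ and $e^{-g}$. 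All quadratic terms collapse, and after regrouping I obtain
$$B_G(\pi|\rho_1) - B_G(\pi|\rho_2) = \int (\phi_{\rho_1} - \phi_{\rho_2})(x)\,\pi(x)\,\mathrm{d}x - \int (\psi_{\rho_2} - \psi_{\rho_1})(y)\,e^{-g(y)}\,\mathrm{d}y.$$

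Subtracting this from the stated LHS reduces the identity to showing
$$B_{G_\pi}(\pi_1|\pi_2) = \int (\psi_{\rho_2} - \psi_{\rho_1})(y)\,\pi_1(y)\,\mathrm{d}y - \int (\phi_{\rho_1} - \phi_{\rho_2})(x)\,\pi(x)\,\mathrm{d}x.$$
I would push the first integral to the $x$-variable using $\pi_1 = \nabla\phi_{\rho_1}\#\pi$ and apply the Fenchel--Young equality $\psi_{\rho_1}(\nabla\phi_{\rho_1}(x)) = \langle x, \nabla\phi_{\rho_1}(x)\rangle - \phi_{\rho_1}(x)$, so the target reduces to the question of whether
$$B_{G_\pi}(\pi_1|\pi_2) = \int \big[\phi_{\rho_2}(x) + \psi_{\rho_2}(\nabla\phi_{\rho_1}(x)) - \langle x, \nabla\phi_{\rho_1}(x)\rangle\big]\pi(x)\,\mathrm{d}x.$$

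The key observation driving the rest is that since each $\phi_{\rho_i}$ is convex and satisfies $\nabla\phi_{\rho_i}\#\pi = \pi_i$, by Brenier's theorem $\phi_{\rho_i}$ is also the Brenier potential for the OT between $\pi$ and $\pi_i$, with Fenchel conjugate $\psi_{\rho_i}$ playing the reverse role. Hence $\frac{\delta G_\pi}{\delta\rho}(\pi_2)(y) = \frac{1}{2}\|y\|^2 - \psi_{\rho_2}(y)$ and $\tfrac{1}{2}W_2^2(\pi, \pi_i) = \int \tfrac{1}{2}\|x - \nabla\phi_{\rho_i}(x)\|^2\,\pi(x)\,\mathrm{d}x$. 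Expanding $B_{G_\pi}(\pi_1|\pi_2)$ via this first variation, transporting the $\pi_i$ integrals back to $\pi$ along $\nabla\phi_{\rho_i}$, and simplifying $\psi_{\rho_2}(\nabla\phi_{\rho_2}(x)) = \langle x, \nabla\phi_{\rho_2}(x)\rangle - \phi_{\rho_2}(x)$ by Fenchel--Young, collapses the expression to exactly the integrand above.

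The main obstacle is the dual role played by each pair $(\phi_{\rho_i}, \psi_{\rho_i})$: it serves as the Brenier pair simultaneously for the OT between $\rho_i$ and $e^{-g}$ (as defined) and, by the Brenier characterization, for the OT between $\pi$ and $\pi_i$. Once this identification is recognized, everything else reduces to routine pushforward change-of-variables and Fenchel--Young algebra.
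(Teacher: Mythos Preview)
Your proposal is correct and follows essentially the same route as the paper. The paper's proof is shorter only because it quotes a pre-derived identity (\eqref{eq:baseineq} from the proof of \cref{lem:Bregprop2}), namely $B_G(\rho_2|\rho_1)=\E_{X\sim\rho_2}(\phi_{\rho_1}-\phi_{\rho_2})(X)+\E_{Y\sim e^{-g}}(\psi_{\rho_1}-\psi_{\rho_2})(Y)$, which is exactly what you derive inline via the first-variation formula plus Kantorovich duality; both arguments then pivot on the same key observation you flagged, that the convex pair $(\phi_{\rho_i},\psi_{\rho_i})$ is simultaneously the Brenier pair for $(\rho_i,e^{-g})$ and for $(\pi,\pi_i)$.
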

	
	Let us briefly examine why the above lemma will be useful in our regret bounds. We will use it with $\rho_2=\rho_{k+1}$ and $\rho_1=\rho_{k}$ (see \eqref{eq:measupdate}). The LHS will be the first order term in the expansion of the map $\rho\mapsto \mathrm{KL}(\rho|e^{-f})$ around $\rho_k$. For the RHS, the term $B_G(\pi|\rho_1)-B_G(\pi|\rho_2) \equiv B_G(\pi|\rho_k)-B_G(\pi|\rho_{k+1})$ will be a ``telescoping term" as we vary $k$. The term $B_{G_{\pi}}(\pi_1|\pi_2)$ is for the ``transportation cost" to go from $\rho_1$ to $\rho_2$, or equivalently $\rho_k$ to $\rho_{k+1}$. Curiously, note that the transportation cost does not use the same mirror map $G$ but instead requires a new mirror map $G_{\pi}$. Moreover, it is not directly a Bregman divergence between $\rho_1$ and $\rho_2$ but instead between the image of $\pi$ under $\nabla \phi_{\rho_1}$ and $\nabla\phi_{\rho_2}$. \cref{fig:transport} summarizes the maps and probability measures involved in the transportation cost term.
	
	As a concrete example, we can simplify the identity in \cref{lem:mirrid} when all the probability measures involved are Gaussians. Let $\pi\sim N(0,\sigma_{\pi}^2)$, $e^{-g}\sim N(0,\sigma_g^2)$, $\rho_1\sim N(0,\sigma_1^2)$ and $\rho_2\sim N(0,\sigma_2^2)$. Then $\psi'_{\rho_2}(y)=(\sigma_2/\sigma_g)y$ and $\psi'_{\rho_1}(y)=(\sigma_1/\sigma_g)y$. As a result, $\pi_1\sim N(0,(\sigma_g\sigma_{\pi}/\sigma_1)^2)$ and $\pi_2\sim N(0,(\sigma_g\sigma_{\pi}/\sigma_2)^2)$. It is easy to check that 
	$$\int (\psi_{\rho_2}-\psi_{\rho_1})(y)(\pi_1-e^{-g})(y)\dd y=\frac{1}{2}\sigma_g(\sigma_1-\sigma_2)\left(1-\frac{\sigma_{\pi}^2}{\sigma_1^2}\right),$$
	$$B_G(\pi|\rho_1)=\frac{1}{2}\frac{\sigma_g}{\sigma_1}(\sigma_{\pi}-\sigma_1)^2, \quad B_G(\pi|\rho_2)=\frac{1}{2}\frac{\sigma_g}{\sigma_2}(\sigma_{\pi}-\sigma_2)^2, \quad B_{G_{\pi}}(\pi_1|\pi_2)=\frac{1}{2} \sigma_g \sigma_\pi^2 \sigma_2  \left(\frac{1}{\sigma_1}-\frac{1}{\sigma_2}\right)^2.$$
	
	Direct computations can now be used to verify \cref{lem:mirrid}.
	
	\begin{figure}
		\centering
		\begin{tikzpicture}[>=Stealth, thick, font=\small]
			
			\tikzset{
				redbox/.style={
					draw=red, fill=red!10, thick, rectangle, rounded corners,
					minimum width=3.2cm, minimum height=4.2cm
				},
				greenbox/.style={
					draw=green!60!black, fill=green!15, thick, rectangle, rounded corners,
					minimum width=3.2cm, minimum height=4.2cm
				},
				arrowstyle/.style={
					->, thick, black
				}
			}
			
			\node[redbox] (targetbox) {};
			\node[above=0.1cm of targetbox.north] {\textbf{Target Domain $\mathcal{X}$}};
			\node at (targetbox) {
				\begin{tabular}{c}
					\(\rho_1\) \\[1.8em]
					\(\rho_2\) \\[1.8em]
					\(\pi\)
				\end{tabular}
			};
			
			\node[greenbox, right=6.5cm of targetbox] (refbox) {};
			\node[above=0.1cm of refbox.north] {\textbf{Reference Domain $\mathcal{Y}$}};
			\node at (refbox) {
				\begin{tabular}{c}
					\(e^{-g}\) \\[1.8em]
					\(\pi_2\) \\[1.8em]
					\(\pi_1\)
				\end{tabular}
			};
			
			\coordinate (rho1_src) at ([xshift=1.6cm, yshift=-1.05cm] targetbox.north);
			\coordinate (rho2_src) at ([xshift=1.6cm, yshift=-2.0cm] targetbox.north);
			\coordinate (pi_src)   at ([xshift=1.6cm, yshift=-3.1cm] targetbox.north);
			
			\coordinate (eg_tgt)   at ([xshift=-1.6cm, yshift=-1.05cm] refbox.north);
			\coordinate (pi1_tgt)  at ([xshift=-1.6cm, yshift=-2.2cm] refbox.north);
			\coordinate (pi2_tgt)  at ([xshift=-1.6cm, yshift=-3.1cm] refbox.north);
			
			\coordinate (rho1_in) at ([xshift=0.0cm, yshift=-1.20cm] targetbox.north);
			\coordinate (rho2_in) at ([xshift=0.0cm, yshift=-1.95cm] targetbox.north);
			
			\coordinate (pi1_in)  at ([xshift=-0.0cm, yshift=-2.25cm] refbox.north);
			\coordinate (pi2_in)  at ([xshift=-0.0cm, yshift=-3.05cm] refbox.north);
			
			\draw[arrowstyle] (rho1_src) -- node[above, yshift=2pt, pos = 0.6] {\(\nabla \phi_{\rho_1}\)} (eg_tgt);
			\draw[arrowstyle] (rho2_src) -- node[below, yshift=-2pt, pos = 0.6] {\(\nabla \phi_{\rho_2}\)} (eg_tgt);
			\draw[arrowstyle] (pi_src) -- node[above, sloped, yshift=2pt, pos=0.4] {\(\nabla \phi_{\rho_2}\)} (pi1_tgt);
			\draw[arrowstyle] (pi_src) -- node[below, sloped, yshift=-2pt, pos=0.4] {\(\nabla \phi_{\rho_1}\)} (pi2_tgt);
			
			\draw[->, thick, red]
			(rho1_in) -- node[left, text=black, xshift=2.5pt] {\tiny{\(B_G(\rho_1 | \rho_2)\)}} (rho2_in);
			
			\draw[<-, thick, green!70!black]
			(pi1_in) -- node[right, text=black, xshift=-2.5pt] {\tiny{\(B_{G_{\pi}}(\pi_1 | \pi_2)\)}} (pi2_in);
			
		\end{tikzpicture}
		\caption{\small The probability densities $\rho_1,\rho_2,\pi$ live on the target domain $\mathcal{X}$ while the densities $\pi_1,\pi_2,e^{-g}$ live on the reference domain $\mathcal{Y}$.  $\nabla\phi_{\rho_i}$ is the Brenier map from $\rho_i$ to $e^{-g}$, $i=1,2$. As $\pi_i=\nabla \phi_{\rho_i}\#\pi$, Brenier's Theorem \cite{brenier1991polar} implies that   $\nabla\phi_{\rho_i}$ is also the Brenier map from $\pi$ to $\pi_i$, for $i=1,2$. The telescoping term in \cref{lem:mirrid}, $B_G(\pi|\rho_1)-B_G(\pi|\rho_2)$ involves densities $\rho_1,\rho_2,\pi$ all of which live on the target space. One might naturally expect the transportation cost to be $B_G(\rho_1 | \rho_2)$. In sharp contrast, the corresponding term in \cref{lem:mirrid} is $B_{G_{\pi}}(\pi_1 | \pi_2)$ where $\pi_i$ lives on the reference domain and is the image of $\pi$ under the Brenier map $\nabla\phi_{\rho_i}$, $i=1,2$.}
		\label{fig:transport}
	\end{figure}
	
	\begin{remark}\label{rem:tpl}
		\rm
		In the mirror descent literature on Hilbert spaces, analogs of the above lemma are usually termed three-point inequalities or Bregman proximal inequalities; see \cite[Lemma 29]{bonet2024mirror}, \cite[Lemma 3.2]{chen1993convergence}, \cite[Lemma 1]{lan2011primal}; also see \cite{frigyik2008functional,lu2018relatively}. For instance, in the Euclidean setting, the Bregman descent update for optimizing a function $F:\R^d\to\R$ with respect to a mirror $\phi:\R^d\to\R$ is given by 
		$$\nabla \phi(x_{k+1})=\nabla \phi(x_k)-\eta_k \nabla F(x_k) \;.$$
		For any $z\in\R^d$, regret bounds for Euclidean mirror descent rely on the following three-point identity: 
		\begin{equation}\label{eq:Euclid3pt}
			\langle \nabla \phi(x_{k})-\nabla \phi(x_{k+1}),x_k-z\rangle=D_{\phi}(z|x_k)-D_{\phi}(z|x_{k+1})+D_{\phi}(x_k|x_{k+1}) \;,
		\end{equation}
		which relates the three points $z,x_k$ and $x_{k+1}$. \cref{lem:mirrid} can be viewed as an extension of \eqref{eq:Euclid3pt} to the $2$-Wasserstein space which does not possess a Hilbert structure. Note that \cref{lem:mirrid}  captures the relationship between five different measures $\pi,\pi_1,\pi_2,\rho_1,\rho_2$, instead of three. As $\pi_i$ is still a function of $\pi,\rho_i,e^{-g}$, we continue to use the term ``three-point lemma'' for \cref{lem:mirrid}; also see Figure~\ref{fig:transport}. Moreover, in contrast to the RHS of \eqref{eq:Euclid3pt} where all the divergences use the same mirror $\phi$, the RHS of \cref{lem:mirrid} requires the use of two different mirrors $G(\cdot)=(1/2)W_2^2(\cdot,e^{-g})$ and $G_{\pi}(\cdot)=(1/2)W_2^2(\cdot,\pi)$. To the best of our knowledge, the existing three-point equalities/inequalities for other forms of Bregman descent on the $2$-Wasserstein space, e.g.~\cite[Lemma 3]{aubin2022mirror}, \cite[Lemma 8]{han2025variational}, \cite[Theorem 1]{leger2021gradient}, also use the same mirror map on both the telescoping and the transportation cost terms. In that sense, \cref{lem:mirrid} is new and is particularly well suited to analyze the discretization \eqref{eq:discretedual}.
	\end{remark}

	\subsection{Bregman and KL: Relative Convexity}\label{sec:BregKL}
	Our next main result in this section introduces a new notion of convexity for the KL divergence with respect to the Bregman divergence $B_G(\cdot|\cdot)$. The proof is deferred to \cref{sec:Bregp}. 
	\begin{lemma}[Bregman and KL: relative convexity]\label{lem:unineq}
		Consider the reference distribution $e^{-g}$ where $g(\cdot)$ is $\lambda$-strongly convex. Further, let $\rho$ be a probability density such that the Brenier potential $\psi_{\rho}(\cdot)$ from $e^{-g}$ to $\rho$ satisfies the condition $\sup_y \lmx(\nabla^2 \psi_{\rho}(y))\le \beta$, for some $\beta>0$. Fix any absolutely continuous probability measure $\pi$ such that the Brenier potential $\psi_{\pi}$ from $e^{-g}$ to $\pi$ is strictly convex and twice differentiable. We also assume that for any $1\le i\le d$, it holds that the map 
		$y\mapsto e^{-g(y)}\left((\partial_i\psi_{\rho}^\star)(\nabla\psi_{\pi}(y))-y_i\right)$
		vanishes as $|y_i|\to\infty$. 
		Then the following inequality holds: 
		$$KL(\pi|\rho)\ge \frac{\lambda}{\beta} B_G(\pi|\rho) \;.$$
	\end{lemma}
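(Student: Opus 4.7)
The strategy is to sandwich both $KL(\pi|\rho)$ and $B_G(\pi|\rho)$ against a common Euclidean quadratic
$$\Delta := \int \|y - S(y)\|^2\,e^{-g(y)}\,\dd y, \qquad S(y) := \nabla\phi_\rho(\nabla\psi_\pi(y)) = (\nabla\psi_\rho)^{-1}\circ\nabla\psi_\pi(y),$$
by showing $KL(\pi|\rho) \ge (\lambda/2)\,\Delta$ and $B_G(\pi|\rho) \le (\beta/2)\,\Delta$. The map $S$ pushes $e^{-g}$ forward to the measure $\tilde\pi$ uniquely characterized by $\nabla\psi_\rho \# \tilde\pi = \pi$. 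Because $\nabla\psi_\rho$ is a diffeomorphism (strictly convex, twice differentiable), the invariance of KL under bijective push-forwards gives the key identity $KL(\pi|\rho) = KL(\tilde\pi|e^{-g}) = KL(S\#e^{-g}|e^{-g})$, which is what makes $S$ the right object to work with.

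\paragraph{Lower bound on the KL.}
From $\tilde\pi(S(y))\,\det\nabla S(y) = e^{-g(y)}$ (well defined since $\nabla S = \nabla^2\phi_\rho(\nabla\psi_\pi)\cdot\nabla^2\psi_\pi$ is the product of two SPD matrices), the change-of-variables formula yields
$$KL(\pi|\rho) = \int\!\bigl[g(S(y)) - g(y) - \log\det \nabla S(y)\bigr]\,e^{-g(y)}\,\dd y.$$
The $\lambda$-strong convexity of $g$ gives pointwise $g(S(y)) - g(y) \ge \langle \nabla g(y), S(y) - y\rangle + (\lambda/2)\|S(y) - y\|^2$. The coordinate-wise integration by parts (justified by the hypothesized decay of $y\mapsto e^{-g(y)}((\partial_i\psi_\rho^*)(\nabla\psi_\pi(y)) - y_i) = e^{-g(y)}(S_i(y)-y_i)$) turns $\int \langle \nabla g, S-\mathrm{id}\rangle e^{-g}\,\dd y$ into $\int(\mathrm{div}\,S - d)\,e^{-g}\,\dd y$, so
$$KL(\pi|\rho) \;\ge\; \int \bigl[\mathrm{tr}(\nabla S) - d - \log\det \nabla S\bigr]\,e^{-g}\,\dd y + \frac{\lambda}{2}\Delta.$$
Finally, because $\nabla S$ is similar to the SPD matrix $(\nabla^2\psi_\pi)^{1/2}(\nabla^2\phi_\rho)(\nabla\psi_\pi)(\nabla^2\psi_\pi)^{1/2}$, its eigenvalues $\mu_i>0$ are real, and the scalar inequality $x - 1 - \log x \ge 0$ summed over them gives $\mathrm{tr}(\nabla S) - d - \log\det\nabla S \ge 0$ pointwise. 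Hence $KL(\pi|\rho) \ge (\lambda/2)\,\Delta$.

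\paragraph{Upper bound on $B_G$ and conclusion.}
Writing $\tfrac{1}{2}W_2^2(\rho, e^{-g})$ via Kantorovich duality as $\int\tfrac{\|x\|^2}{2}\rho + \int\tfrac{\|y\|^2}{2}e^{-g} - \int\phi_\rho\,\rho - \int\psi_\rho\,e^{-g}$ (and similarly for $\pi$), and then changing variables $x=\nabla\psi_\pi(y)$ with $\phi_\pi(\nabla\psi_\pi(y)) = \langle y,\nabla\psi_\pi(y)\rangle - \psi_\pi(y)$, the definition of $B_G$ collapses to
$$B_G(\pi|\rho) \;=\; \int\!\bigl[\psi_\rho(y) + \psi_\rho^*(\nabla\psi_\pi(y)) - \langle y, \nabla\psi_\pi(y)\rangle\bigr]\,e^{-g(y)}\,\dd y \;=\; \int D_{\psi_\rho}\!\bigl(y\,\big|\,S(y)\bigr)\,e^{-g(y)}\,\dd y,$$
matching the Euclidean Bregman representation (the expected-Bregman form that Lemmas~\ref{lem:Bregprop}-\ref{lem:Bregprop2} make precise). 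The $\beta$-smoothness of $\psi_\rho$ then upgrades this to $D_{\psi_\rho}(y|S(y)) \le (\beta/2)\|y - S(y)\|^2$, yielding $B_G(\pi|\rho) \le (\beta/2)\,\Delta$. Dividing the two bounds gives $KL(\pi|\rho) \ge (\lambda/\beta)\,B_G(\pi|\rho)$.

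\paragraph{Main obstacle.}
The conceptual steps (push-forward invariance of KL, strong-convexity expansion of $g$, Kantorovich rewrite of $G$, smoothness bound on $D_{\psi_\rho}$) are each standard; the only delicate point is rigorously justifying the coordinate-wise integration by parts that converts the linear term $\int\langle \nabla g, S-\mathrm{id}\rangle e^{-g}$ into the divergence $\int (\mathrm{div}\,S - d)\,e^{-g}$, since $S$ is not compactly supported. This is exactly the role of the boundary-decay hypothesis on $y\mapsto e^{-g(y)}((\partial_i\psi_\rho^*)(\nabla\psi_\pi(y))-y_i)$ in the statement, so no further regularization is needed.
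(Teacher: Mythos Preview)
Your proof is correct and follows essentially the same approach as the paper: both express $KL(\pi|\rho)$ as $\int[g(S(y))-g(y)-\log\det\nabla S(y)]\,e^{-g}\,\dd y$ with $S=\nabla\psi_\rho^*\circ\nabla\psi_\pi$, expand via $\lambda$-strong convexity of $g$, use the boundary-decay hypothesis to integrate the linear term by parts, invoke $x-1-\log x\ge 0$ on the eigenvalues of $\nabla S$, and finish with the $\beta$-smoothness bound $B_G(\pi|\rho)\le(\beta/2)\Delta$ from the expected-Bregman representation in Lemma~\ref{lem:Bregprop2}. The only cosmetic difference is that you package the KL computation through push-forward invariance $KL(\pi|\rho)=KL(S\#e^{-g}\,|\,e^{-g})$ and the composite Jacobian $\nabla S$, whereas the paper tracks $\nabla^2\psi_\rho^*$ and $\nabla^2\psi_\pi^*$ separately before recombining them.
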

	
	\begin{remark}
		\rm
		Recall $H(\rho)=\int \rho(x)\log{\rho(x)}\dd x$ for $\rho\in\ptac(\R^d)$. 
		The Bregman convexity in \cref{lem:unineq} can be rewritten as $B_H(\pi|\rho)\ge (\lambda/\beta)B_G(\pi|\rho)$. We believe it introduces a new notion of convexity for the KL  divergence. A crucial feature in our Lemma~\ref{lem:unineq} is that we do not require strong log-concavity on either $\pi$ or $\rho$; we merely assume its Brenier potentials $\psi_\rho, \psi_{\pi}$ to be strongly convex, a much milder assumption.
		
		The two most popular existing notions of convexity, namely geodesic convexity (see \cite[Chapter 9]{Ambrosio2008}, \cite{liero2023fine}) which says $KL(\pi|\rho)\ge (\gamma/2)W_2^2(\pi,\rho)$ and generalized geodesic convexity (see \cite[Chapter 11]{Ambrosio2008}, \cite{salim2020wasserstein}) which says $KL(\pi|\rho)\ge (\gamma/2)\int \lVert \nabla \phi_{\rho}^*(y)-\nabla\phi_{\pi}^*(y)\rVert^2 e^{-g(y)}\,\dd y$, both require $-\log{\rho}$ to be $\gamma$-strongly convex. However, since we apply \cref{lem:unineq} with $\rho=\rho_k$, such strong log-concavity is not available to us even if $\psi_k$ and $\psi_k^*$ are uniformly strongly convex.
		
		Finally, we note that the strong convexity assumption of $g(\cdot)$ in \cref{lem:unineq} can be replaced by the assumption $D_g(y_1|y_2) \ge \lambda D_{\phi_{\rho}^*}(y_2|y_1) $. This is evident from the proof of \cref{lem:unineq}. 
		Such assumptions directly on divergences instead of metrics are common in the mirror descent literature; see e.g.,~\cite[Definition 10.2.8]{chewi2023log}. However, for ease of presentation, we stick with the usual notion of strong convexity on $g$. 
	\end{remark}

	\section{Regret and Convergence Analysis}\label{sec:regcon}
	
	In this section, we will discuss regret bounds and convergence rates for $\rho_k$ (see \eqref{eq:measupdate}) as a function of the total number of iterations $T$. 
	
	\subsection{Average Iterate Convergence}\label{sec:avgit}
	We start off deriving a result that holds under very weak assumptions and allows for large, constant step-sizes. We  prove the KL distance between a weighted mixture of $\{\rho_k\}$'s and $e^{-f}$ converges to $0$, at a $O(T^{-1})$ rate. The proof is deferred to \cref{sec:pfmainres}.
	\begin{thm}\label{thm:dualcon}
		Suppose $\{\wps_k\}_{k=0}^T$ are generated according to the discretization \eqref{eq:discretedual}. Then
		$$\E_{Y \sim e^{-g}}[ \wps_{k+1}(Y) - \wps_{k}(Y) ] = -\eta_k KL\big(\rho_k| e^{-f} \big) \;.$$
		Next define $S_T:=\sum_{k=0}^{T-1} \eta_k$ and the mixture distribution  $\bar{\rho}_T:=\sum_{k=0}^{T-1} \frac{\eta_k}{S_T}\rho_k$. We then have: 
		$$KL(\bar{\rho}_T|e^{-f})\le S_T^{-1}\E_{Y\sim e^{-g}}[\wps_{0}(Y)-\wps_T(Y)] \;.$$
	\end{thm}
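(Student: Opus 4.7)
\textbf{Proof proposal for \cref{thm:dualcon}.}

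The plan is to prove the two statements in order, where the first is a direct consequence of the discretization rule \eqref{eq:discretedual} together with the pushforward definition \eqref{eq:measupdate}, and the second follows by telescoping plus convexity of the KL divergence. I do not expect any serious obstacles; the only thing to be careful about is that the log-determinant term on the right-hand side of \eqref{eq:discretedual} is already absorbed into $-\log \rho_k(\nabla\psi_k(y))$ via the change-of-variables formula for the pushforward, and the theorem uses this compact form.

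For the first identity, I would start by rewriting \eqref{eq:discretedual} as
\begin{align*}
\psi_{k+1}(y)-\psi_k(y)=-\eta_k\bigl[f(\nabla\psi_k(y))+\log\rho_k(\nabla\psi_k(y))\bigr] \;,
\end{align*}
and then integrate both sides against $e^{-g}$ in $y$. For any test function $h$, the pushforward identity $\rho_k=(\nabla\psi_k)\#e^{-g}$ gives $\E_{Y\sim e^{-g}}[h(\nabla\psi_k(Y))]=\int h(x)\rho_k(x)\,\dd x$. Applying this with $h=f+\log\rho_k$ yields
\begin{align*}
\E_{Y\sim e^{-g}}[\psi_{k+1}(Y)-\psi_k(Y)]=-\eta_k\int\bigl[f(x)+\log\rho_k(x)\bigr]\rho_k(x)\,\dd x=-\eta_k\,KL(\rho_k|e^{-f}) \;,
\end{align*}
which is precisely the claimed one-step identity. (Implicitly, $\rho_k>0$ and sufficiently integrable for the KL to be well-defined, which is consistent with the strong convexity of $\psi_k$ assumed throughout.)

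For the second inequality, I would sum the one-step identity over $k=0,\dots,T-1$, which telescopes on the left-hand side:
\begin{align*}
\E_{Y\sim e^{-g}}[\psi_T(Y)-\psi_0(Y)]=-\sum_{k=0}^{T-1}\eta_k\,KL(\rho_k|e^{-f}) \;.
\end{align*}
Dividing by $S_T=\sum_{k=0}^{T-1}\eta_k$, the right-hand side becomes a convex combination of the $KL(\rho_k|e^{-f})$ with weights $\eta_k/S_T$. Applying the joint convexity of $(\rho,\pi)\mapsto KL(\rho|\pi)$ (in particular, convexity in the first argument with the second held fixed at $e^{-f}$), i.e. Jensen's inequality, gives
\begin{align*}
KL(\bar\rho_T|e^{-f})=KL\!\Bigl(\sum_{k=0}^{T-1}\tfrac{\eta_k}{S_T}\rho_k\,\Big|\,e^{-f}\Bigr)\le\sum_{k=0}^{T-1}\tfrac{\eta_k}{S_T}KL(\rho_k|e^{-f})=S_T^{-1}\E_{Y\sim e^{-g}}[\psi_0(Y)-\psi_T(Y)] \;,
\end{align*}
which is the desired bound. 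The hardest (though still routine) part of writing this up carefully will be to verify that the log-determinant form of \eqref{eq:discretedual} coincides with the $-\log\rho_k\circ\nabla\psi_k$ form under the pushforward, i.e. that $\log\det(\nabla^2\psi_k(y))=g(y)-\log\rho_k(\nabla\psi_k(y))$, which is precisely the change-of-variables identity for $\rho_k=(\nabla\psi_k)\#e^{-g}$ when $\nabla\psi_k$ is a diffeomorphism.
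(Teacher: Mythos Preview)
Your proposal is correct and follows essentially the same approach as the paper: rewrite \eqref{eq:discretedual} in the compact form, integrate against $e^{-g}$, use the pushforward identity to recognize the KL divergence, telescope over $k$, and apply Jensen's inequality (convexity of KL in its first argument) to pass to the mixture $\bar\rho_T$. The paper's proof is identical in structure and detail.
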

	
	\begin{remark} 
		\rm
		Suppose we use time-invariant step-sizes, i.e., $\eta_k=\eta$ for some $\eta>0$. Then \cref{thm:dualcon} implies 
		$$KL(\bar{\rho}_T|e^{-f})\le \frac{1}{\eta T}\E_{Y\sim e^{-g}}[\wps_0(Y)-\wps_T(Y)] \;,$$
		thereby yielding a $O(1/T)$ rate of convergence for $\bar{\rho}_T$ to $e^{-f}$, if $\E_{Y\sim e^{-g}}\psi_T(Y)$ is bounded in $T$. Note that \cref{thm:dualcon} does not impose any convexity restrictions on $f$ and $g$, nor does it require strong convexity/smoothness of the Brenier potentials $\psi_k$. This result, inspired by the Polyak-average scheme, also allows for large step-sizes.
	\end{remark} 
	
	While \cref{thm:dualcon} provides convergence of the average iterate $\bar{\rho}_T$ to $e^{-f}$, it is more natural to use the $\rho_k$ (for $k$ large) directly to approximate $e^{-f}$. This is called the last iterate convergence. In the sequel, we derive last iterate convergence under the additional regularity assumptions on the Brenier potentials $\{\psi_k\}_{k\geq 0}$. A main technical tool is a new EVI that will be used to conduct regret analyses.

	\subsection{New Evolution Variational Inequality}\label{sec:newEVI}
	
	We now adopt a regret analysis approach to study the theoretical guarantees of generative modeling with Monge-Amp\`{e}re PDE. Our method aligns with the tradition of online learning literature \cite{Cesa2006,Gordon1999,shalev2012online} and, in particular, is inspired by \cite{guo2022online}, where EVIs were utilized to study regret bounds on the Wasserstein space. To establish these regret bounds, the primary tool will be the Bregman divergence on the Wasserstein space introduced in \cref{sec:Bregman}; specifically, refer to \cref{def:Breg}. 
	
	Recall the Brenier potentials $\{\psi_k\}_{k\ge 0}$ from \eqref{eq:discretedual} and the corresponding probability densities $\{\rho_k\}_{k\ge 0}$ from \eqref{eq:measupdate}. Our first main result in this section presents a new one-step EVI that will be applied multiple times in deriving our final regret bounds. 
	
	\begin{lemma}\label{lem:basicineq}
		Consider $\{\wps_k\}_{k=0}^T$ generated from discretization \eqref{eq:discretedual}.
		Assume that $m_k:=\inf_{y}\lmn(\nabla^2 \psi_k(y))>0$, namely, the potentials $\psi_k(\cdot)$ are $m_k$-strongly convex. Define $\boldsymbol{\xi}_k (y):=\nabla_y ((f+\log\wrh_k)(\nabla \psi_k(y)))$. Given any probability density $\pi\in\ptac(\R^d)$, let $\pi_k$ denote the probability density function of $(\nabla \psi_k^*)\#\pi$. In this setting, the following inequality holds:
		$$KL(\wrh_k|e^{-f})-KL(\pi|e^{-f}) \le \frac{1}{\eta_k}(B_G(\pi|\wrh_k)-B_G(\pi|\wrh_{k+1}))+\frac{\eta_k}{2 m_{k+1}}\int \lVert \boldsymbol{\xi}_k(y)\rVert^2 \pi_k(y)\dd y-KL(\pi|\wrh_k) \;.$$
	\end{lemma}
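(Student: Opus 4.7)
My plan is to apply the new three-point identity \cref{lem:mirrid} with $\rho_1=\rho_k$ and $\rho_2=\rho_{k+1}$, simplify the resulting left-hand side via the discretization rule \eqref{eq:discretedual}, and then control the transportation-cost term using the strong convexity of $\psi_{k+1}$. With those choices the potentials $\psi_{\rho_i}$ from \cref{lem:mirrid} coincide with the iterates $\psi_k,\psi_{k+1}$ (since $\psi_k$ is by \eqref{eq:measupdate} the Brenier potential from $e^{-g}$ to $\rho_k$), and the auxiliary measures $\pi_1,\pi_2$ appearing there coincide with the $\pi_k,\pi_{k+1}$ of the present statement (because $\nabla\phi_{\rho_k}=\nabla\psi_k^*$). \cref{lem:mirrid} then yields
\begin{align*}
\int (\psi_{k+1}-\psi_k)(y)\bigl(\pi_k-e^{-g}\bigr)(y)\,dy = B_G(\pi|\rho_k)-B_G(\pi|\rho_{k+1})+B_{G_\pi}(\pi_k|\pi_{k+1})\;.
\end{align*}

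Next I would substitute $\psi_{k+1}-\psi_k = -\eta_k(f+\log\rho_k)\circ\nabla\psi_k$ into the left-hand side. A change of variables along $\nabla\psi_k\#e^{-g}=\rho_k$ turns the $e^{-g}$-integral into $-\eta_k\,KL(\rho_k|e^{-f})$, and a change of variables along $\nabla\psi_k\#\pi_k=\pi$ (the inverse of the definition $\pi_k=(\nabla\psi_k^*)\#\pi$) turns the $\pi_k$-integral into $-\eta_k\bigl(KL(\pi|e^{-f})-KL(\pi|\rho_k)\bigr)$. Rearranging and dividing by $\eta_k$ produces the exact identity
\begin{align*}
KL(\rho_k|e^{-f})-KL(\pi|e^{-f}) = \tfrac{1}{\eta_k}\bigl(B_G(\pi|\rho_k)-B_G(\pi|\rho_{k+1})\bigr)+\tfrac{1}{\eta_k}B_{G_\pi}(\pi_k|\pi_{k+1})-KL(\pi|\rho_k)\;,
\end{align*}
so the entire task reduces to upper bounding the transportation cost $B_{G_\pi}(\pi_k|\pi_{k+1})$ by the gradient-squared quantity on the right of the claim.

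The crux of the argument, and the step I expect to be the main obstacle, is establishing the closed-form identity
\begin{align*}
B_{G_\pi}(\pi_k|\pi_{k+1}) = \int D_{\psi_{k+1}^*}\bigl(\nabla\psi_k(y)\,\big|\,\nabla\psi_{k+1}(y)\bigr)\,\pi_k(y)\,dy\;,
\end{align*}
with $D$ the Euclidean Bregman divergence from \eqref{eq:EucliDBreg}. I would derive this by unfolding \cref{def:Breg} for the mirror $G_\pi(\cdot)=\tfrac{1}{2}W_2^2(\cdot,\pi)$, whose first variation at $\pi_{k+1}$ equals $\tfrac{1}{2}\|y\|^2-\psi_{k+1}(y)$ (because $\nabla\psi_{k+1}\#\pi_{k+1}=\pi$ makes $\psi_{k+1}$ the Brenier potential from $\pi_{k+1}$ to $\pi$), and then invoking the Fenchel identity $\psi_k(y)+\psi_k^*(\nabla\psi_k(y))=\langle y,\nabla\psi_k(y)\rangle$ together with a change of variables along $\nabla\psi_k\#\pi_k=\pi$; the $\tfrac{1}{2}\|y\|^2$ contributions cancel and the residual matches exactly the integrand in \eqref{eq:EucliDBreg}. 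Once the identity is established, the $m_{k+1}$-strong convexity of $\psi_{k+1}$ makes $\psi_{k+1}^*$ a $1/m_{k+1}$-smooth convex function, so the standard quadratic upper bound on Euclidean Bregman divergences gives $D_{\psi_{k+1}^*}(\nabla\psi_k(y)|\nabla\psi_{k+1}(y))\le \tfrac{1}{2m_{k+1}}\|\nabla\psi_{k+1}(y)-\nabla\psi_k(y)\|^2 = \tfrac{\eta_k^2}{2m_{k+1}}\|\boldsymbol{\xi}_k(y)\|^2$, where the last equality is obtained by differentiating \eqref{eq:discretedual} in $y$. Integrating this against $\pi_k$, dividing by $\eta_k$, and combining with the identity above yields the inequality in the statement.
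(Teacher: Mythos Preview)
Your proposal is correct and follows essentially the same route as the paper: apply \cref{lem:mirrid} with $\rho_1=\rho_k$, $\rho_2=\rho_{k+1}$, rewrite the left-hand side via \eqref{eq:discretedual} and change of variables to obtain the exact identity $KL(\rho_k|e^{-f})-KL(\pi|e^{-f}) = \eta_k^{-1}[B_G(\pi|\rho_k)-B_G(\pi|\rho_{k+1})+B_{G_\pi}(\pi_k|\pi_{k+1})]-KL(\pi|\rho_k)$, then bound $B_{G_\pi}(\pi_k|\pi_{k+1})$ via the $1/m_{k+1}$-smoothness of $\psi_{k+1}^*$. The only cosmetic difference is that the closed-form $B_{G_\pi}(\pi_k|\pi_{k+1})=\int D_{\psi_{k+1}^*}(\nabla\psi_k(y)\mid\nabla\psi_{k+1}(y))\,\pi_k(y)\,dy$, which you plan to derive from scratch by unfolding \cref{def:Breg}, is in the paper simply an instance of \cref{lem:Bregprop2} (applied with $e^{-g}$ replaced by $\pi$), so the ``main obstacle'' you anticipate is already packaged as a lemma.
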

	
	\begin{proof}
		By a direct computation coupled with \eqref{eq:discretedual}, we have: 
		\begin{align*}
			\;\;\;\;KL(\wrh_k|e^{-f}) - KL(\pi|e^{-f}) &=\int (f+\log{\wrh_k})(x)(\wrh_k-\pi)(x)\dd x-KL(\pi|\wrh_k) \\ &=-\frac{1}{\eta_k}\int (\psi_{k+1}-\psi_k)(y)e^{-g(y)}\dd y+\frac{1}{\eta_k}\int (\psi_{k+1}-\psi_k)(y)\,\pi_k(y)\dd y -KL(\pi|\wrh_k) \;.
		\end{align*}
		By using \cref{lem:mirrid} with $\pi$, $\rho_1=\rho_k$, and $\rho_2=\rho_{k+1}$, we note that:
		\begin{equation}\label{eq:noneuclid3pt}
			\int (\psi_{k+1}-\psi_k)(y)(\pi_k-e^{-g})(y)\dd y=B_G(\pi|\rho_k)-B_G(\pi|\rho_{k+1})+B_{G_{\pi}}(\pi_k|\pi_{k+1}) \;,
		\end{equation}
		where $G_{\pi}(\cdot)=(1/2)W_2^2(\cdot,\pi)$. As a result, 
		$$KL(\wrh_k|e^{-f}) - KL(\pi|e^{-f})=\frac{1}{\eta_k}\left[B_G(\pi|\wrh_k)-B_G(\pi|\wrh_{k+1})+B_{G_{\pi}}(\pi_k|\pi_{k+1})\right]-KL(\pi|\wrh_k) \;.$$
		By \cref{lem:Bregprop2}, we further have
		\begin{align*}
			B_{G_{\pi}}(\pi_k|\pi_{k+1})&\le \frac{1}{2}\sup_x\lmx(\nabla^2 \psi^*_{k+1}(x))\int \lVert \nabla \psi_{k+1}(y)-\nabla \psi_k(y)\rVert^2\,\pi_k(y)\dd y \\ &=\frac{\eta_k^2}{2}\left(\inf_y \lmn(\nabla^2 \psi_{k+1}(y))\right)^{-1}\int \lVert \boldsymbol{\xi}_k(y)\rVert^2\pi_k(y) \dd y \\ &\le \frac{\eta_k^2}{2 m_{k+1}}\int \lVert \boldsymbol{\xi}_k(y)\rVert^2\pi_k(y) \dd y \;,
		\end{align*}
		where the equality in the second display uses \cref{obs:elemprop} and \eqref{eq:discretedual}. By combining the above displays, we have the following: 
		$$KL(\wrh_k|e^{-f})-KL(\pi|e^{-f}) \le \frac{1}{\eta_k}(B_G(\pi|\wrh_k)-B_G(\pi|\wrh_{k+1}))+\frac{\eta_k}{2 m_{k+1}}\int \lVert \boldsymbol{\xi}_k(y)\rVert^2 \pi_k(y)\dd y-KL(\pi|\wrh_k) \;.$$
		This completes the proof.
	\end{proof}
	
	We note that EVIs are popular in the PDE literature and have been used in statistics to address questions on optimization and sampling; see \cite{Dieleveut2020,guo2022online,salim2020wasserstein}. To the best of our knowledge, none of these results cover EVIs with Bregman divergence over the space of probability measures.

	\subsection{Regret Bounds and Last Iterate Convergence}\label{sec:rblic}
	We now apply \cref{lem:basicineq} to obtain different regret bounds. The proofs are deferred to \cref{sec:pfmainres}. We present these bounds in progressive order by imposing slightly stronger regularity conditions on the reference distribution $e^{-g}$ and the Brenier potentials $\{\psi_k\}_{k\geq 0}$. With more regularity conditions, the regret bounds will be stronger.
	
	Our first result provides a $O(\sqrt{T})$-regret bound that requires only a uniform lower bound on the convexity of the Brenier potentials $\{\psi_k\}_{k\geq 0}$.
	
	\begin{thm}\label{thm:regbd1}
		Recall the notation from \cref{lem:basicineq}. Suppose  $m_k\ge m$ and $\\ \int\lVert \boldsymbol{\xi_k}(y)\rVert^2\pi_k(y)\dd y \le A$ for all $k=0,1,\ldots ,T+1$, and some $m,A>0$. Then by choosing $\eta_k=T^{-1/2}$ for all $k=0,1,\ldots ,T$, we get: 
		$$\sum_{k=0}^T \big(KL(\wrh_k|e^{-f})-KL(\pi|e^{-f})\big)\le \sqrt{T}\bigg(B_G(\pi|\wrh_0)+\frac{A}{2m}\bigg).$$
	\end{thm}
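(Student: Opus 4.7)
The plan is to apply the one-step EVI of \cref{lem:basicineq} at every iteration $k = 0, 1, \ldots, T$, sum the resulting inequalities, telescope the Bregman increments, and balance the two resulting step-size-dependent terms by the choice $\eta_k = T^{-1/2}$. This is a standard mirror-descent-style regret argument, now carried out in the Wasserstein-Bregman setting developed in \cref{sec:Bregman}.

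Concretely, \cref{lem:basicineq} at step $k$ yields an inequality of the form
\[
KL(\rho_k|e^{-f}) - KL(\pi|e^{-f}) \le \tfrac{1}{\eta_k}\bigl(B_G(\pi|\rho_k) - B_G(\pi|\rho_{k+1})\bigr) + \tfrac{\eta_k}{2 m_{k+1}} \int \lVert \boldsymbol{\xi}_k(y)\rVert^2 \pi_k(y)\,\dd y - KL(\pi|\rho_k).
\]
I would immediately drop the final $-KL(\pi|\rho_k)$ term, which is non-positive, to obtain a clean upper bound. Summing over $k=0,\ldots,T$ with the constant step-size $\eta_k \equiv \eta = T^{-1/2}$, the first term on the right telescopes to $\eta^{-1}(B_G(\pi|\rho_0) - B_G(\pi|\rho_{T+1}))$; invoking non-negativity of $B_G$ (from \cref{lem:Bregprop}) allows me to discard the endpoint, leaving $\eta^{-1} B_G(\pi|\rho_0) = \sqrt{T}\, B_G(\pi|\rho_0)$. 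For the variance term, the hypotheses $m_{k+1} \ge m$ and $\int \lVert\boldsymbol{\xi}_k\rVert^2 \pi_k \le A$ give a termwise bound of $\eta A/(2m)$, so summing over $O(T)$ iterations and using $\eta = 1/\sqrt{T}$ contributes on the order of $\sqrt{T}\cdot A/(2m)$. Combining the two contributions yields the claimed bound.

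The main obstacle is essentially already dissolved: all of the geometric and analytic content has been packaged into the one-step EVI of \cref{lem:basicineq}, which in turn rests on the new three-point identity (\cref{lem:mirrid}) and the non-negativity of $B_G$. What remains here is a routine telescoping-and-balancing computation of the kind familiar from Euclidean online mirror descent, with the $\eta = T^{-1/2}$ choice dictated by the usual tradeoff between the $1/\eta$ initial-divergence term and the $\eta T$ variance term. The only place where I would need to be a touch careful is bookkeeping the $T$ versus $T+1$ summands, which only affects the bound by a harmless constant factor absorbed into the leading $\sqrt{T}$.
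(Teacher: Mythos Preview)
Your proposal is correct and follows essentially the same route as the paper: apply \cref{lem:basicineq} at every step, drop the nonpositive $-KL(\pi|\rho_k)$ term, telescope the Bregman increments with the constant step-size $\eta_k=T^{-1/2}$, discard $B_G(\pi|\rho_{T+1})\ge 0$ via \cref{lem:Bregprop}, and bound each variance summand by $A/(2m\sqrt{T})$. The paper's proof is exactly this two-line computation, and your caveat about $T$ versus $T+1$ is the only (harmless) slack in either argument.
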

	
	While \cref{thm:regbd1} shows that $O(\sqrt{T})$-regret is attainable with time-invariant step-sizes, it is natural to ask if the regret can be improved under stronger conditions. To wit, we note that \cref{thm:regbd1} only requires (i) uniform strong convexity of the potentials $\{\psi_k\}$ across iterations $k=0,1,2,\ldots ,T$, and (ii) uniform boundedness of the vector fields $\{\boldsymbol{\xi}_k\}$ in the $L^2(\pi_k)$ norm. So far, we have not imposed any smoothness conditions on the $\{\psi_k\}_{k\geq 0}$. In the following result, we show that under additional smoothness conditions on $\{\psi_k\}_{k\geq 0}$ and a strong log-concavity assumption on the reference $e^{-g}$, logarithmic regret bound is attainable under a carefully chosen sequence of time-varying step-sizes.
	
	\begin{thm}\label{thm:regbd2}
		Recall the notation from \cref{lem:basicineq}. Given a probability density $\pi\in\ptac(\R^d)$, let $\psi_{\pi}$ denote a Brenier potential from $e^{-g}$ to $\pi$. Also suppose $\sup_y\lmx(\nabla^2 \psi_k(y))\le M_k$, for some $M_k>0$ (i.e., the potentials $\psi_k$ are $M_k$-smooth) and the map 
		$y\mapsto e^{-g(y)}((\partial_i \psi_k^*)(\nabla \psi_{\pi}(y))-y_i)$
		vanishes as $|y_i|\to \infty$, for all $k=0,1,\ldots ,T$. Finally, let $e^{-g}$ be a strongly log-concave distribution where $g(\cdot)$ is $\lambda$-strongly convex. 
		
		\begin{itemize}
			\item[(i)] Define the following:
			$$S_{k+1}:=\lambda\sum_{i=0}^k \frac{1}{M_i}\quad \mbox{and} \quad \eta_k:=S_{k+1}^{-1}.$$
			Then we have
			$$\sum_{k=0}^T (KL(\rho_k|e^{-f})-KL(\pi|e^{-f})) \le \frac{1}{2}\sum_{k=0}^T \frac{1}{m_{k+1} S_{k+1}}\int \lVert \boldsymbol{\xi}_k(y)\rVert^2 \pi_k(y)\dd y \;.$$
			
			\item[(ii)] Now suppose that 
			$M_k\le M$, $m_k\ge m$, and $\int \lVert \boldsymbol{\xi}_k(y)\rVert^2\pi_k(y) \dd y \le A$ for all $k=0,1,\ldots ,T$, and some $m,M\in (0,\infty)$. Then by choosing $\eta_k^{-1}:=(k+1)\lambda/M$, we have: 
			$$\sum_{k=0}^T \big(KL(\wrh_k|e^{-f})-KL(\pi|e^{-f})\big) \le \frac{1}{2}\frac{A M}{\lambda m}(1+\log{(T+1)}) \;.$$
		\end{itemize}
	\end{thm}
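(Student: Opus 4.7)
The plan is to take the one-step EVI from Lemma \ref{lem:basicineq}, convert the $-KL(\pi|\rho_k)$ slack on the right-hand side into a $B_G(\pi|\rho_k)$ term using the relative convexity established in Lemma \ref{lem:unineq}, and then choose the step sizes so that the resulting "quasi-telescoping" collapses into a single boundary term. The condition that the map $y\mapsto e^{-g(y)}((\partial_i\psi_k^*)(\nabla\psi_\pi(y))-y_i)$ vanishes at infinity is exactly what Lemma \ref{lem:unineq} requires to yield
\[
KL(\pi|\rho_k)\;\ge\;\frac{\lambda}{M_k}\,B_G(\pi|\rho_k),
\]
since $\psi_k$ itself is the Brenier potential from $e^{-g}$ to $\rho_k$ and $\sup_y\lambda_{\max}(\nabla^2\psi_k(y))\le M_k$.

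Plugging this into Lemma \ref{lem:basicineq} and summing over $k=0,\ldots,T$ produces
\[
\sum_{k=0}^T\bigl(KL(\rho_k|e^{-f})-KL(\pi|e^{-f})\bigr)\;\le\;\sum_{k=0}^T\Bigl[\tfrac{1}{\eta_k}B_G(\pi|\rho_k)-\tfrac{1}{\eta_k}B_G(\pi|\rho_{k+1})-\tfrac{\lambda}{M_k}B_G(\pi|\rho_k)\Bigr]+\tfrac{1}{2}\sum_{k=0}^T\tfrac{\eta_k}{m_{k+1}}\!\int\!\|\boldsymbol{\xi}_k\|^2\pi_k.
\]
Reindexing the first sum as an Abel-type summation yields a boundary term $\frac{1}{\eta_0}B_G(\pi|\rho_0)-\frac{1}{\eta_T}B_G(\pi|\rho_{T+1})$ plus bracketed coefficients $\bigl(\tfrac{1}{\eta_k}-\tfrac{1}{\eta_{k-1}}-\tfrac{\lambda}{M_k}\bigr)B_G(\pi|\rho_k)$ (and an analogous boundary adjustment at $k=0$ involving $-\tfrac{\lambda}{M_0}B_G(\pi|\rho_0)$).

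For part (i), the choice $\eta_k^{-1}=S_{k+1}=\lambda\sum_{i=0}^k M_i^{-1}$ is engineered so that $\eta_0^{-1}=\lambda/M_0$ and $\eta_k^{-1}-\eta_{k-1}^{-1}=\lambda/M_k$ for $k\ge 1$. This makes every bracketed coefficient identically zero and the $k=0$ boundary term cancel exactly, leaving only $-\tfrac{1}{\eta_T}B_G(\pi|\rho_{T+1})\le 0$ from the telescoping piece (using non-negativity of $B_G$, which is Lemma \ref{lem:Bregprop}). What remains is the residual $\tfrac{1}{2}\sum_k \tfrac{\eta_k}{m_{k+1}}\int\|\boldsymbol{\xi}_k\|^2\pi_k$, which upon substituting $\eta_k=S_{k+1}^{-1}$ matches the claimed bound.

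For part (ii), I would use the cruder uniform schedule $\eta_k^{-1}=(k+1)\lambda/M$. Since $M_k\le M$, the bracketed coefficient equals $\tfrac{\lambda}{M}-\tfrac{\lambda}{M_k}\le 0$, and the $k=0$ piece $\tfrac{1}{\eta_0}-\tfrac{\lambda}{M_0}=\tfrac{\lambda}{M}-\tfrac{\lambda}{M_0}\le 0$; again the telescoping boundary term is non-positive. Hence only the noise term survives, and the uniform assumptions $m_{k+1}\ge m$ and $\int\|\boldsymbol{\xi}_k\|^2\pi_k\le A$ give
\[
\tfrac{1}{2}\sum_{k=0}^T\tfrac{\eta_k}{m_{k+1}}\!\int\!\|\boldsymbol{\xi}_k\|^2\pi_k\;\le\;\tfrac{AM}{2\lambda m}\sum_{k=0}^T\tfrac{1}{k+1}\;\le\;\tfrac{AM}{2\lambda m}\bigl(1+\log(T+1)\bigr),
\]
using the standard harmonic-sum estimate. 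The main obstacle, and where care is needed, is the Abel-type reindexing: one must verify that both the initial adjustment at $k=0$ and the per-index coefficients are simultaneously non-positive for the chosen $\eta_k$, which is why the two parts use slightly different step sizes (part (i) makes everything vanish exactly, while part (ii) only needs non-positivity). Beyond that, every other step is a direct combination of Lemmas \ref{lem:basicineq}, \ref{lem:unineq}, and \ref{lem:Bregprop}.
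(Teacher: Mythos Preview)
Your proposal is correct and follows essentially the same approach as the paper: combine Lemma~\ref{lem:basicineq} with the relative convexity bound $KL(\pi|\rho_k)\ge(\lambda/M_k)B_G(\pi|\rho_k)$ from Lemma~\ref{lem:unineq}, then telescope using the identity $\eta_k^{-1}-\lambda/M_k=S_k$ (with $S_0=0$) and the non-negativity of $B_G$. The only cosmetic difference is that for part~(ii) the paper simply invokes part~(i) with $M_k$ replaced by the uniform bound $M$ (which gives exactly $\eta_k^{-1}=(k+1)\lambda/M$), whereas you redo the Abel summation and check the per-index coefficients are non-positive; both routes are equivalent.
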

	
	We emphasize that our $O(\log T)$-regret bound does not assume that the target distribution $e^{-f}$ is strongly log-concave or that it satisfies log-Sobolev or Poincar\'{e} type inequalities. Instead, the time-independent constant in our bound depends on the ratio of the uniform smoothness parameter $M$ and the uniform strong convexity parameter $m$ on Brenier's potential $\psi$. This ratio $M/m$ can be viewed as an analog of the condition number (see \cite{Ma2021,zhang2023improved}) in the absence of any functional inequality assumptions on the target $e^{-f}$. Again, here we merely require strong convexity/smoothness of the Brenier potential $\psi$, which is known to be convex as a premise even for non-log-concave target distributions $e^{-f}$. Therefore, our assumptions are much weaker than log-concavity of $e^{-f}$. While it may be possible to improve the condition number dependence to $\sqrt{M/m}$ by proposing an accelerated version of \eqref{eq:discretedual}, we leave that for future research.

	\begin{remark}
		Note that the step-sizes chosen in Theorems \ref{thm:regbd1} and \ref{thm:regbd2} are fundamentally different. While the former uses time-invariant step-sizes, the latter involves time-decaying ones. Therefore, it is instructive to understand the average regret bounds in terms of the \emph{effective time} $S_T=\sum_{k=0}^{T-1} \eta_k$ (defined earlier in \cref{thm:dualcon}). For \cref{thm:regbd1}, $S_T=\sqrt{T}$ and the average regret 
		$$\frac{1}{T}\sum_{k=0}^T \big(KL(\wrh_k|e^{-f})-KL(\pi|e^{-f})\big) = O(T^{-1/2}) = O(S_T^{-1}).$$
		So the average regret in this case decays polynomially. On the other hand, for \cref{thm:regbd2}, $S_T\approx \log T$ and the average regret 
		$$\frac{1}{T}\sum_{k=0}^T \big(KL(\wrh_k|e^{-f})-KL(\pi|e^{-f})\big) = O(T^{-1}\log{T}) = O(S_T e^{-S_T}).$$
		Therefore, in contrast to \cref{thm:regbd1},  the average regret in this case decays nearly at an exponential rate in terms of the effective time. 
	\end{remark}
	
	\begin{remark}
		\rm
		In \cite[Theorem 2]{han2025variational}, the authors obtain a $O(\sqrt{T})$-regret bound (similar to our \cref{thm:regbd1}) for a different mirror-descent-like algorithm, where the mirror map is also a certain KL divergence functional. Unlike us, they impose a uniform logarithmic Sobolev inequality type assumption on the KL divergence functional, which makes their approach very different from ours. The same mirror-descent algorithm was also studied in \cite{aubin2022mirror}, where the authors 
		assume a stronger version of \cref{lem:unineq} (both an upper and lower bound). However, the authors do not establish sufficient conditions as we do in \cref{lem:unineq}. These papers use fixed step-sizes and do not recover logarithmic regret bounds like our \cref{thm:regbd2}. 
	\end{remark}
	
	\noindent The final result in this section establishes the rate of convergence of the last iteration $\rho_T$ to $e^{-f}$ in terms of the Bregman divergence $B_G$, under the same assumptions that lead to logarithmic regret in \cref{thm:regbd2}. The choice of step-sizes $\{\eta_k\}$ is, however, different in the two cases. 
	
	\begin{thm}\label{thm:ptcon}
		Consider the same assumptions as in \cref{thm:regbd2}, part (ii). 
		
		\noindent (i) The update from $\rho_k$ to $\rho_{k+1}$ satisfies the following asymptotic contraction: 
		$$\limsup_{\eta_k\to 0} \frac{1}{\eta_k}\frac{B_G(e^{-f}|\rho_{k+1})-B_G(e^{-f}|\rho_k)}{B_G(e^{-f}|\rho_k)}\le -\frac{\lambda}{M}.$$
		
		\noindent (ii) By choosing $\eta_k = (C M B_G(e^{-f}|\rho_0)\log{T})/(\lambda T), \forall k \leq T$ with $C>0$, the following holds for all large enough $T>0$:
		$$B_G(e^{-f}|\rho_T)\le \left(\frac{1}{T^{C B_G(e^{-f}|\rho_0)}} +\frac{C M^2 A}{2\lambda^2 m}\frac{\log{T}}{T}\right)B_G(e^{-f}|\rho_0).$$
	\end{thm}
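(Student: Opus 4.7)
My plan is to instantiate the one-step EVI of \cref{lem:basicineq} at $\pi=e^{-f}$ (so that $KL(\pi|e^{-f})=0$), substitute the Bregman relative convexity bound of \cref{lem:unineq} applied with $\rho=\rho_k$ and $\beta=M$ to get a self-contracting recursion in $B_G(e^{-f}|\rho_k)$, and then iterate this recursion for part (ii). The $M_k\le M$ smoothness hypothesis of \cref{thm:regbd2}(ii) is precisely what makes $\beta=M$ a legitimate choice in \cref{lem:unineq} uniformly over $k$, and the tail condition on $y\mapsto e^{-g(y)}\bigl((\partial_i\psi_k^*)(\nabla\psi_\pi(y))-y_i\bigr)$ assumed in \cref{thm:regbd2} is exactly the regularity that \cref{lem:unineq} requires for $\pi=e^{-f}$.

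For part (i), plugging $\pi=e^{-f}$ into \cref{lem:basicineq} and rearranging produces
\begin{align*}
\frac{B_G(e^{-f}|\rho_{k+1})-B_G(e^{-f}|\rho_k)}{\eta_k} \le -KL(\rho_k|e^{-f})-KL(e^{-f}|\rho_k)+\frac{\eta_k}{2m_{k+1}}\int \lVert\boldsymbol{\xi}_k(y)\rVert^2\pi_k(y)\,\dd y.
\end{align*}
The last term is $O(\eta_k)$ under the uniform bounds $m_{k+1}\ge m$ and $\int\lVert\boldsymbol{\xi}_k\rVert^2\pi_k\,\dd y\le A$, so it vanishes as $\eta_k\to 0$. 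Discarding the nonnegative $KL(\rho_k|e^{-f})$, substituting the bound $KL(e^{-f}|\rho_k)\ge (\lambda/M)\,B_G(e^{-f}|\rho_k)$ from \cref{lem:unineq}, and dividing by $B_G(e^{-f}|\rho_k)$ yields the asserted asymptotic contraction rate $-\lambda/M$.

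For part (ii), I retain the $\eta_k$ error term and impose $\eta_k=\eta$ constant, which gives the linear recursion
\begin{align*}
B_G(e^{-f}|\rho_{k+1}) \le \Bigl(1-\frac{\eta\lambda}{M}\Bigr)B_G(e^{-f}|\rho_k)+\frac{\eta^2 A}{2m}.
\end{align*}
Setting $\alpha=\eta\lambda/M$ and unrolling over $T$ steps bounds $B_G(e^{-f}|\rho_T)\le (1-\alpha)^T B_G(e^{-f}|\rho_0)+\eta AM/(2m\lambda)$. With the prescribed $\eta=CMB_G(e^{-f}|\rho_0)\log T/(\lambda T)$, one has $\alpha=CB_G(e^{-f}|\rho_0)\log T/T$, so for $T$ large enough that $\alpha<1$ we get $(1-\alpha)^T\le e^{-\alpha T}=T^{-CB_G(e^{-f}|\rho_0)}$, while the residual evaluates to $CM^2 A B_G(e^{-f}|\rho_0)\log T/(2\lambda^2 mT)$. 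Factoring $B_G(e^{-f}|\rho_0)$ out recovers the claimed bound. The only step requiring real care is confirming the uniform applicability of \cref{lem:unineq} along the whole trajectory; this rests on the standing $M_k\le M$ smoothness and the inherited tail condition from \cref{thm:regbd2}(ii), after which the argument reduces to the elementary linear-recursion computation above.
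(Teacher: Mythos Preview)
Your proposal is correct and follows essentially the same route as the paper: instantiate \cref{lem:basicineq} at $\pi=e^{-f}$, drop the nonnegative $KL(\rho_k|e^{-f})$, invoke \cref{lem:unineq} with $\beta=M$ to convert $KL(e^{-f}|\rho_k)$ into $(\lambda/M)B_G(e^{-f}|\rho_k)$, and then for part (ii) unroll the resulting linear recursion $B_G(e^{-f}|\rho_{k+1})\le (1-\eta\lambda/M)B_G(e^{-f}|\rho_k)+\eta^2 A/(2m)$ using $(1-\alpha)^T\le e^{-\alpha T}$. The paper does exactly this, with the same intermediate inequality and the same geometric-sum bound $\sum_{j<T}(1-\alpha)^j\le 1/\alpha$.
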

	
	The first part of \cref{thm:ptcon} demonstrates an ``asymptotic contraction'' of the sequence of probability measures $\{\rho_k\}$; that is, if $\eta_k\to 0$, then 
	$$B_G(e^{-f}|\rho_{k+1})\le \left(1- \eta_k \cdot \lambda / M + o(\eta_k) \right)B_G(e^{-f}|\rho_k).$$
	This implies that the local contraction rate of $\rho_k$ is governed by the strong convexity parameter of $g$ and the smoothness of the potentials $\{\psi_k\}_{k=0}^T$. Naturally, if the strong convexity parameter of $g$, specifically $\lambda$, is small, or if the smoothness parameter $M$ of the $\{\psi_k\}$s is large, the rate of asymptotic contraction slows down.
	
	The second part concerns the last iterate convergence of rate $O(T^{-1}\log T)$. It presents a non-asymptotic convergence rate of $\rho_T$ to $e^{-f}$ under the Bregman divergence $B_G(e^{-f}|\rho_T)$. When $B_G(e^{-f}|\rho_0)>0$, it indicates a nearly $O(T^{-1})$ rate up to logarithmic factors, as the first term in the bound can be made arbitrarily small by selecting $C$ large enough. Note that this bound also captures the fact that if $\rho_0$ and $e^{-f}$ are close, then so are $\rho_T$ and $e^{-f}$.
	
	\section{Applications}\label{sec:applications}
	\subsection{Generative Modeling and Neural-PDE}
	\label{Sec:genmod}
	
	In this section, we demonstrate how the discretized parabolic Monge-Amp\`{e}re PDE \eqref{eq:discretedual} can be used to design new generative modeling algorithms, that is, learning to sample from a target measure $\mu$ with density $e^{-f} := \dd \mu/\dd x$. It turns out our discretized parabolic Monge-Amp\`{e}re PDE integrates well with techniques established in current generative modeling literature: (i) learning density-ratio through logistic regression as in GANs \cite{goodfellow2020generative}; (ii) learning the score function via score matching as in DPMs \cite{Hyvarinen2005, song2019generative}. Additionally, our discretized parabolic Monge-Amp\`{e}re PDE can be implemented as a neural PDE using a residual neural network architecture \cite{he2016deep} with standard auto-differentiation libraries.
	
	Recall the discretized Monge-Amp\`{e}re PDE \eqref{eq:discretedual}; one crucial step for implementation is to learn 
	\begin{itemize}
		\item[(i)] the log density-ratio function $h_k(x):=\log(\rho_k(x)/e^{-f(x)})$, or
		\item[(ii)] its derivative, namely the score functions $\boldsymbol{m}_k(x):= \nabla \log(\rho_k(x)/e^{-f(x)})$,
	\end{itemize}
	given data $X \sim e^{-f}$ and $\tilde{X} \sim \rho_k$ drawn i.i.d. from the target distribution and the simulated distribution respectively.
	
	The following two propositions directly address how to estimate each term, with the first taken from \cite[Proposition 1]{goodfellow2014generative}, and the second from \cite[Theorem 1]{Hyvarinen2005}. We shall design new generative modeling algorithms with neural networks based on these two propositions, inspired by the discretized parabolic Monge-Amp\`{e}re PDE.
	
	\medskip 
	
	\noindent \textbf{Neural-PDE via Logistic Regression.}
	
	\begin{prop}[Log density-ratio via logistic regression]
		\label{prop:log-density-ratio}
		Let $\rho, \pi \in \ptac(\R^d)$ and consider the following data-generating process $(X, L)\sim \gamma$: first, sample labels $L = 0$ or $1$ with equal probability, and then generate data given the label as $X|L=0 \sim \pi$ and $X|L=1 \sim \rho$. With the data $(X, L) \sim \gamma $, define the logistic loss functional
		\begin{align*}
			\mathcal{L}(h) := \E_{(X, L) \sim \gamma}\left[ L \log (1+e^{-h(X)}) +   (1-L) \log (1+e^{h(X)})\right] \;.
		\end{align*}
		Assume $(\rho+\pi)(x) > 0, \forall x \in \R^d$, then the unique minimizer $h^\ast: \R^{d} \rightarrow \R$ of $\mathcal{L}(h)$ satisfies $h^\ast(x) = \log(\rho(x)/\pi(x))$ .
	\end{prop}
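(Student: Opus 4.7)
The plan is to reduce this to a pointwise (in $x$) scalar optimization, exploiting that $h$ is unconstrained so no coupling across different $x$ prevents minimizing inside the integral.

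First I would condition on the label $L\sim \mathrm{Ber}(1/2)$ and use Fubini to rewrite the loss as
\[
L(h)=\frac{1}{2}\int\big[\rho(x)\log(1+e^{-h(x)})+\pi(x)\log(1+e^{h(x)})\big]\,\dd x=\frac{1}{2}\int F_x(h(x))\,\dd x,
\]
where $F_x(t):=\rho(x)\log(1+e^{-t})+\pi(x)\log(1+e^{t})$. Because both $t\mapsto\log(1+e^{-t})$ and $t\mapsto\log(1+e^{t})$ are strictly convex on $\R$ and the coefficients $\rho(x),\pi(x)$ are non-negative with $\rho(x)+\pi(x)>0$ by hypothesis, the scalar function $F_x$ is strictly convex for each $x$ and therefore admits a unique minimizer.

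Next I would locate that minimizer by setting
\[
F_x'(t)=-\rho(x)\frac{1}{1+e^{t}}+\pi(x)\frac{e^{t}}{1+e^{t}}=0,
\]
which rearranges to $\pi(x)e^{t}=\rho(x)$, i.e., $t^\star(x)=\log(\rho(x)/\pi(x))$. Since $h$ may be chosen freely at each $x$, the pointwise inequality $F_x(h(x))\ge F_x(t^\star(x))$ integrates to $L(h)\ge L(h^\star)$ where $h^\star(x)=\log(\rho(x)/\pi(x))$, and strict convexity of each $F_x$ upgrades this to uniqueness in the $(\rho+\pi)$-a.e.\ sense.

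There is no serious mathematical obstacle here; this is essentially a one-variable convex calculation per $x$. The only care point is the edge case where $\rho(x)$ or $\pi(x)$ vanishes on a set of positive measure: then $h^\star(x)\in\{\pm\infty\}$ and one either restricts attention to the support of $\rho+\pi$ or adopts the extended-real convention, noting that in either regime the vanishing-coefficient term contributes zero while the other term is minimized by driving $h(x)$ to the appropriate infinity. Handling this cleanly (and checking measurability of the resulting $h^\star$) is the only item that requires a sentence or two of bookkeeping beyond the pointwise computation above.
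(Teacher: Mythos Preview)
Your argument is correct and is precisely the standard derivation: condition on the label, write the loss as an integral over $x$ of a strictly convex scalar function, and solve the first-order condition pointwise. The paper does not supply its own proof of this proposition; it simply attributes the result to \cite[Proposition 1]{goodfellow2014generative}, so there is nothing to compare against beyond noting that your route is the canonical one.
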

	Proposition~\ref{prop:log-density-ratio} provides a procedure to estimate the log-density-ratio: given i.i.d. data $X_1, \cdots, X_n \sim e^{-f}$ and $\tilde{X}_1, \cdots, \tilde{X}_n \sim \rho_k = (\nabla \psi_k)\# e^{-g}$, augment to  labeled data $\{(X_i, L_i = 0)\}_{i=1}^n\cup\{( \tilde{X}_i, \tilde{L}_i = 1)\}_{i=1}^n$ and denote its empirical distribution as $\widehat{\gamma}_k$; specify a neural network $h(\cdot ; \theta): \R^d \rightarrow \R$ parametrized by $\theta$ and solve
	\begin{align}
		\label{eqn:neural-pde-logistic-reg}
		\widehat{h}_k (\cdot) := h( \cdot ; \hat{\theta})~, \text{where}~ \hat{\theta} \in  \argmin_{\theta} \E_{(X, L) \sim \widehat{\gamma}_k} \left[ L \log (1+e^{-h(X;\theta)}) +   (1-L) \log (1+e^{h(X;\theta)})\right] \;.
	\end{align}
	The minimizer $\widehat{h}_k$ approximates $h_k$, the true log density-ratio function. 
	
	In practice, one can implement $\widehat{h}_k (\cdot)$ with neural networks. This motivates a new generative modeling algorithm by iteratively refining the Brenier potential $\psi_k$ with a new neural network $\widehat{h}_k$, coupled with auto-differentiation, composition, and residual connections; See Algorithm~\ref{alg:neural-pde-logistic-reg} below for details.
	
	\medskip
	\noindent \textbf{Neural-PDE via Score Matching.}
	
	\begin{prop}[Score matching via Fisher divergence]
		\label{prop:score-matching}
		Consider $\rho \in \ptac(\R^d)$ and the corresponding data score function $\nabla \log \rho$.
		Let $x \in \mathbb{R}^d$ and $\boldsymbol{\sigma}_i(x)$ denote the $i$-th component of a vector-valued function.
		Consider the Fisher divergence functional $\mathcal{J}(\boldsymbol{\sigma}) :=  \frac{1}{2} \E_{X \sim \rho} [ \|  \nabla \log \rho(X) - \boldsymbol{\sigma}(X) \|^2 ]$.
		Assume $\boldsymbol{\sigma}$ is differentiable and that, $\forall i$, $\rho(x) \boldsymbol{\sigma}_i(x) \rightarrow 0$ as $|x_i| \rightarrow \infty$. Then
		$$\mathcal{J}(\boldsymbol{\sigma}) = \E_{X \sim \rho} \left[  \sum_{i=1}^d   \partial_i  \boldsymbol{\sigma}_i(X)  + \frac{1}{2} \boldsymbol{\sigma}_i(X)^2  \right] + \text{const \;.}$$
	\end{prop}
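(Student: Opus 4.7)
The plan is to expand the square in the Fisher divergence and show that the cross term can be rewritten, via integration by parts, into an expression involving only the divergence of $\boldsymbol{\sigma}$. Concretely, I would first write
\begin{align*}
J(\boldsymbol{\sigma}) = \frac{1}{2}\E_{X\sim\rho}[\|\nabla \log \rho(X)\|^2] - \E_{X\sim\rho}[\langle \nabla \log \rho(X),\boldsymbol{\sigma}(X)\rangle] + \frac{1}{2}\E_{X\sim\rho}[\|\boldsymbol{\sigma}(X)\|^2].
\end{align*}
The first term does not depend on $\boldsymbol{\sigma}$ and is absorbed into the constant, and the last term is already in the desired form $\frac{1}{2}\sum_i \E[\boldsymbol{\sigma}_i(X)^2]$. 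The entire content of the proposition therefore reduces to handling the middle (cross) term.

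For the cross term, I would use the identity $\rho(x) \nabla \log \rho(x) = \nabla \rho(x)$ to rewrite it coordinate-wise as
\begin{align*}
-\E_{X\sim \rho}[\langle \nabla \log \rho(X),\boldsymbol{\sigma}(X)\rangle] = -\sum_{i=1}^d \int_{\R^d} \partial_i \rho(x)\, \boldsymbol{\sigma}_i(x)\, dx.
\end{align*}
Then I would apply Fubini together with one-dimensional integration by parts in the $x_i$ direction for each $i$, producing the bulk contribution $\int \rho(x)\,\partial_i \boldsymbol{\sigma}_i(x)\,dx$ plus a boundary term of the form $[\rho(x)\boldsymbol{\sigma}_i(x)]_{x_i = -\infty}^{x_i=\infty}$ evaluated at each slice of the remaining coordinates. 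The assumption $\rho(x)\boldsymbol{\sigma}_i(x)\to 0$ as $|x_i|\to \infty$ kills exactly these boundary terms, yielding
\begin{align*}
-\E_{X\sim \rho}[\langle \nabla \log \rho(X),\boldsymbol{\sigma}(X)\rangle] = \sum_{i=1}^d \E_{X\sim\rho}[\partial_i \boldsymbol{\sigma}_i(X)].
\end{align*}

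Plugging this back into the expansion of $J(\boldsymbol{\sigma})$ gives the claimed formula, with the constant being $\frac{1}{2}\E_{X\sim\rho}[\|\nabla \log \rho(X)\|^2]$. The only substantive step is the justification of integration by parts: strictly speaking one wants enough regularity on $\rho$ (weak differentiability) and a version of Fubini's theorem that permits reducing a $d$-dimensional integral to iterated one-dimensional ones, after which the pointwise vanishing assumption at infinity, applied slicewise, is exactly what is needed to discard the boundary contributions. I expect this boundary/justification step to be the only non-routine part; the algebra is a direct expansion, and no further structural assumption on $\rho$ beyond differentiability and the stated decay of $\rho\boldsymbol{\sigma}_i$ is invoked.
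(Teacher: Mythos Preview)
Your proposal is correct and follows exactly the standard Hyv\"arinen argument: expand the square, absorb the $\boldsymbol{\sigma}$-independent term into the constant, and use $\rho\nabla\log\rho=\nabla\rho$ together with coordinate-wise integration by parts (killing the boundary terms via the decay hypothesis) to convert the cross term into $\sum_i \E[\partial_i\boldsymbol{\sigma}_i]$. The paper does not actually supply its own proof of this proposition; it simply attributes the result to \cite[Theorem~1]{Hyvarinen2005}, whose proof is precisely the one you have written.
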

	
	Proposition~\ref{prop:score-matching} also sets forth the following procedure to estimate the score functions: given i.i.d. $\tilde{X}_1, \cdots, \tilde{X}_n \sim \rho_k = (\nabla \psi_k)\# e^{-g}$, denote its empirical distribution $\widehat{\rho}_k$, specify a neural network $\boldsymbol{\sigma}(\cdot; \omega) : \R^d \rightarrow \R^d$ parametrized by $\omega$ and solve
	\begin{align}
		\label{eqn:neural-pde-score-matching}
		\widehat{\boldsymbol{\sigma}}_k (\cdot) := \boldsymbol{\sigma}( \cdot ; \hat{\omega})~, \text{where}~ \hat{\omega} \in  \argmin_{\omega} \E_{X \sim \widehat{\rho}_k} \left[ \sum_{i=1}^d   \partial_i  \boldsymbol{\sigma}_i( X;\omega)  + \frac{1}{2} \boldsymbol{\sigma}_i(X;\omega)^2 \right] \;.
	\end{align}
	
	\begin{center}
		\begin{minipage}{0.7\textwidth}
			\begin{algorithm}[H]
				\caption{Monge-Amp\`{e}re Neural-PDE via Logistic Regression}
				\label{alg:neural-pde-logistic-reg}
				\SetKwInOut{Input}{Input}
				\SetKwInOut{Output}{Output}
				\Input{$T$, total number of steps; $\{\eta_k\}_{k \leq T}$, the step-sizes; initialize a neural network function $\psi_0:\R^d \rightarrow \R$, and set $k=0$.}
				\Output{A neural network function $\psi_T: \R^d \rightarrow \R$, and samples from $(\nabla \psi_T) \# e^{-g}$.}
				\While{$k < T$}{
					\textbf{Sampling step}: Given i.i.d. data $X_1, \cdots, X_n \sim e^{-f}$, sample $\tilde{X}_1, \cdots, \tilde{X}_n \sim \rho_k = (\nabla \psi_k)\# e^{-g}$, augment to $\{(X_i, L_i = 0)\}_{i=1}^n\cup\{( \tilde{X}_i, \tilde{L}_i = 1)\}_{i=1}^n$, and form its empirical distribution as $(X, L) \sim \widehat{\gamma}_k$\;
					
					\textbf{Learning step}: Estimate a neural network discriminator function $\widehat{h}_k$ as in \eqref{eqn:neural-pde-logistic-reg} \;
					
					\textbf{Neural Network update}: Define a new neural network with the residual architecture shown on the right
					$\psi_{k+1} = - \eta_k \widehat{h}_k \circ \nabla \psi_k + \psi_k$ \;
					
					Set $k \leftarrow k+1$ \;
				}
			\end{algorithm}
		\end{minipage}
		\begin{minipage}{.3\textwidth}
			\begin{tikzpicture}[
				roundnode/.style={rectangle, draw=green!60, fill=green!5, very thick, minimum size=7mm},
				squarednode/.style={rectangle, draw=red!60, fill=red!5, very thick, minimum size=5mm},
				]
				\node[squarednode]      (maintopic)                              {$\widehat{h}_k$};
				\node[roundnode]        (uppercircle)       [above=of maintopic] {$\psi_k$};
				\node[squarednode]      (leftsquare)       [left=of maintopic] {$\nabla$};
				\node[squarednode]      (rightsquare)       [right=of maintopic] {id};
				\node[roundnode]        (lowercircle)       [below=of maintopic] {$\psi_{k+1} = - \eta_k \widehat{h}_k \circ \nabla \psi_k + \psi_k$};
				\draw[->] (uppercircle.south) -- (leftsquare.north);
				\draw[->] (uppercircle.south) -- (rightsquare.north);
				\draw[->] (maintopic.south) -- (lowercircle.north);
				\draw[->] (leftsquare.east) -- (maintopic.west);
				\draw[->] (rightsquare.south) -- (lowercircle.north);
			\end{tikzpicture}
		\end{minipage}
	\end{center}
	
	The minimizer $\widehat{\boldsymbol{\sigma}}_k $ approximates the score function $\nabla \log \rho_k$; similarly, one can solve for $\widehat{\boldsymbol{\sigma}}$ that approximates the data score function $\nabla 
	\log e^{-f}$, and define
	\begin{align}
		\label{eqn:estimate-grad}
		\widehat{\boldsymbol{m}}_k := \widehat{\boldsymbol{\sigma}}_k - \widehat{\boldsymbol{\sigma}} \;.
	\end{align}
	This motivates a new generative modeling algorithm to refine the Brenier map $\boldsymbol{n}_k := \nabla \psi_k$ by iteratively adding neural network maps $\widehat{\boldsymbol{m}}_k$ using a residual connection, coupled with auto-differentiation and composition; we outline in Algorithm~\ref{alg:neural-pde-score-matching} below.

	\begin{remark}[Parabolic Monge-Amp\`{e}re discretizations in the literature]
		A discretization similar to \eqref{eq:discretedual} was studied in \cite{sulman2021domain,sulman2011efficient} with a time-invariant step-size sequence. However, the authors do not establish any convergence bounds. To the best of our knowledge, the integration of neural learning techniques from generative networks and score matching, as described in this section, is also novel within the parabolic PDE literature. This approach is practically useful because it avoids the otherwise cumbersome Hessian computation in \eqref{eq:discretedual}. Moreover, the use of parabolic PDEs in existing literature is primarily for adaptive mesh generation (see \cite{sulman2011optimal,budd2009moving,budd2009adaptivity}) to approximate solutions of other differential equations. In that sense, the connection between parabolic Monge-Amp\`{e}re and generative modeling is also new.
	\end{remark}

	\subsection{Variational Formulation of Parabolic PDE}\label{sec:viform}
	
	We first present a variational objective that leads to the discretization \eqref{eq:discretedual}. We will then use this variational objective to provide a new perspective on variational inference and also propose a new closed form iterative scheme for Gaussian variational inference.

	\noindent \textbf{Steepest Descent Interpretation.} In Euclidean space, the forward gradient descent can be viewed as the solution of a minimizing movement scheme involving an objective function and a regularizer (the squared Euclidean distance). In this section, we show that the forward discretization \eqref{eq:discretedual} can also be viewed as the solution of a variational problem that involves (i) an objective function, which will be the KL divergence to $e^{-f}$ and (ii) a regularizer which will be a divergence on the space of probability measures (to be defined below). This is in sharp contrast to usual Wasserstein gradient flows where the regularizer is the usual $2$-Wasserstein distance. To recast \eqref{eq:discretedual} as a variational problem, we will define the divergence on the tangent set  $\mathrm{Tan}_{e^{-g}}:=\overline{\{\nabla \psi: \psi\in \mathcal{C}_c^{\infty}\}}\subseteq L^2(e^{-g})$ where the closure is taken with respect to the $L^2(e^{-g})$ norm. 
	
	\begin{center}
		\begin{minipage}{0.7\textwidth}
			\begin{algorithm}[H]
				\caption{Monge-Amp\`{e}re Neural-PDE via Score Matching}
				\label{alg:neural-pde-score-matching}
				\SetKwInOut{Input}{Input}
				\SetKwInOut{Output}{Output}
				\Input{$T$, total number of steps; $\{\eta_k\}_{k \leq T}$, the step-sizes; initialize a vector-valued neural network $\boldsymbol{n}_0:\R^d \rightarrow \R^d$, and set $k=0$.}
				\Output{A vector-valued neural network function $\boldsymbol{n}_T : \R^d \rightarrow \R^d$, and samples from $(\boldsymbol{n}_T) \# e^{-g}$.}
				\While{$k < T$}{
					\textbf{Sampling step}: Obtain i.i.d. samples from $\rho_k = (\boldsymbol{n}_k) \# e^{-g}$, form the empirical measure $\widehat{\rho}_k$\;
					
					\textbf{Learning step}: Estimate a neural network score $\widehat{\boldsymbol{\sigma}}_k$ as in \eqref{eqn:neural-pde-score-matching} and define the corresponding $\widehat{\boldsymbol{m}}_k$ as in \eqref{eqn:estimate-grad} \;
					
					\textbf{Neural Network update}: Define a new neural network with the residual architecture shown on the right
					$\boldsymbol{n}_{k+1} = -\eta_k \nabla \boldsymbol{n}_k \cdot  (\widehat{\boldsymbol{m}}_k \circ \boldsymbol{n}_k)  + \boldsymbol{n}_k \;, $ here $\nabla \boldsymbol{n}_k: \R^d \rightarrow \mathbb{S}_+^{d \times d}$, and $\widehat{\boldsymbol{m}}_k \circ \boldsymbol{n}_k : \R^d \rightarrow \R^d$ and `$\cdot$' denotes the matrix-vector product  \;
					
					Set $k \leftarrow k+1$ \;
				}
			\end{algorithm}
		\end{minipage}
		\begin{minipage}{.3\textwidth}
			\begin{tikzpicture}[
				roundnode/.style={rectangle, draw=green!60, fill=green!5, very thick, minimum size=7mm},
				squarednode/.style={rectangle, draw=red!60, fill=red!5, very thick, minimum size=5mm},
				]
				\node[squarednode]      (maintopic)                              {$\widehat{\boldsymbol{m}}_k$};
				\node[roundnode]        (uppercircle)       [above=of maintopic] {$\boldsymbol{n}_k$};
				\node[squarednode]      (leftsquare)       [left=of maintopic] {$\nabla$};
				\node[squarednode]      (rightsquare)       [right=of maintopic] {id};
				\node[roundnode]        (lowercircle)       [below=of maintopic] {$\boldsymbol{n}_{k+1} = -\eta_k \nabla \boldsymbol{n}_k \cdot  (\widehat{\boldsymbol{m}}_k \circ \boldsymbol{n}_k)  + \boldsymbol{n}_k$};
				\draw[->] (uppercircle.south) -- (maintopic.north);
				\draw[->] (uppercircle.south) -- (leftsquare.north);
				\draw[->] (uppercircle.south) -- (rightsquare.north);
				\draw[->] (maintopic.south) -- (lowercircle.north);
				\draw[->] (leftsquare.south) -- (lowercircle.north);
				\draw[->] (rightsquare.south) -- (lowercircle.north);
			\end{tikzpicture}
		\end{minipage}
	\end{center}
	
	\begin{definition}[Linearized Wasserstein divergence]
		Let $\nabla \psi_1,\nabla \psi_2\in \mathrm{Tan}_{e^{-g}}$ such that $\nabla\psi_1$ is twice differentiable and strictly convex. Then the linearized Wasserstein divergence 
		$$LD_{W}(\nabla \psi_2;\nabla \psi_1) := \int (\nabla \psi_2(y)-\nabla \psi_1(y))^{\top}(\nabla^2 \psi_1(y))^{-1}(\nabla\psi_2(y)-\nabla\psi_1(y)) e^{-g(y)}\dd y.$$
	\end{definition}
	We used the term linearized because without the inverse Hessian term, $LD_{W}$ is exactly the popular linearized optimal transport distance in the literature; see \cite{cai2020linearized,moosmuller2023linear,wang2013linear}. On the other hand, we call it a divergence because if  $\nabla\psi_2$ and $\nabla \psi_1$ are ``close", then $LD_W(\nabla\psi_2;\nabla\psi_1)$ is the second-order approximation for the following expected Bregman divergence 
	$\E_{Y\sim e^{-g}} [\psi_2^*(\nabla\psi_2(Y))-\psi_2^*(\nabla\psi_1(Y))-\langle \nabla\psi_2(Y)-\nabla\psi_1(Y),\nabla\psi_2^*(\nabla\psi_1(Y))\rangle]$ (see \cref{sec:Bregman} for more details on Bregman divergences). 
	As a final step to defining the variational problem for \eqref{eq:discretedual}, we reparametrize the KL divergence as a functional on $\mathrm{Tan}_{e^{-g}}$, i.e., define 
	$\tilde{F}(\nabla \psi):=KL((\nabla \psi)\#e^{-g}|e^{-f}).$
	
	\begin{prop}\label{prop:varrep}
		Suppose $\nabla \psi_1\in \mathrm{Tan}_{e^{-g}}$ is strictly convex and twice differentiable. Also, assume that both $f$ and $g$ are twice differentiable and the map 
		$y\mapsto \sum_{j=1}^d \int \frac{\partial^2}{\partial y_i\partial y_j}\psi^*(\nabla\psi(y))\frac{\partial}{\partial y_j}\Theta(y)e^{-g(y)}\dd y$
		vanishes as $|y_i|\to\infty$ for all $i=1,2,\ldots ,d$, and all $\Theta\in\mathcal{C}_c^{\infty}$. 
		Let 
		\begin{align}\label{eq:targetoptim}
			\nabla\tilde{\psi}\in \argmin_{\nabla\psi\in L^2(e^{-g})} \left[\tilde{F}(\nabla \psi_1)+\Bigg\langle \frac{\delta}{\delta (\nabla \psi)}\tilde{F}(\nabla \psi)\big|_{\nabla \psi=\nabla \psi_1}, \nabla \psi -\nabla \psi_1\Bigg\rangle_{e^{-g}}+\frac{1}{2\tau}LD_W(\nabla \psi;\nabla \psi_1)\right].
		\end{align}
		Then any optimizer $\nabla \tilde{\psi}$ of the above variational problem satisfies the following stationary condition:
		$$\nabla \tilde{\psi}(y) - \nabla \psi_1(y) = -\tau\nabla \left(f(\nabla \psi_1(y))-g(y)-\log\det(\nabla^2\psi_1(y))\right)$$
	\end{prop}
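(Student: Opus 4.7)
The plan is to extract the first-order stationarity condition of the variational problem, identify the $L^2(e^{-g})$-gradient $V := \frac{\delta \tilde{F}}{\delta(\nabla\psi)}\big|_{\nabla\psi_1}$, and then verify by direct computation that $(\nabla^2\psi_1)V$ equals the stated right-hand side. The first two terms of the bracketed objective in \eqref{eq:targetoptim} are affine in $\nabla\psi$ and contribute $V$ to the variation; by symmetry of the weight $(\nabla^2\psi_1)^{-1}$, the quadratic form $\frac{1}{2\tau}LD_W(\nabla\psi;\nabla\psi_1)$ contributes $\frac{1}{\tau}(\nabla^2\psi_1)^{-1}(\nabla\tilde\psi-\nabla\psi_1)$. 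Setting the total variation to zero and multiplying by $\nabla^2\psi_1$ yields $\nabla\tilde\psi - \nabla\psi_1 = -\tau(\nabla^2\psi_1)V$, so it remains to show $(\nabla^2\psi_1)V = \nabla\bigl[f(\nabla\psi_1) - g - \log\det\nabla^2\psi_1\bigr]$.

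To compute $V$, use the pushforward identity $\rho(\nabla\psi(y))\det\nabla^2\psi(y) = e^{-g(y)}$ to rewrite $\tilde{F}(\nabla\psi) = \int e^{-g(y)}\bigl[f(\nabla\psi(y)) - g(y) - \log\det\nabla^2\psi(y)\bigr]\,dy$. Perturbing $\nabla\psi \mapsto \nabla\psi + \epsilon\Theta$ for a smooth compactly supported vector field $\Theta$ and applying Jacobi's formula to the log-determinant gives $\delta\tilde{F} = \int e^{-g}\bigl[\nabla f(\nabla\psi_1)^{\top}\Theta - \mathrm{tr}((\nabla^2\psi_1)^{-1} D\Theta)\bigr]\,dy$. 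An integration by parts on the trace term---legitimized by the decay hypothesis in the statement, which (recalling $\nabla^2\psi^*(\nabla\psi(y)) = (\nabla^2\psi(y))^{-1}$) is exactly the tail condition needed to kill the boundary contribution coordinate-by-coordinate---identifies
\[
V_i(y) = \partial_i f(\nabla\psi_1(y)) + e^{g(y)}\sum_{j=1}^{d} \partial_j\bigl[e^{-g(y)}(\nabla^2\psi_1(y))^{-1}_{ji}\bigr].
\]

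The remaining task is the algebraic match-up. The $\nabla f$ summand of $V$, multiplied by $\nabla^2\psi_1$, produces $\nabla[f(\nabla\psi_1)]$ by the chain rule together with symmetry of $\nabla^2\psi_1$. The $\partial_j g$ summand collapses to $-\nabla g$ via $\sum_k (\nabla^2\psi_1)_{ik}(\nabla^2\psi_1)^{-1}_{jk} = \delta_{ij}$. The remaining summand, containing $\partial_j(\nabla^2\psi_1)^{-1}_{ji}$, is expanded using the matrix-inverse identity $\partial_j A^{-1} = -A^{-1}(\partial_j A) A^{-1}$; after reindexing and invoking the permutation symmetry of the mixed third-order partials $\partial_a\partial_b\partial_c \psi_1$, it matches $\nabla[-\log\det\nabla^2\psi_1]$ as given by Jacobi's formula. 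Summing the three matched pieces yields the stated stationary condition. The main obstacle is precisely the bookkeeping in this last step: the cancellations rest jointly on the symmetry of $(\nabla^2\psi_1)^{-1}$ and on the permutation symmetry of the third derivative tensor of $\psi_1$, and it is easy to misplace indices along the way.
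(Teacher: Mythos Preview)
Your proposal is correct and follows essentially the same route as the paper: compute the $L^2(e^{-g})$ first variation of $\tilde F$ via the change-of-variables formula and Jacobi's formula, derive the stationarity condition $\nabla\tilde\psi-\nabla\psi_1=-\tau(\nabla^2\psi_1)V$, and then verify the key algebraic identity $(\nabla^2\psi_1)V=\nabla[f(\nabla\psi_1)-g-\log\det\nabla^2\psi_1]$. The only cosmetic difference is that the paper expresses $(\nabla^2\psi_1)^{-1}$ as $\nabla^2\psi_1^*(\nabla\psi_1)$ and obtains the third-derivative match by differentiating the trace identity $\mathrm{tr}\bigl((\nabla^2\psi_1)^{-1}\nabla^2\psi_1\bigr)=d$, whereas you invoke $\partial_j A^{-1}=-A^{-1}(\partial_j A)A^{-1}$ directly; these are equivalent bookkeeping devices.
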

	We defer the proof of \cref{prop:varrep} to \cref{sec:pfappl}. Note that the above variational problem is reminiscent of preconditioned gradient descent \cite{bonet2024mirror,li2017preconditioned} and Newton iterations \cite{battiti1992first,cartis2010complexity}. However, to the best of our knowledge, the above variational problem is new in the literature, which makes our parabolic PDE-based approach different from contemporary mirror descent schemes on the $2$-Wasserstein space, e.g., \cite{aubin2022mirror,han2025variational,karimi2024sinkhorn,leger2021gradient}.

	\noindent \textbf{Bayesian Variational Inference.} Our discretized parabolic PDE \eqref{eq:discretedual} and the variational interpretation in \cref{prop:varrep} can be leveraged to provide an alternate perspective on variational inference. A common computational challenge in Bayesian Statistics is to compute integrals with respect to a complicated high-dimensional probability distribution, say our target $e^{-f}$. Variational inference \cite{blei2017variational} has emerged as a computationally viable approximation. At the core of variational inference is to find 
	$$\hat{\rho}_{\mathcal{V}}:=\argmin_{\rho\in \mathcal{V}} KL(\rho|e^{-f}),$$
	where $\mathcal{V}$ denotes a candidate variational family of distributions. These families are chosen so that integrals against measures in $\mathcal{V}$ are easy to compute. 
	Popular examples of variational families include:
	
	\begin{itemize}
		\item[(i)] Gaussian variational inference \cite{diao2023forward,Katsevich2024} - Here $\mathcal{V}$ is the family of $d$-dimensional Gaussian distributions, i.e., $\mathcal{V}:=\{N(m,\Sigma):\, m\in\R^d,\ \Sigma\in \mathbb{S}_+^{d\times d} \mbox{ is symmetric positive definite}\}.$
		
		\item[(ii)] Mean-field variational inference \cite{jiang2024algorithms,wainwright2008graphical} - Here $\mathcal{V}$ is the family of $d$-dimensional product measures, i.e., $\mathcal{V}=\{\pi_1\otimes \ldots \otimes \pi_d:\, \pi_i \mbox{ is a probability measure on }\R,\, 1\le i\le d\}$.
	\end{itemize}
	
	An alternate way of looking at the above variational inference problem is to formulate it in terms of transport maps from a simple reference distribution $e^{-g}$, say $N(0,\lambda^2 Id)$ for some $\lambda>0$. So, instead of parameterizing with a family of distributions, we now parameterize with a family of transport maps. We define 
	\begin{equation}\label{eq:vicall}
		\nabla{\hat\psi}_{\boldsymbol{\mathcal{S}}}:=\argmin_{\nabla\psi\in\mathcal{S}} KL(\nabla\psi\#e^{-g} |e^{-f}),
	\end{equation}
	where $\boldsymbol{\mathcal{S}}\subseteq L^2(e^{-g})$ is now a \emph{variational family of functions}. Some natural examples would include: 
	
	\begin{itemize}
		\item[(i)] Gaussian variational inference - Here $$\boldsymbol{\mathcal{S}}:=\{Ay+m: \, m\in\R^d,\, A\in \mathbb{S}_+^{d\times d} \mbox{ is symmetric positive definite}\}.$$ 
		\item[(ii)] Mean-field variational inference - Here $$\boldsymbol{\mathcal{S}}:=\{(S_1(y_1),\ldots ,S_d(y_d)):\, S_i:\R\to\R \mbox{ is monotone increasing} \}.$$
	\end{itemize}
	The above formulation in \eqref{eq:vicall} should immediately remind us of the variational form introduced in \cref{prop:varrep} as $KL(\nabla\psi\#e^{-g} |e^{-f})$ is exactly $\tilde{F}(\nabla\psi)$ from \cref{prop:varrep}. Therefore, we can iteratively solve \eqref{eq:targetoptim} over a potentially nonparametric class of vector fields $\boldsymbol{\mathcal{S}}\subseteq L^2(e^{-g})$, i.e., 
	\begin{align}\label{eq:targetoptimvi}
		\nabla\psi_{k+1}\in \argmin_{\nabla\psi\in \boldsymbol{\mathcal{S}}} \left[\tilde{F}(\nabla \psi_k)+\Bigg\langle \frac{\delta}{\delta (\nabla \psi)}\tilde{F}(\nabla \psi_k), \nabla \psi -\nabla \psi_k\Bigg\rangle_{e^{-g}}+\frac{1}{2\tau}LD_W(\nabla \psi;\nabla \psi_k)\right].
	\end{align}
	Here the first variation $\frac{\delta}{\delta (\nabla \psi)}\tilde{F}(\nabla \psi_k)$ is the variation restricted to the class $\boldsymbol{\mathcal{S}}$, i.e., for any $\nabla \Theta\in\boldsymbol{\mathcal{S}}$, $\langle \frac{\delta}{\delta (\nabla \psi)}\tilde{F}(\nabla \psi_k),\nabla\Theta\rangle_{e^{-g}}=\lim_{\epsilon\to 0} \frac{\tilde{F}(\nabla\psi_k+\epsilon \nabla\Theta)-\tilde{F}(\nabla\psi_k)}{\epsilon}$. For parametric classes of $\boldsymbol{\mathcal{S}}$ such as in Gaussian variational inference, one could directly solve \eqref{eq:targetoptimvi} over the natural parameters $\mu$ and $A$. In the univariate setting, the new Gaussian variational inference updates are summarized in \cref{alg:VI}.
	
	\begin{center}
		\begin{minipage}{0.9\textwidth}
			\begin{algorithm}[H]
				\caption{Gaussian Variational Inference}
				\label{alg:VI}
				\SetKwInOut{Input}{Input}
				\SetKwInOut{Output}{Output}
				\Input{$T$, total number of steps; $\{\eta_k\}_{k \leq T}$, the step-sizes; $e^{-g}$, density of $N(0,\lambda^2)$, $\lambda>0$, initializer $(m_0,\sigma_0)$}
				\Output{Final mean and standard deviation $(m_T,\sigma_T)$.}
				\While{$k < T$}{
					\textbf{Update variational parameters}: \[
					( m_{k+1} , \sigma_{k+1} ) \gets G_{\eta_k}(m_k,\sigma_k) \qquad \mbox{where} 
					\]
					\[
					G_{\eta}(t,s) :=\begin{pmatrix}t-\frac{\eta s}{\lambda}\E_{Y\sim N(0,\lambda^2)} f'((s/\lambda)Y+t)) \\  s-\frac{\eta}{\lambda^2}(s^2 \E_{Y\sim N(0,\lambda^2)} f''((s/\lambda)Y+t)-1). \end{pmatrix}
					\]
					Set $k \gets k+1$
				}
			\end{algorithm}
		\end{minipage}
	\end{center}
	To heuristically see the validity of \cref{alg:VI}, consider the case $\eta_k\equiv \eta$ in which case the updates can be written as the fixed point iteration $(m_{k+1},\sigma_{k+1})=G_{\eta}(m_k,\sigma_k)$ where $G_{\eta}(\cdot,\cdot)$ is defined in \cref{alg:VI}. Therefore, the limit $(m_k,\sigma_k)$ as $k\to\infty$, say $(m,\sigma)$ if it exists, would satisfy $(m,\sigma)=G_{\eta}(m,\sigma)$. This simplifies to 
	$$\E_{X\sim N(m,\sigma^2)} f'(X)=0, \quad \mbox{and} \quad \E_{X\sim N(m,\sigma^2)} f''(X)=\sigma^{-2},$$
	which are exactly the stationary conditions for $(m,\sigma^2)$ arising as a solution of the Gaussian variational inference problem; see \cite[Equation 1.9]{Katsevich2024}. In \cref{sec:numvar}, we provide a numerical experiment that illustrates the performance of \cref{alg:VI}.

	\section{Numerical Experiments}\label{sec:simulations}
	
	In this Section, we provide a numerical example to demonstrate the efficacy of our proposed algorithms toward sampling from a non-log-concave target. To wit, we will use a Gaussian location mixture as our target distribution 
	\begin{equation}\label{eq:targetmix}
		e^{-f(x)}=\frac{1}{2}\phi(x-2)+\frac{1}{2}\phi(x+2),
	\end{equation}
	where $\phi(\cdot)$ is the standard normal density. We initialize $\psi_0(y)=y^2/2$ and choose the reference $e^{-g(y)}$ to be the standard normal density. Note that the optimal transport from the standard normal to the mixture \eqref{eq:targetmix} is given by 
	\begin{equation}\label{eq:actualOT}
		\psi_{\star}'(y)=F^{-1}\circ \Phi(y),
	\end{equation}
	where $\Phi(\cdot)$ is the standard normal cumulative distribution function and $F(x)=(1/2)(\Phi(x-2)+\Phi(x+2))$. 
	The discrete updates from \eqref{eq:discretedual} can now be written as 
	\begin{align}\label{eq:iterate_mix}
		\psi_{k+1}(y) &=\psi_k(y)+\eta_k \Delta_k(y) \;,\\
		\Delta_k(y) &:=\log\left(\frac{1}{2}\phi(\psi_k'(y)-2)+\frac{1}{2}\phi(\psi_k'(y)+2)\right)+\frac{1}{2}y^2+\log(\psi_k''(y)) \;, \nonumber
	\end{align}
	under an appropriately chosen sequence of step-sizes which will be described below. We will describe our experiments in four steps that highlight the practical challenges in discretization and our proposed strategies for addressing them. 
	
	\begin{figure}
		\centering
		\includegraphics[width=0.49\linewidth]{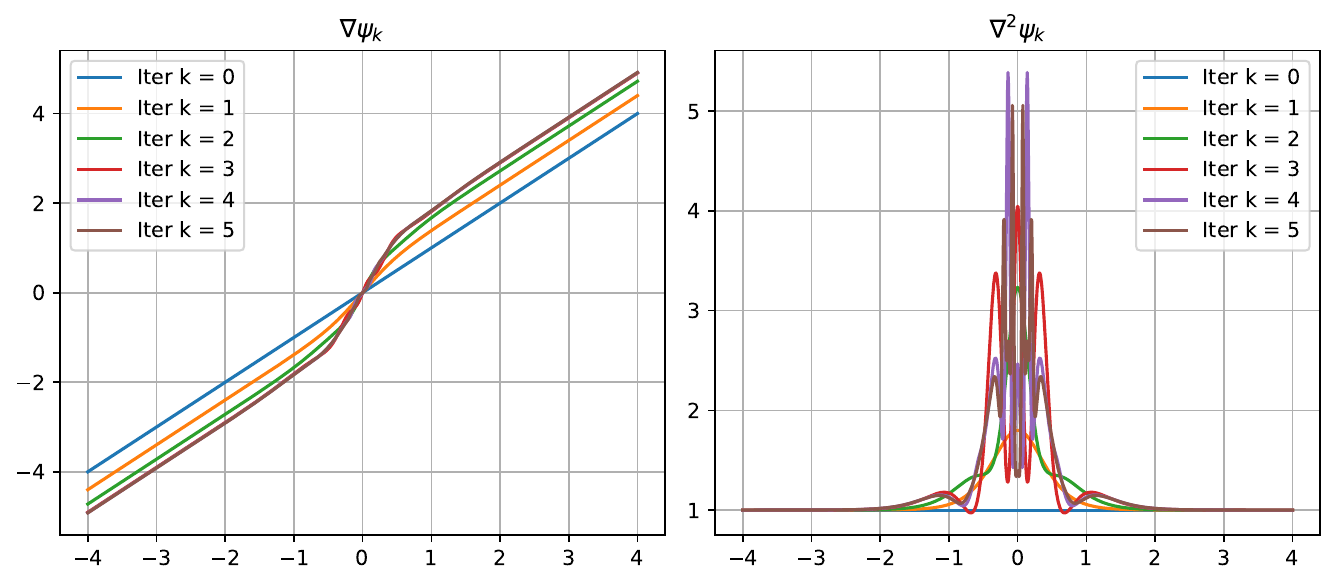}
		\includegraphics[width=0.49\linewidth]{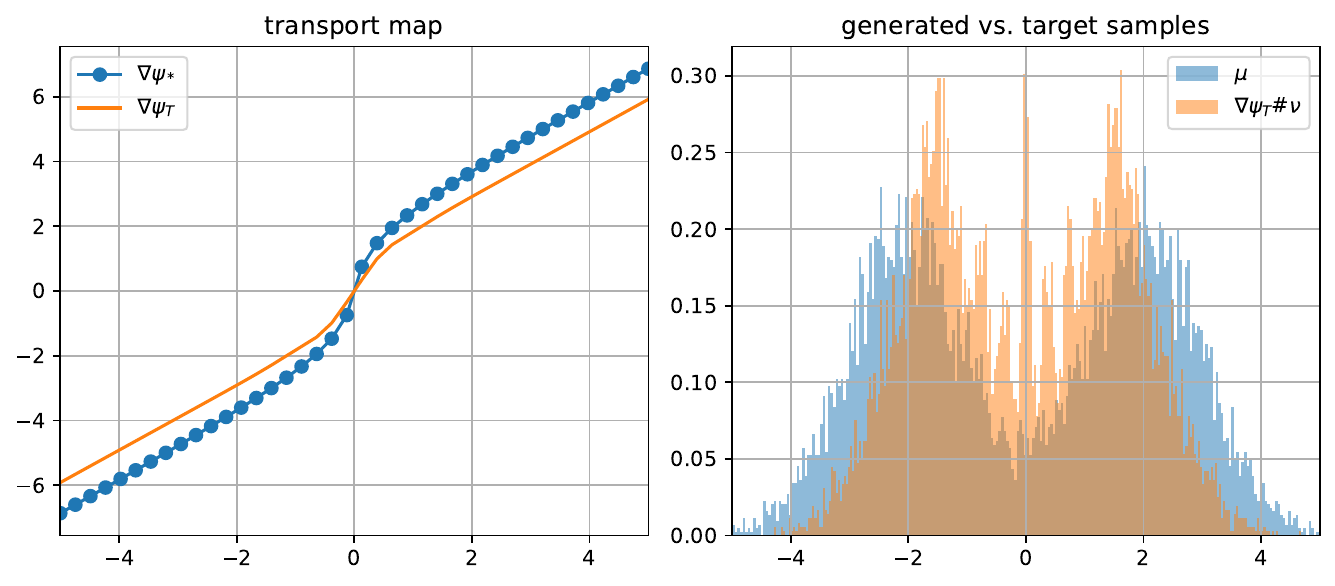}
		\caption{Oracle updates --- The first and second plots track the evolution of the gradient and the Hessian, respectively. The third plot compares the learned transport map at the final iterate with the target optimal transport map from $N(0,1)$ to the Gaussian mixture \eqref{eq:targetmix}. The fourth plot compares the histograms between $10000$ samples from the mixture in \eqref{eq:targetmix} and $10000$ samples from the push-forward measure $\nabla\psi_T\#N(0,1)$.}
		\label{fig:oracle}
	\end{figure}
	
	\begin{figure}
		\centering
		\includegraphics[width=0.49\linewidth]{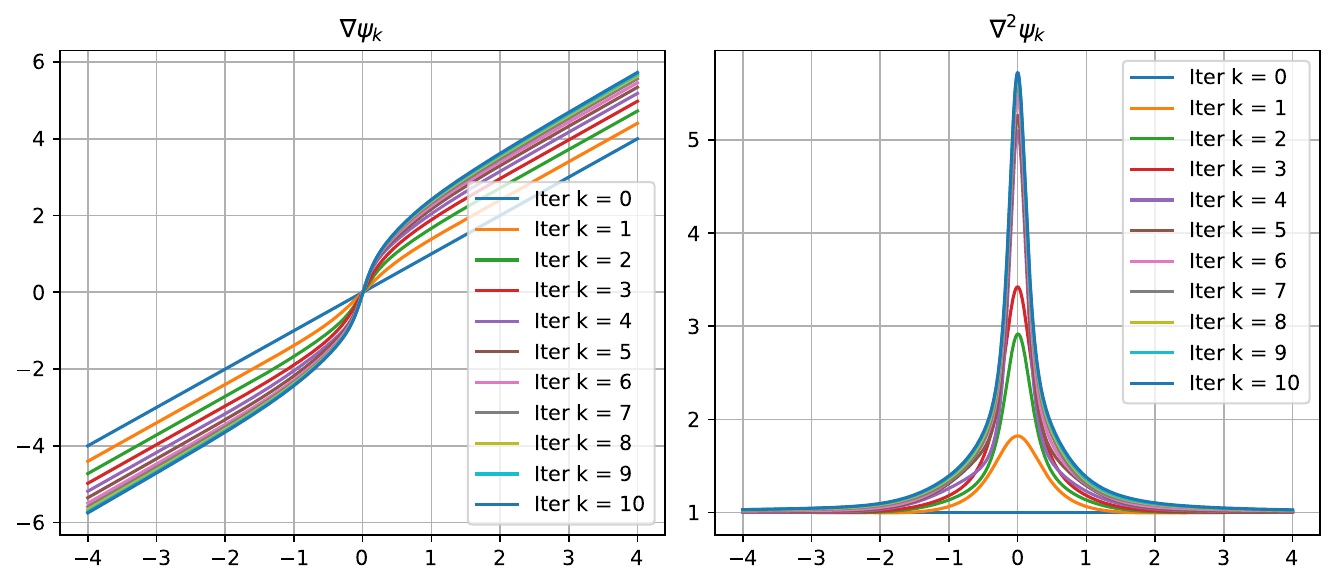}
		\includegraphics[width=0.49\linewidth]{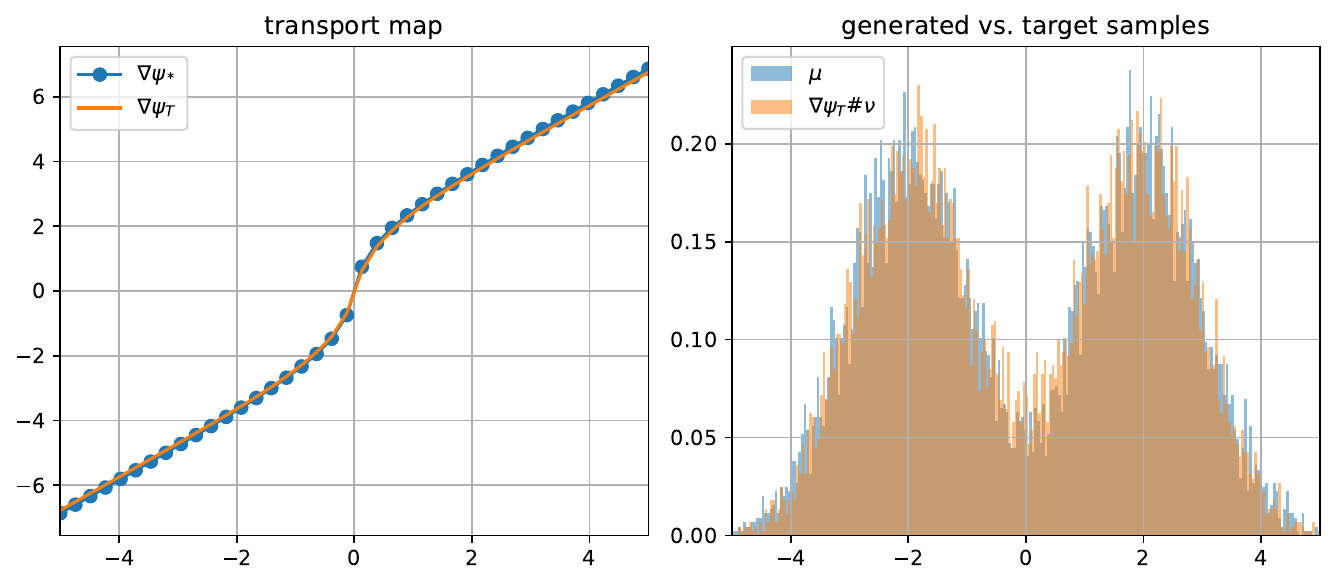}
		\caption{Same as \cref{fig:oracle} for oracle updates with distillation.}
		\label{fig:oracle_distill}
	\end{figure}

	\paragraph*{Oracle computation of \eqref{eq:discretedual} with batch auto-differentiation} In the first approach, we assume access to the functional form of the target $e^{-f}$ and we update the potentials analytically from \eqref{eq:iterate_mix}. All gradients and higher-order derivatives were computed using the automatic 
	differentiation module available in \texttt{PyTorch} (through the \texttt{torch.autograd} function). The step-sizes are chosen adaptively as follows: 
	\begin{itemize}
		\item We partition the range $[-3,3]$ into $1000$ equally spaced parts. Suppose $\{a_1,a_2,\ldots ,a_{1000}\}$ denotes the boundary points of the partition.
		\item Suppose that we have constructed up to $\psi_k$, we then choose 
		\begin{equation}\label{eq:adaptstep}
			\eta_k=\frac{1}{2}\max\left\{\max\left\{-\frac{\psi_k''(a_i)}{\Delta_k''(a_i)}: \Delta_k''(a_i)<0, 1\le i\le 1000\right\},0.4\right\}.
		\end{equation}
	\end{itemize}
	The above choice ensures that the double derivative of $\psi_{k+1}$ is strictly positive at the grid points. The performance of the oracle procedure is illustrated in \cref{fig:oracle}. There are two major challenges which arise in the above implementation:
	
	(a) The higher-order derivatives are numerically unstable. This instability accumulates in the hessians of the potentials as can be seen in the second panel of \cref{fig:oracle}. 
	
	(b) Computing the gradients and hessians via backpropagation requires storing the full computational graph which is memory intensive. In our experiments, we found it hard to go beyond $T = 5$ or $6$ iterations. While the learned transport map after $5$ iterations captures a bimodal structure, the third and fourth panels of \cref{fig:oracle} show that it is not reasonably close to $\psi_{\star}'$ (see \eqref{eq:actualOT}).
	
	\paragraph*{Oracle computation of  \eqref{eq:discretedual} with distillation} To address the numerical instabilities and memory allocation issues with the oracle computation, we use a \emph{distillation} step at every iteration. The distillation decouples the computational graph and regularizes the instabilities. It proceeds as follows: with the $k$-th iteration potential $\psi_k$, we calculate the update direction $\Delta_k(y)$ according to \eqref{eq:iterate_mix}. Then we train a simple \emph{student} network by minimizing
	\begin{align}\label{eq:distill}
		\Delta_{k}^{\mbox{stud}}\in \argmin_{ \mathrm{nn}(\cdot ; \theta) } \frac{1}{M}\sum_{j=1}^M (\Delta_k'(Z_j)-\mathrm{nn}'(Z_j ; \theta))^2,
	\end{align}
	where the above minimization is over $\mathrm{nn}(\cdot ; \theta)$, a three-layer neural network with two hidden layers each of width $32$ with weights $\theta$ and $Z_1,\ldots ,Z_M\overset{i.i.d.}{\sim} \mathrm{Unif}[-3,3]$, with $M=500$. The above distillation step initializes the student network from a cold start and uses the \texttt{Adam} optimizer for training. To guarantee smoothness, we choose the softplus activation function ($x\mapsto \log(1+e^x)$) for the student network. We then update $\psi_{k+1} = \psi_{k} + \eta_k \Delta_{k}^{\mbox{stud}}$ with adaptive step-sizes from \eqref{eq:adaptstep}. This process is repeated recursively. It ensures that the gradients and Hessians of $\psi_k$ are stable. It also avoids the need to store a large computational graph (growing exponentially with number of iterations), hence overcoming the memory allocation issues. This allows us to run more iterations $T=10$ with much less computational time. The performance of this procedure is presented in \cref{fig:oracle_distill}. In contrast to \cref{fig:oracle}, the gradients and Hessians in \cref{fig:oracle_distill} are much more stable. Within $10$ iterations, we observe very close agreement between $\psi'_k$ and $\psi'_{\star}$ in the third and fourth panels of \cref{fig:oracle_distill}.

	\paragraph*{\cref{alg:neural-pde-score-matching} with distillation} Both of the above approaches rely on the knowledge of the functional form of the target $e^{-f}$. In practice, we only have access to samples, say $X_1,\ldots ,X_n\overset{i.i.d.}{\sim} e^{-f}$. In such a scenario, let us first discuss the performance of \cref{alg:neural-pde-score-matching} with $n=10000$. In \eqref{eqn:neural-pde-score-matching}, we use a two hidden-layer neural network of width $32$, and the softplus activation function, to optimize the score-matching objective. We use the same adaptive step-sizes as in \eqref{eq:adaptstep} and distillation approach similar to \eqref{eq:distill}. \cref{fig:scorematching} shows that the iterates are much more stable than the analytic oracle updates. After $10$ iterations, the learned transport map $\psi'_k$ is close to the optimal map $\psi'_{\star}$ in the interval $[-3,3]$, beyond which the approximation deteriorates slightly (particularly for $|y|>3$). A potential reason for this is that the score matching approach tries to learn the score for the target \eqref{eq:targetmix} and the current iterate distribution $\rho_k=(\psi_k')\#N(0,1)$ separately, without leveraging the fact that the two scores are likely to be closer as the number of iterates increases. This limitation is overcome by our next approach.
	
	\begin{figure}
		\centering
		\includegraphics[width=0.49\linewidth]{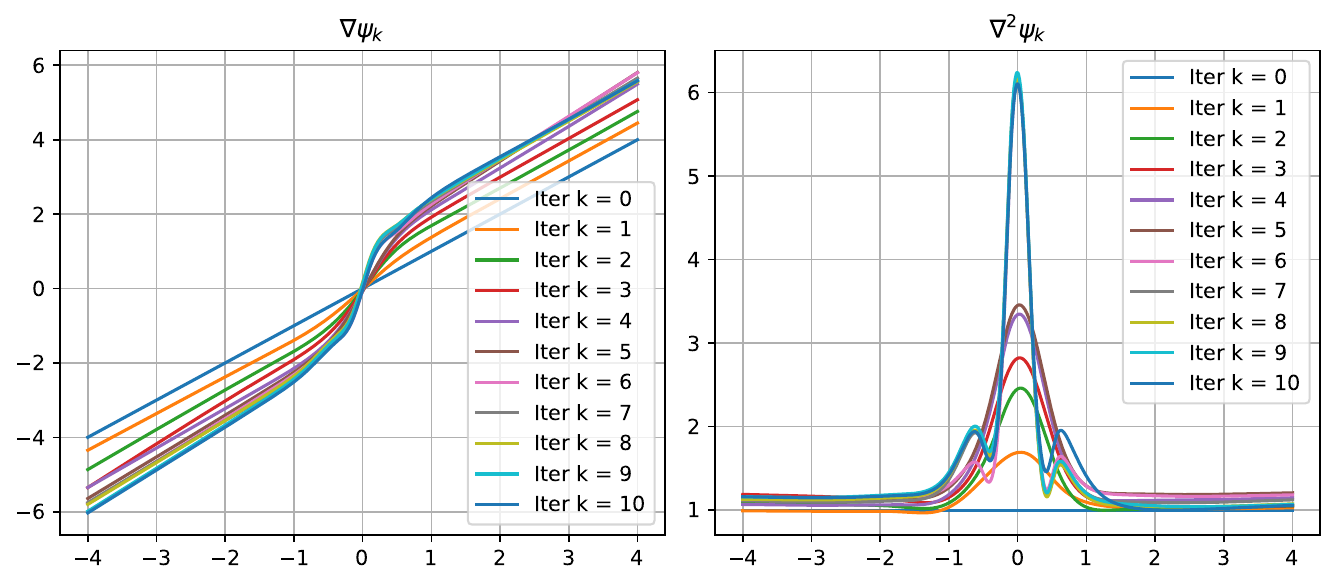}
		\includegraphics[width=0.49\linewidth]{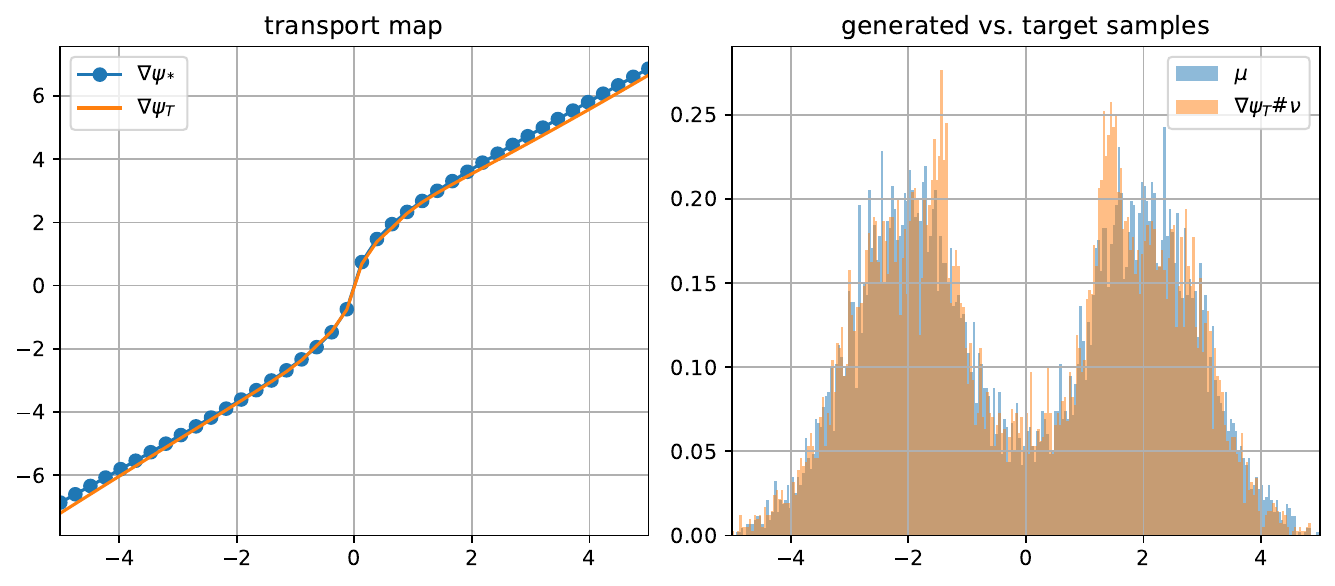}
		\caption{Same as \cref{fig:oracle} for score matching based \cref{alg:neural-pde-score-matching} with distillation.}
		\label{fig:scorematching}
	\end{figure}
	
	\paragraph*{\cref{alg:neural-pde-logistic-reg} with distillation} Let us now discuss the performance of \cref{alg:neural-pde-logistic-reg}. In \eqref{eqn:neural-pde-logistic-reg}, we again use a two-hidden-layer neural network of width $32$, and the softplus activation function, to optimize the logistic loss. We use the same adaptive step-sizes as in \eqref{eq:adaptstep} and distillation approach as in \eqref{eq:distill}. This time the distillation learns an estimate of the log density-ratio via a simple student network, say $\Delta_k^{\mbox{stud}}$.  \cref{fig:logisticloss} shows that the iterates (their gradients as well as hessians) are stable. After $10$ iterations, the learned transport map $\psi_k'$ nicely approximates the true optimal map $\psi_{\star}'$, better than in the previous score matching based approach, particularly outside $[-3,3]$. We believe this is because \cref{alg:neural-pde-logistic-reg} learns the logarithm of the density ratio between the current distribution $\rho_k=(\psi_k')\#N(0,1)$ and the target \eqref{eq:targetmix} directly (unlike in \cref{alg:neural-pde-score-matching} where the score functions are learned separately). Since this log density-ratio gets closer to $0$ as the number of iterations grows, it can be approximated accurately using a simple neural network.
	
	\begin{figure}
		\centering
		\includegraphics[width=0.49\linewidth]{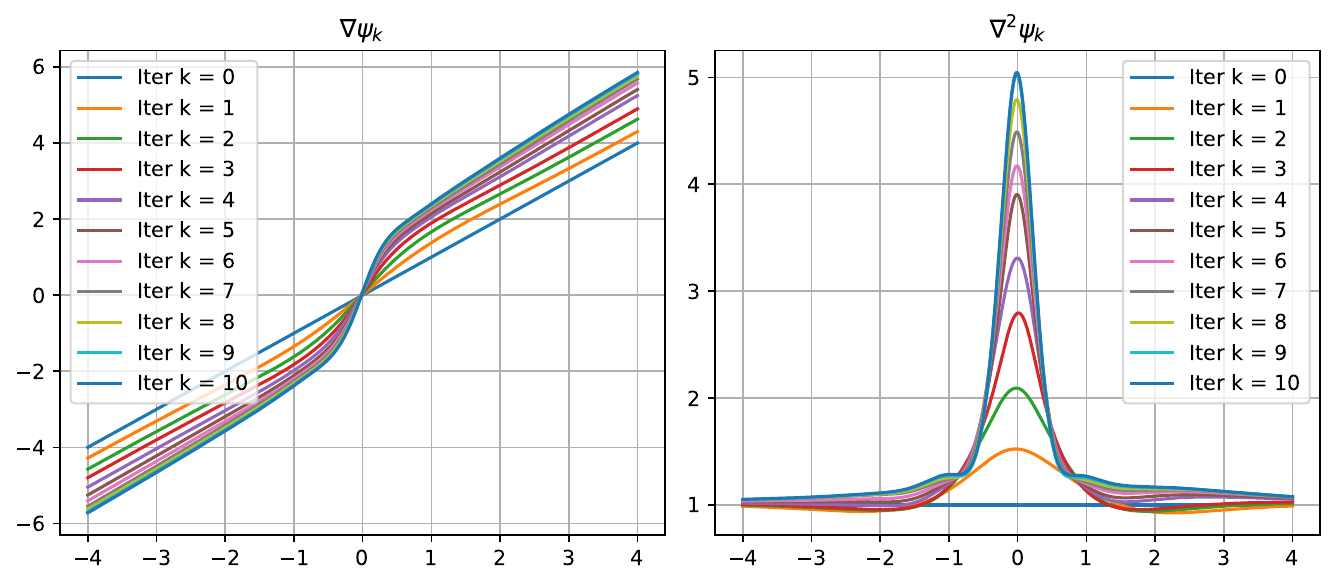}
		\includegraphics[width=0.49\linewidth]{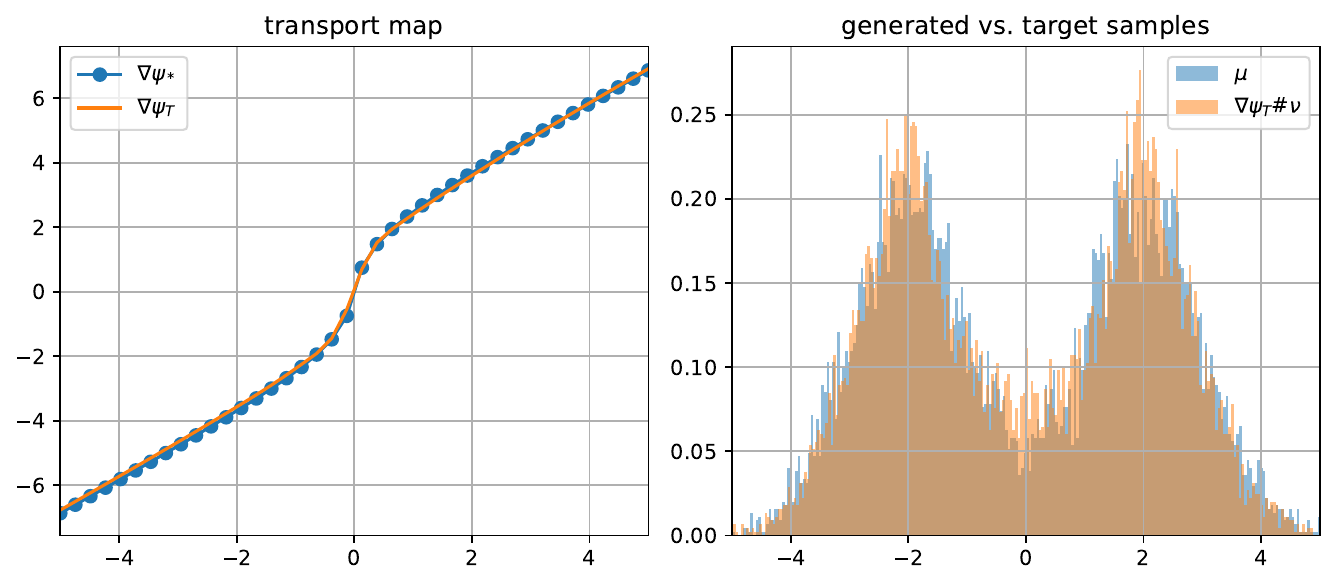}
		\caption{Same as \cref{fig:oracle} for logistic regression based \cref{alg:neural-pde-logistic-reg} with distillation.}
		\label{fig:logisticloss}
	\end{figure}
	
	\begin{remark}[Effect of distillation]
		Let us consider our approach based on logistic regression. Note that the distillation step attempts to approximate the log density-ratio $f+\log\rho_k$ with a simpler student network, say $\Delta_k^{\mbox{stud}}$. This approximation error can be transferred easily to our current regret bounds. In particular, if 
		$$\int (f+\log\rho_k-\Delta_k^{\mbox{stud}})(x)(\rho_k-e^{-f})(x)\, d x = O(\eta_k),$$
		where $\eta_k$ denotes the step-size, then all our regret bounds and last iterate convergence bounds continue to hold after adjusting for constants. In \cref{sec:numvar}, we discuss another method to avoid long backpropagation chains that do not require distillation.
	\end{remark}
	
	\begin{remark}[On termination criteria]
		A practical question is how to determine when to terminate the proposed algorithms. One natural strategy is to assess whether the push forward samples $\nabla\psi_k(Y_i)$, where $Y_i \sim \mathcal{N}(0,I)$ (for instance), are statistically indistinguishable from samples drawn from the target distribution. This can be done by performing a two-sample hypothesis test (for example, an energy \cite{szekely2013energy} or kernel-based test \cite{gretton2012kernel}) at a prescribed significance level~$\alpha$. Alternatively, one may compute a discrepancy measure such as the maximum mean discrepancy (see \cite[Equation 3]{gretton2012kernel}) between the generated and target samples, and monitor its decay across iterations. The iteration can then be stopped once the maximum mean discrepancy falls below a user-defined tolerance, providing a simple and data-driven convergence criterion. 
	\end{remark}

	\section*{Discussion and Future work}
	In this paper, we have introduced a new generative modeling framework which draws a novel connection between a parabolic PDE (\eqref{eq:dualpma}) and probabilistic sampling. We have proposed a time discretization for \eqref{eq:dualpma} and established different regimes of convergence rates to the target distribution, depending on smoothness assumptions and appropriate step-size choices. Our theoretical results include bounds on average iterate (see \cref{thm:dualcon}), average regret (see Theorems \ref{thm:regbd1} and \ref{thm:regbd2}) and last iterate (see \cref{thm:ptcon}). In addition we show how the discretization integrates nicely with neural network based learning techniques from generative networks and score-based diffusion models (see \cref{Sec:genmod}). Finally we illustrate how the proposal naturally yields a new family of variational inference algorithms. Overall, our proposal combines the ease of one-pass sampling typically associated with GANs (generative adversarial networks) alongside the ease of distribution learning associated with DPMs (diffusion-based probabilistic models). 
	
	While our paper is an early theoretical step towards bridging the gap between parabolic PDE discretization and sampling, a number of interesting theoretical and practical questions remain open. A few of them are discussed below.
	
	\begin{enumerate}
		\item \textbf{Sample complexity bounds}: Given data $X_1,\ldots ,X_n\overset{i.i.d.}{\sim} e^{-f}$, suppose the initial distribution $\rho_0$ is chosen to be the empirical measure $n^{-1}\sum_{i=1}^n \delta_{X_i}$. A natural question would be to establish a sample complexity bound, i.e., given an $\epsilon>0$, one must find $n\equiv n(\epsilon,e^{-f})$, $T\equiv T(\epsilon,e^{-f})$, and appropriate step-sizes $\{\eta_k\}_{k=0}^T$ such that $\mbox{KL}(\rho_T,e^{-f})\le \epsilon$. 
		
		\item \textbf{Structural estimation in high dimension}: Empirically it is often observed that structural assumptions on $e^{-f}$ can lead to faster convergence of generative models, particularly when the dimension $d$ is large. It would be theoretically challenging and practically useful to establish such results for our proposed parabolic Monge-Amp\`{e}re discretization. Two natural structures one can impose are: (a) Low intrinsic dimension --- Here the target distribution is supported on some low-dimensional manifold. While this means the target does not admit a Lebesgue density, we note that \cref{alg:neural-pde-logistic-reg} from \cref{Sec:genmod} does not require the existence of a density, and (b) Markov random field: Here the target has a density $e^{-f}$ but the (conditional) dependence structure is governed by an underlying sparse graph. A canonical example would be a standard Markov chain. 
		
		\item \textbf{Optimal transport map estimation}: There has been extensive research in recent years towards constructing computationally tractable estimators of optimal transport maps. While the Sinkhorn algorithm shows tremendous potential, establishing statistical convergence rates for a Sinkhorn-based estimator, under smoothness/structural conditions, remains open. Since the parabolic PDE targets the optimal transport map from $e^{-g}$ to $e^{-f}$, our proposed discretizations yield an estimator (say $\nabla \hat{\psi}_T$) of this optimal transport map. It would be interesting to explore whether one can establish sample complexity bounds under the popular metric $\lVert \nabla \hat{\psi}_T - \nabla \psi_{\infty}\rVert_{L^2(e^{-g})}$, and show that it offers fast convergence under structural/smoothness assumptions on $e^{-f}$.
		
		\item \textbf{Empirical validation on high-resolution image data}: Diffusion based generative models are popularly used to generate high-resolution images. It would be interesting to explore empirically how well our proposed algorithms perform in such image generation experiments. There are interesting practical challenges that arise while working with images, such as (a) potential violation of smoothness assumptions required for Theorems \ref{thm:regbd1}, \ref{thm:regbd2}, and \ref{thm:ptcon}, (b) designing structured neural networks that exploit potential low-dimensional nature of images, and (c) tuning the step-size sequence adaptively.
	\end{enumerate}
	
	\section*{Acknowledgment}
	Liang acknowledges the generous support from the NSF Career Award (DMS-2042473) and the Wallman Society of Fellows at the University of Chicago. The authors thank Takuya Koriyama for implementing a distillation module, which we use in the numerical experiment section.
	
	\bibliographystyle{abbrvnat}
	\bibliography{References}

\begin{thebibliography}{83}
\providecommand{\natexlab}[1]{#1}
\providecommand{\url}[1]{\texttt{#1}}
\expandafter\ifx\csname urlstyle\endcsname\relax
  \providecommand{\doi}[1]{doi: #1}\else
  \providecommand{\doi}{doi: \begingroup \urlstyle{rm}\Url}\fi

\bibitem[Abedin and Kitagawa(2020)]{abedin2020exponential}
F.~Abedin and J.~Kitagawa.
\newblock Exponential convergence of parabolic optimal transport on bounded
  domains.
\newblock \emph{Analysis \& PDE}, 13\penalty0 (7):\penalty0 2183--2204, 2020.

\bibitem[Ambrosio et~al.(2008)Ambrosio, Gigli, and Savar\'e]{Ambrosio2008}
L.~Ambrosio, N.~Gigli, and G.~Savar\'e.
\newblock \emph{Gradient flows in metric spaces and in the space of probability
  measures}.
\newblock Lectures in Mathematics ETH Z\"urich. Birkh\"auser Verlag, Basel,
  second edition, 2008.
\newblock ISBN 978-3-7643-8721-1.

\bibitem[Amp{\`e}re(1819)]{ampere1819memoire}
A.-M. Amp{\`e}re.
\newblock \emph{M{\'e}moire contenant l'application de la th{\'e}orie
  expos{\'e}e dans le XVII. e Cahier du Journal de l'{\'E}cole polytechnique,
  {\`a} l'int{\'e}gration des {\'e}quations aux diff{\'e}rentielles partielles
  du premier et du second ordre}.
\newblock De l'Imprimerie royale, 1819.

\bibitem[Arjovsky et~al.(2017)Arjovsky, Chintala, and
  Bottou]{arjovsky2017wasserstein}
M.~Arjovsky, S.~Chintala, and L.~Bottou.
\newblock Wasserstein generative adversarial networks.
\newblock In \emph{International conference on machine learning}, pages
  214--223. PMLR, 2017.

\bibitem[Aubin-Frankowski et~al.(2022)Aubin-Frankowski, Korba, and
  L{\'e}ger]{aubin2022mirror}
P.-C. Aubin-Frankowski, A.~Korba, and F.~L{\'e}ger.
\newblock Mirror descent with relative smoothness in measure spaces, with
  application to sinkhorn and em.
\newblock \emph{Advances in Neural Information Processing Systems},
  35:\penalty0 17263--17275, 2022.

\bibitem[Battiti(1992)]{battiti1992first}
R.~Battiti.
\newblock First-and second-order methods for learning: between steepest descent
  and newton's method.
\newblock \emph{Neural computation}, 4\penalty0 (2):\penalty0 141--166, 1992.

\bibitem[Beck and Teboulle(2003)]{Beck2003}
A.~Beck and M.~Teboulle.
\newblock Mirror descent and nonlinear projected subgradient methods for convex
  optimization.
\newblock \emph{Oper. Res. Lett.}, 31\penalty0 (3):\penalty0 167--175, 2003.
\newblock ISSN 0167-6377,1872-7468.

\bibitem[Berman(2020)]{berman2020sinkhorn}
R.~J. Berman.
\newblock The sinkhorn algorithm, parabolic optimal transport and geometric
  monge--amp{\`e}re equations.
\newblock \emph{Numerische Mathematik}, 145\penalty0 (4):\penalty0 771--836,
  2020.

\bibitem[Blei et~al.(2017)Blei, Kucukelbir, and McAuliffe]{blei2017variational}
D.~M. Blei, A.~Kucukelbir, and J.~D. McAuliffe.
\newblock Variational inference: A review for statisticians.
\newblock \emph{Journal of the American statistical Association}, 112\penalty0
  (518):\penalty0 859--877, 2017.

\bibitem[Bonet et~al.(2024)Bonet, Uscidda, David, Aubin-Frankowski, and
  Korba]{bonet2024mirror}
C.~Bonet, T.~Uscidda, A.~David, P.-C. Aubin-Frankowski, and A.~Korba.
\newblock Mirror and preconditioned gradient descent in wasserstein space.
\newblock \emph{arXiv preprint arXiv:2406.08938}, 2024.

\bibitem[Boyd and Vandenberghe(2004)]{boyd2004convex}
S.~P. Boyd and L.~Vandenberghe.
\newblock \emph{Convex optimization}.
\newblock Cambridge university press, 2004.

\bibitem[Brenier(1991)]{brenier1991polar}
Y.~Brenier.
\newblock Polar factorization and monotone rearrangement of vector-valued
  functions.
\newblock \emph{Communications on pure and applied mathematics}, 44\penalty0
  (4):\penalty0 375--417, 1991.

\bibitem[Bubeck et~al.(2021)Bubeck, Cohen, Lee, and Lee]{Bubeck2021}
S.~Bubeck, M.~B. Cohen, J.~R. Lee, and Y.~T. Lee.
\newblock Metrical task systems on trees via mirror descent and unfair gluing.
\newblock \emph{SIAM J. Comput.}, 50\penalty0 (3):\penalty0 909--923, 2021.
\newblock ISSN 0097-5397,1095-7111.

\bibitem[Budd and Williams(2009)]{budd2009moving}
C.~J. Budd and J.~Williams.
\newblock Moving mesh generation using the parabolic monge--amp{\`e}re
  equation.
\newblock \emph{SIAM Journal on Scientific Computing}, 31\penalty0
  (5):\penalty0 3438--3465, 2009.

\bibitem[Budd et~al.(2009)Budd, Huang, and Russell]{budd2009adaptivity}
C.~J. Budd, W.~Huang, and R.~D. Russell.
\newblock Adaptivity with moving grids.
\newblock \emph{Acta Numerica}, 18:\penalty0 111--241, 2009.

\bibitem[Cai et~al.(2020)Cai, Cheng, Craig, and Craig]{cai2020linearized}
T.~Cai, J.~Cheng, N.~Craig, and K.~Craig.
\newblock Linearized optimal transport for collider events.
\newblock \emph{Physical Review D}, 102\penalty0 (11):\penalty0 116019, 2020.

\bibitem[Cao et~al.(2021)Cao, Bie, Vahdat, Fidler, and Kreis]{cao2021don}
T.~Cao, A.~Bie, A.~Vahdat, S.~Fidler, and K.~Kreis.
\newblock Don’t generate me: Training differentially private generative
  models with sinkhorn divergence.
\newblock \emph{Advances in Neural Information Processing Systems},
  34:\penalty0 12480--12492, 2021.

\bibitem[Cartis et~al.(2010)Cartis, Gould, and Toint]{cartis2010complexity}
C.~Cartis, N.~I. Gould, and P.~L. Toint.
\newblock On the complexity of steepest descent, newton's and regularized
  newton's methods for nonconvex unconstrained optimization problems.
\newblock \emph{Siam journal on optimization}, 20\penalty0 (6):\penalty0
  2833--2852, 2010.

\bibitem[Cesa-Bianchi and Lugosi(2006)]{Cesa2006}
N.~Cesa-Bianchi and G.~Lugosi.
\newblock \emph{Prediction, learning, and games}.
\newblock Cambridge University Press, Cambridge, 2006.
\newblock ISBN 978-0-521-84108-5; 0-521-84108-9.

\bibitem[Chen and Teboulle(1993)]{chen1993convergence}
G.~Chen and M.~Teboulle.
\newblock Convergence analysis of a proximal-like minimization algorithm using
  bregman functions.
\newblock \emph{SIAM Journal on Optimization}, 3\penalty0 (3):\penalty0
  538--543, 1993.

\bibitem[Chen et~al.(2022)Chen, Chewi, Li, Li, Salim, and
  Zhang]{chen2022sampling}
S.~Chen, S.~Chewi, J.~Li, Y.~Li, A.~Salim, and A.~R. Zhang.
\newblock Sampling is as easy as learning the score: theory for diffusion
  models with minimal data assumptions.
\newblock \emph{arXiv preprint arXiv:2209.11215}, 2022.

\bibitem[Chen et~al.(2024)Chen, Chewi, Lee, Li, Lu, and
  Salim]{chen2024probability}
S.~Chen, S.~Chewi, H.~Lee, Y.~Li, J.~Lu, and A.~Salim.
\newblock The probability flow ode is provably fast.
\newblock \emph{Advances in Neural Information Processing Systems}, 36, 2024.

\bibitem[Chewi(2023)]{chewi2023log}
S.~Chewi.
\newblock Log-concave sampling.
\newblock \emph{Book draft}, 2023.

\bibitem[Cuturi(2013)]{cuturi2013sinkhorn}
M.~Cuturi.
\newblock Sinkhorn distances: Lightspeed computation of optimal transport.
\newblock \emph{Advances in neural information processing systems}, 26, 2013.

\bibitem[De~Bortoli et~al.(2021)De~Bortoli, Thornton, Heng, and
  Doucet]{de2021diffusion}
V.~De~Bortoli, J.~Thornton, J.~Heng, and A.~Doucet.
\newblock Diffusion schr{\"o}dinger bridge with applications to score-based
  generative modeling.
\newblock \emph{Advances in Neural Information Processing Systems},
  34:\penalty0 17695--17709, 2021.

\bibitem[Deb et~al.(2023)Deb, Kim, Pal, and Schiebinger]{deb2023wasserstein}
N.~Deb, Y.-H. Kim, S.~Pal, and G.~Schiebinger.
\newblock Wasserstein mirror gradient flow as the limit of the sinkhorn
  algorithm.
\newblock \emph{arXiv preprint arXiv:2307.16421}, 2023.

\bibitem[Diao et~al.(2023)Diao, Balasubramanian, Chewi, and
  Salim]{diao2023forward}
M.~Z. Diao, K.~Balasubramanian, S.~Chewi, and A.~Salim.
\newblock Forward-backward gaussian variational inference via jko in the
  bures-wasserstein space.
\newblock In \emph{International Conference on Machine Learning}, pages
  7960--7991. PMLR, 2023.

\bibitem[Dieuleveut et~al.(2020)Dieuleveut, Durmus, and Bach]{Dieleveut2020}
A.~Dieuleveut, A.~Durmus, and F.~Bach.
\newblock Bridging the gap between constant step size stochastic gradient
  descent and {M}arkov chains.
\newblock \emph{Ann. Statist.}, 48\penalty0 (3):\penalty0 1348--1382, 2020.
\newblock ISSN 0090-5364,2168-8966.

\bibitem[Fokker(1914)]{fokker1914mittlere}
A.~D. Fokker.
\newblock Die mittlere energie rotierender elektrischer dipole im
  strahlungsfeld.
\newblock \emph{Annalen der Physik}, 348\penalty0 (5):\penalty0 810--820, 1914.

\bibitem[Fortet(1940)]{fortet1940resolution}
R.~Fortet.
\newblock R{\'e}solution d'un syst{\`e}me d'{\'e}quations de m.
  schr{\"o}dinger.
\newblock \emph{Journal de math{\'e}matiques pures et appliqu{\'e}es},
  19\penalty0 (1-4):\penalty0 83--105, 1940.

\bibitem[Franklin and Lorenz(1989)]{franklin1989scaling}
J.~Franklin and J.~Lorenz.
\newblock On the scaling of multidimensional matrices.
\newblock \emph{Linear Algebra and its applications}, 114:\penalty0 717--735,
  1989.

\bibitem[Frigyik et~al.(2008)Frigyik, Srivastava, and
  Gupta]{frigyik2008functional}
B.~A. Frigyik, S.~Srivastava, and M.~R. Gupta.
\newblock Functional bregman divergence and bayesian estimation of
  distributions.
\newblock \emph{IEEE Transactions on Information Theory}, 54\penalty0
  (11):\penalty0 5130--5139, 2008.

\bibitem[Genevay et~al.(2018)Genevay, Peyr{\'e}, and
  Cuturi]{genevay2018learning}
A.~Genevay, G.~Peyr{\'e}, and M.~Cuturi.
\newblock Learning generative models with sinkhorn divergences.
\newblock In \emph{International Conference on Artificial Intelligence and
  Statistics}, pages 1608--1617. PMLR, 2018.

\bibitem[Goodfellow et~al.(2020)Goodfellow, Pouget-Abadie, Mirza, Xu,
  Warde-Farley, Ozair, Courville, and Bengio]{goodfellow2020generative}
I.~Goodfellow, J.~Pouget-Abadie, M.~Mirza, B.~Xu, D.~Warde-Farley, S.~Ozair,
  A.~Courville, and Y.~Bengio.
\newblock Generative adversarial networks.
\newblock \emph{Communications of the ACM}, 63\penalty0 (11):\penalty0
  139--144, 2020.

\bibitem[Goodfellow et~al.(2014)Goodfellow, Pouget-Abadie, Mirza, Xu,
  Warde-Farley, Ozair, Courville, and Bengio]{goodfellow2014generative}
I.~J. Goodfellow, J.~Pouget-Abadie, M.~Mirza, B.~Xu, D.~Warde-Farley, S.~Ozair,
  A.~Courville, and Y.~Bengio.
\newblock Generative adversarial nets.
\newblock \emph{Advances in neural information processing systems}, 27, 2014.

\bibitem[Gordon(1999)]{Gordon1999}
G.~J. Gordon.
\newblock Regret bounds for prediction problems.
\newblock In \emph{Proceedings of the {T}welfth {A}nnual {C}onference on
  {C}omputational {L}earning {T}heory ({S}anta {C}ruz, {CA}, 1999)}, pages
  29--40. ACM, New York, 1999.
\newblock ISBN 1-58113-167-4.

\bibitem[Gretton et~al.(2012)Gretton, Borgwardt, Rasch, Sch{\"o}lkopf, and
  Smola]{gretton2012kernel}
A.~Gretton, K.~M. Borgwardt, M.~J. Rasch, B.~Sch{\"o}lkopf, and A.~Smola.
\newblock A kernel two-sample test.
\newblock \emph{The journal of machine learning research}, 13\penalty0
  (1):\penalty0 723--773, 2012.

\bibitem[Guo et~al.(2022)Guo, Hur, Liang, and Ryan]{guo2022online}
W.~Guo, Y.~Hur, T.~Liang, and C.~Ryan.
\newblock Online learning to transport via the minimal selection principle.
\newblock In \emph{Conference on Learning Theory}, pages 4085--4109. PMLR,
  2022.

\bibitem[Han et~al.(2025)Han, Kim, YOO, and Zhang]{han2025variational}
D.-S. Han, J.~Kim, H.~B. YOO, and B.-T. Zhang.
\newblock Variational mirror descent for robust learning in schr\"odinger
  bridge.
\newblock 2025.

\bibitem[He et~al.(2016)He, Zhang, Ren, and Sun]{he2016deep}
K.~He, X.~Zhang, S.~Ren, and J.~Sun.
\newblock Deep residual learning for image recognition.
\newblock In \emph{Proceedings of the IEEE conference on computer vision and
  pattern recognition}, pages 770--778, 2016.

\bibitem[Hur et~al.(2024)Hur, Guo, and Liang]{hur2024reversible}
Y.~Hur, W.~Guo, and T.~Liang.
\newblock Reversible gromov--monge sampler for simulation-based inference.
\newblock \emph{SIAM Journal on Mathematics of Data Science}, 6\penalty0
  (2):\penalty0 283--310, 2024.

\bibitem[Hyv\"arinen(2005)]{Hyvarinen2005}
A.~Hyv\"arinen.
\newblock Estimation of non-normalized statistical models by score matching.
\newblock \emph{J. Mach. Learn. Res.}, 6:\penalty0 695--709, 2005.
\newblock ISSN 1532-4435,1533-7928.

\bibitem[Jiang et~al.(2024)Jiang, Chewi, and Pooladian]{jiang2024algorithms}
Y.~Jiang, S.~Chewi, and A.-A. Pooladian.
\newblock Algorithms for mean-field variational inference via polyhedral
  optimization in the wasserstein space.
\newblock In \emph{The Thirty Seventh Annual Conference on Learning Theory},
  pages 2720--2721. PMLR, 2024.

\bibitem[Karimi et~al.(2024)Karimi, Hsieh, and Krause]{karimi2024sinkhorn}
M.~R. Karimi, Y.-P. Hsieh, and A.~Krause.
\newblock Sinkhorn flow as mirror flow: A continuous-time framework for
  generalizing the sinkhorn algorithm.
\newblock In \emph{International Conference on Artificial Intelligence and
  Statistics}, pages 4186--4194. PMLR, 2024.

\bibitem[Katsevich and Rigollet(2024)]{Katsevich2024}
A.~Katsevich and P.~Rigollet.
\newblock On the approximation accuracy of {G}aussian variational inference.
\newblock \emph{Ann. Statist.}, 52\penalty0 (4):\penalty0 1384--1409, 2024.
\newblock ISSN 0090-5364,2168-8966.

\bibitem[Kim et~al.(2012)Kim, Streets, and Warren]{kim2012parabolic}
Y.-H. Kim, J.~Streets, and M.~Warren.
\newblock Parabolic optimal transport equations on manifolds.
\newblock \emph{International Mathematics Research Notices}, 2012\penalty0
  (19):\penalty0 4325--4350, 2012.

\bibitem[Knott and Smith(1984)]{knott1984optimal}
M.~Knott and C.~S. Smith.
\newblock On the optimal mapping of distributions.
\newblock \emph{Journal of Optimization Theory and Applications}, 43:\penalty0
  39--49, 1984.

\bibitem[Lan et~al.(2011)Lan, Lu, and Monteiro]{lan2011primal}
G.~Lan, Z.~Lu, and R.~D. Monteiro.
\newblock Primal-dual first-order methods with iteration-complexity for cone
  programming.
\newblock \emph{Mathematical Programming}, 126\penalty0 (1):\penalty0 1--29,
  2011.

\bibitem[L{\'e}ger(2021)]{leger2021gradient}
F.~L{\'e}ger.
\newblock A gradient descent perspective on sinkhorn.
\newblock \emph{Applied Mathematics \& Optimization}, 84\penalty0 (2):\penalty0
  1843--1855, 2021.

\bibitem[Li(2021)]{li2021transport}
W.~Li.
\newblock Transport information bregman divergences.
\newblock \emph{Information Geometry}, 4\penalty0 (2):\penalty0 435--470, 2021.

\bibitem[Li(2017)]{li2017preconditioned}
X.-L. Li.
\newblock Preconditioned stochastic gradient descent.
\newblock \emph{IEEE transactions on neural networks and learning systems},
  29\penalty0 (5):\penalty0 1454--1466, 2017.

\bibitem[Liang(2021)]{liang2021well}
T.~Liang.
\newblock How well generative adversarial networks learn distributions.
\newblock \emph{Journal of Machine Learning Research}, 22\penalty0
  (228):\penalty0 1--41, 2021.

\bibitem[Liang et~al.(2026)Liang, Dharmakeerthi, and
  Koriyama]{liang2024denoising}
T.~Liang, K.~Dharmakeerthi, and T.~Koriyama.
\newblock Denoising diffusions with optimal transport: Localization, curvature,
  and multi-scale complexity.
\newblock \emph{Transactions on Machine Learning Research}, 2026.
\newblock ISSN 2835-8856.

\bibitem[Liero et~al.(2023)Liero, Mielke, and Savar{\'e}]{liero2023fine}
M.~Liero, A.~Mielke, and G.~Savar{\'e}.
\newblock Fine properties of geodesics and geodesic $\lambda$-convexity for the
  hellinger--kantorovich distance.
\newblock \emph{Archive for Rational Mechanics and Analysis}, 247\penalty0
  (6):\penalty0 112, 2023.

\bibitem[Lim et~al.(2023)Lim, Yoon, Byun, Kang, Kim, Lee, and
  Choi]{song2020score}
S.~Lim, E.~Yoon, T.~Byun, T.~Kang, S.~Kim, K.~Lee, and S.~Choi.
\newblock Score-based generative modeling through stochastic evolution
  equations in hilbert spaces.
\newblock 2023.

\bibitem[Lu et~al.(2018)Lu, Freund, and Nesterov]{lu2018relatively}
H.~Lu, R.~M. Freund, and Y.~Nesterov.
\newblock Relatively smooth convex optimization by first-order methods, and
  applications.
\newblock \emph{SIAM Journal on Optimization}, 28\penalty0 (1):\penalty0
  333--354, 2018.

\bibitem[L{\"u}beck et~al.(2022)L{\"u}beck, Bunne, Gut, del Castillo, Pelkmans,
  and Alvarez-Melis]{lubeck2022neural}
F.~L{\"u}beck, C.~Bunne, G.~Gut, J.~S. del Castillo, L.~Pelkmans, and
  D.~Alvarez-Melis.
\newblock Neural unbalanced optimal transport via cycle-consistent
  semi-couplings.
\newblock \emph{arXiv preprint arXiv:2209.15621}, 2022.

\bibitem[Ma et~al.(2021)Ma, Chatterji, Cheng, Flammarion, Bartlett, and
  Jordan]{Ma2021}
Y.-A. Ma, N.~S. Chatterji, X.~Cheng, N.~Flammarion, P.~L. Bartlett, and M.~I.
  Jordan.
\newblock {Is there an analog of Nesterov acceleration for gradient-based
  MCMC?}
\newblock \emph{Bernoulli}, 27\penalty0 (3):\penalty0 1942 -- 1992, 2021.

\bibitem[McCann(1995)]{McCann1995}
R.~J. McCann.
\newblock Existence and uniqueness of monotone measure-preserving maps.
\newblock \emph{Duke Math. J.}, 80\penalty0 (2):\penalty0 309--323, 1995.
\newblock ISSN 0012-7094,1547-7398.

\bibitem[M{\'e}rigot(2016)]{merigotdiscretization}
Q.~M{\'e}rigot.
\newblock Discretization of euler’s equations using optimal transport: Cauchy
  and boundary value problems.
\newblock \emph{S{\'e}minaire Laurent Schwartz—EDP et applications}, pages
  1--12, 2016.

\bibitem[Monge(1784)]{monge1784memoire}
G.~Monge.
\newblock \emph{M{\'e}moire sur le calcul int{\'e}gral des {\'e}quations aux
  diff{\'e}rences partielles}.
\newblock Imprimerie royale, 1784.

\bibitem[Moosm{\"u}ller and Cloninger(2023)]{moosmuller2023linear}
C.~Moosm{\"u}ller and A.~Cloninger.
\newblock Linear optimal transport embedding: provable wasserstein
  classification for certain rigid transformations and perturbations.
\newblock \emph{Information and Inference: A Journal of the IMA}, 12\penalty0
  (1):\penalty0 363--389, 2023.

\bibitem[Planck(1917)]{planck1917satz}
V.~Planck.
\newblock {\"U}ber einen satz der statistischen dynamik und seine erweiterung
  in der quantentheorie.
\newblock \emph{Sitzungberichte der}, 1917.

\bibitem[R{\"u}schendorf(1995)]{ruschendorf1995convergence}
L.~R{\"u}schendorf.
\newblock Convergence of the iterative proportional fitting procedure.
\newblock \emph{The Annals of Statistics}, pages 1160--1174, 1995.

\bibitem[R{\"u}schendorf and Thomsen(1993)]{ruschendorf1993note}
L.~R{\"u}schendorf and W.~Thomsen.
\newblock Note on the schr{\"o}dinger equation and i-projections.
\newblock \emph{Statistics \& probability letters}, 17\penalty0 (5):\penalty0
  369--375, 1993.

\bibitem[Salim et~al.(2020)Salim, Korba, and Luise]{salim2020wasserstein}
A.~Salim, A.~Korba, and G.~Luise.
\newblock The wasserstein proximal gradient algorithm.
\newblock \emph{Advances in Neural Information Processing Systems},
  33:\penalty0 12356--12366, 2020.

\bibitem[Santambrogio(2015)]{Santambrogio2015}
F.~Santambrogio.
\newblock \emph{Optimal transport for applied mathematicians}, volume~87 of
  \emph{Progress in Nonlinear Differential Equations and their Applications}.
\newblock Birkh\"auser/Springer, Cham, 2015.
\newblock ISBN 978-3-319-20827-5; 978-3-319-20828-2.
\newblock Calculus of variations, PDEs, and modeling.

\bibitem[Santambrogio(2017)]{santambrogio2017euclidean}
F.~Santambrogio.
\newblock Euclidean, metric, and {W}asserstein gradient flows: an overview.
\newblock \emph{Bulletin of Mathematical Sciences}, 7:\penalty0 87--154, 2017.

\bibitem[Schr{\"o}dinger(1935)]{schrodinger1935present}
E.~Schr{\"o}dinger.
\newblock The present status of quantum mechanics.
\newblock \emph{Die Naturwissenschaften}, 23\penalty0 (48):\penalty0 1--26,
  1935.

\bibitem[Shalev-Shwartz(2012)]{shalev2012online}
S.~Shalev-Shwartz.
\newblock Online learning and online convex optimization.
\newblock \emph{Foundations and Trends{\textregistered} in Machine Learning},
  4\penalty0 (2):\penalty0 107--194, 2012.

\bibitem[Shun and McCullagh(1995)]{shun1995laplace}
Z.~Shun and P.~McCullagh.
\newblock Laplace approximation of high dimensional integrals.
\newblock \emph{Journal of the Royal Statistical Society Series B: Statistical
  Methodology}, 57\penalty0 (4):\penalty0 749--760, 1995.

\bibitem[Song and Ermon(2019)]{song2019generative}
Y.~Song and S.~Ermon.
\newblock Generative modeling by estimating gradients of the data distribution.
\newblock \emph{Advances in Neural Information Processing Systems}, 32, 2019.

\bibitem[Song et~al.(2021)Song, Durkan, Murray, and Ermon]{song2021maximum}
Y.~Song, C.~Durkan, I.~Murray, and S.~Ermon.
\newblock Maximum likelihood training of score-based diffusion models.
\newblock \emph{Advances in Neural Information Processing Systems},
  34:\penalty0 1415--1428, 2021.

\bibitem[Sulman et~al.(2011{\natexlab{a}})Sulman, Williams, and
  Russell]{sulman2011optimal}
M.~Sulman, J.~Williams, and R.~D. Russell.
\newblock Optimal mass transport for higher dimensional adaptive grid
  generation.
\newblock \emph{Journal of computational physics}, 230\penalty0 (9):\penalty0
  3302--3330, 2011{\natexlab{a}}.

\bibitem[Sulman et~al.(2021)Sulman, Nguyen, Haynes, and
  Huang]{sulman2021domain}
M.~H. Sulman, T.~B. Nguyen, R.~D. Haynes, and W.~Huang.
\newblock Domain decomposition parabolic monge--amp{\`e}re approach for fast
  generation of adaptive moving meshes.
\newblock \emph{Computers \& Mathematics with Applications}, 84:\penalty0
  97--111, 2021.

\bibitem[Sulman et~al.(2011{\natexlab{b}})Sulman, Williams, and
  Russell]{sulman2011efficient}
M.~M. Sulman, J.~Williams, and R.~D. Russell.
\newblock An efficient approach for the numerical solution of the
  monge--amp{\`e}re equation.
\newblock \emph{Applied Numerical Mathematics}, 61\penalty0 (3):\penalty0
  298--307, 2011{\natexlab{b}}.

\bibitem[Sz{\'e}kely and Rizzo(2013)]{szekely2013energy}
G.~J. Sz{\'e}kely and M.~L. Rizzo.
\newblock Energy statistics: A class of statistics based on distances.
\newblock \emph{Journal of statistical planning and inference}, 143\penalty0
  (8):\penalty0 1249--1272, 2013.

\bibitem[Tzen et~al.(2023)Tzen, Raj, Raginsky, and Bach]{Tzen2023}
B.~Tzen, A.~Raj, M.~Raginsky, and F.~Bach.
\newblock Variational principles for mirror descent and mirror {L}angevin
  dynamics.
\newblock \emph{IEEE Control Syst. Lett.}, 7:\penalty0 1542--1547, 2023.
\newblock ISSN 2475-1456.

\bibitem[Villani(2009)]{villani2009optimal}
C.~Villani.
\newblock \emph{Optimal transport: old and new}, volume 338.
\newblock Springer, 2009.

\bibitem[Wainwright and Jordan(2008)]{wainwright2008graphical}
M.~J. Wainwright and M.~I. Jordan.
\newblock Graphical models, exponential families, and variational inference.
\newblock \emph{Foundations and Trends{\textregistered} in Machine Learning},
  1\penalty0 (1--2):\penalty0 1--305, 2008.

\bibitem[Wang et~al.(2021)Wang, Jiao, Xu, Wang, and Yang]{wang2021deep}
G.~Wang, Y.~Jiao, Q.~Xu, Y.~Wang, and C.~Yang.
\newblock Deep generative learning via schr{\"o}dinger bridge.
\newblock In \emph{International conference on machine learning}, pages
  10794--10804. PMLR, 2021.

\bibitem[Wang et~al.(2013)Wang, Slep{\v{c}}ev, Basu, Ozolek, and
  Rohde]{wang2013linear}
W.~Wang, D.~Slep{\v{c}}ev, S.~Basu, J.~A. Ozolek, and G.~K. Rohde.
\newblock A linear optimal transportation framework for quantifying and
  visualizing variations in sets of images.
\newblock \emph{International journal of computer vision}, 101:\penalty0
  254--269, 2013.

\bibitem[Zhang et~al.(2023)Zhang, Chewi, Li, Balasubramanian, and
  Erdogdu]{zhang2023improved}
S.~Zhang, S.~Chewi, M.~Li, K.~Balasubramanian, and M.~A. Erdogdu.
\newblock Improved discretization analysis for underdamped langevin monte
  carlo.
\newblock In \emph{The Thirty Sixth Annual Conference on Learning Theory},
  pages 36--71. PMLR, 2023.

\end{thebibliography}

	\section*{Appendix} 
	\begin{appendix}
		The Appendix is organized as follows: 
		\begin{itemize}
			\item \cref{sec:pfpmainmotapp} proves the results from \cref{sec:pmainmotapp}.
			\item \cref{sec:Bregp} proves the results from \cref{sec:Bregman}.
			\item \cref{sec:pfmainres} proves the results from \cref{sec:regcon}.
			\item \cref{sec:pfappl} proves the results from \cref{sec:applications}.
			\item \cref{sec:numvar} contains additional discussions on implementation including (a) a numerical experiment that illustrates the performance of the Gaussian variational inference procedure, namely \cref{alg:VI}, and (b) a distillation-free computationally tractable approach that avoids long backpropagation chains. 
		\end{itemize}
		
		\section{Proofs for Section~\ref{sec:pmainmotapp}}\label{sec:pfpmainmotapp}
		
		\noindent First we present the rate of convergence of the discretization \eqref{eq:discretedual}, for the univariate Gaussian case from \cref{sec:uniGill}. The main purpose here is to investigate the impact of the parameter $\lambda$ in the rate of convergence ``locally". 
		
		\begin{prop}\label{prop:uniGauss}
			Suppose that $e^{-f}$ and $e^{-g}$ are densities of centered univariate Gaussians, say $N(0,1)$ and $N(0,\lambda^2)$. Choose constant step size $\eta_k\equiv \eta$ for all $k\ge 0$ and some $\eta\in (0,\lambda)$. Let $\upsilon \in (|1-2\eta/\lambda|,1)$ and $c_0\in (\lambda^{-1}-\delta,\lambda^{-1}+\delta)$ with $\delta:=\min\{2\eta-\lambda(1-\upsilon),\lambda(1+\upsilon)-2\eta\}/2\eta\lambda>0$. Suppose \eqref{eq:discretedual} is initialized with $\psi_0(y)=c_0y^2/2$. Then the following conclusions hold:
			\begin{enumerate}
				\item For each $k\ge 1$, $\psi_k'(y)=c_k y$ where each $c_k\in ((1-\upsilon)/2\eta,(1+\upsilon)/2\eta)$. 
				\item $|c_k-\lambda^{-1}|\le \upsilon^k |c_0-\lambda^{-1}|$.
				\item Recall that $\rho_k=(\psi_k')\#e^{-g}$ (see \eqref{eq:measupdate}), then each $\rho_k$ is the density of $N(0,\sigma_k^2)$ where $|\sigma_k-1|\le \upsilon^k |\sigma_0-1|$.
			\end{enumerate}
		\end{prop}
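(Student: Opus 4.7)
The plan is to exploit that the linear-in-$y$ structure of $\psi_0'$ is preserved by the update, so the whole problem reduces to a scalar fixed-point iteration. First I would show by induction on $k$ that $\psi_k'(y) = c_k y$ for a sequence of positive scalars $c_k$. Plugging $\psi_k'(y) = c_ky$, $\psi_k''(y) = c_k$, $\psi_k'''(y) = 0$ into the derivative of the update \eqref{eq:discretedual} with $f(x) = x^2/2 + \text{const}$ and $g(y) = y^2/(2\lambda^2) + \text{const}$, the $\log\det$ term drops and I obtain the recursion
\[
c_{k+1} \;=\; c_k + \eta\bigl(\lambda^{-2} - c_k^2\bigr) \;=\; c_k + \eta\bigl(\lambda^{-1} - c_k\bigr)\bigl(\lambda^{-1} + c_k\bigr).
\]
Setting $e_k := c_k - \lambda^{-1}$ this factors as $e_{k+1} = e_k\bigl(1 - \eta(c_k + \lambda^{-1})\bigr)$, which reduces everything to controlling the multiplier $1 - \eta(c_k + \lambda^{-1})$.

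The second step is to verify that, whenever $c_k$ lies in the interval $I_\upsilon := \bigl((1-\upsilon)/(2\eta),\,(1+\upsilon)/(2\eta)\bigr)$, the multiplier satisfies $|1 - \eta(c_k + \lambda^{-1})| \le \upsilon$. Writing $\eta(c_k + \lambda^{-1}) = \eta c_k + \eta/\lambda \in \bigl((1-\upsilon)/2 + \eta/\lambda,\,(1+\upsilon)/2 + \eta/\lambda\bigr)$, the inclusion of this interval in $[1-\upsilon, 1+\upsilon]$ is exactly equivalent to $|1 - 2\eta/\lambda| \le \upsilon$, which is the standing hypothesis. This gives one-step contraction $|e_{k+1}| \le \upsilon |e_k|$ \emph{conditional on} $c_k \in I_\upsilon$.

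The key remaining point is to propagate $c_k \in I_\upsilon$ indefinitely, which is exactly what $\delta$ is engineered for. Note that $\delta$ is nothing but the distance from $\lambda^{-1}$ to the nearest endpoint of $I_\upsilon$, so the symmetric interval $(\lambda^{-1} - \delta, \lambda^{-1} + \delta)$ lies in $I_\upsilon$. By induction: if $|e_k| < \delta$ then $c_k \in I_\upsilon$, the contraction step applies, and $|e_{k+1}| \le \upsilon|e_k| < \delta$, so $c_{k+1}$ is again in $I_\upsilon$. The base case holds by hypothesis on $c_0$. Iterating yields $|c_k - \lambda^{-1}| \le \upsilon^k|c_0 - \lambda^{-1}|$ together with $c_k \in I_\upsilon$ for all $k$, giving conclusions 1 and 2.

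Conclusion 3 is then immediate: since $Y \sim N(0,\lambda^2)$ and $\psi_k'(y) = c_ky$, the push-forward $\rho_k$ is $N(0, c_k^2\lambda^2)$, so $\sigma_k = c_k\lambda$ and
\[
|\sigma_k - 1| \;=\; \lambda|c_k - \lambda^{-1}| \;\le\; \lambda \upsilon^k |c_0 - \lambda^{-1}| \;=\; \upsilon^k|\sigma_0 - 1|.
\]
I expect the only subtle part to be the bookkeeping between the symmetric interval $(\lambda^{-1}-\delta,\lambda^{-1}+\delta)$ (where the contraction is actually iterable) and the asymmetric interval $I_\upsilon$ (where a single-step contraction is valid); everything else is a direct calculation, and the strong convexity of $\psi_k$ (positivity of $c_k$) drops out of $c_k \in I_\upsilon$ since the left endpoint $(1-\upsilon)/(2\eta)$ is positive for $\upsilon < 1$.
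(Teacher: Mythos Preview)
Your proposal is correct and essentially identical to the paper's proof. The only cosmetic difference is that the paper writes $c_{k+1}=\theta(c_k)$ with $\theta(z)=z-\eta(z^2-\lambda^{-2})$ and invokes the mean value theorem ($|\theta'(z)|=|1-2\eta z|\le\upsilon$ on $I_\upsilon$), whereas you use the explicit factorization $e_{k+1}=e_k(1-\eta(c_k+\lambda^{-1}))$; since $\theta$ is quadratic these are literally the same multiplier (your $(c_k+\lambda^{-1})/2$ is the exact MVT point), and the rest of the induction, the interpretation of $\delta$ as the distance from $\lambda^{-1}$ to $\partial I_\upsilon$, and the passage to $\sigma_k=c_k\lambda$ all match the paper.
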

		
		\begin{proof}[Proof of \cref{prop:uniGauss}]
			We first show the existence of $\upsilon,\delta$ satisfying the conditions of the proposition. As $\eta\in (0,\lambda)$, we have $|1-2\eta/\lambda|<1$ and consequently, there exists $\upsilon\in (|1-2\eta/\lambda|,1)$. This also implies that both $2\eta-\lambda(1-\upsilon)>0$ and $\lambda(1+\upsilon)-2\eta>0$, which in turn  implies $\delta=\min\{2\eta-\lambda(1-\upsilon),\lambda(1+\upsilon)-2\eta\}/2\eta\lambda>0$. 
			
			The rest of the proof proceeds by induction. Suppose the conclusions in \cref{prop:uniGauss}, parts 1,2, and 3 hold up to some $k$. As the conclusions hold for $k=0$, it suffices to show that the conclusions hold at $k+1$. By \eqref{eq:discretedual}, we have: 
			$$\psi_{k+1}'(y)=c_k y-\eta\left(c_k^2y-\frac{y}{\lambda^2}\right)=\left(c_k-\eta\left(c_k^2-\frac{1}{\lambda^2}\right)\right)y.$$
			This shows $\psi'_{k+1}(\cdot)$ is also linear. Writing $\psi_{k+1}'(y)=c_{k+1}y$, we then get the evolution equation $c_{k+1}=c_k-\eta(c_k^2-\lambda^{-2})$. This equation can alternatively be expressed as $c_{k+1}=\theta(c_k)$ where the function $\theta:\R\to\R$ is given by $\theta(z)=z-\eta(z^2-\lambda^{-2})$. We will use the following two properties of $\theta$ multiple times in the rest of the proof: 
			\begin{equation}\label{eq:thetaprop}
				\theta(\lambda^{-1})=\lambda^{-1} \quad \mbox{and} \quad |\theta'(z)|\le \upsilon \,\, \mbox{for all } z\in ((1-\upsilon)/2\eta,(1+\upsilon)/2\eta).
			\end{equation}
			Therefore by \eqref{eq:thetaprop}, we have: 
			$|c_{k+1}-\lambda^{-1}|=|\theta(c_k)-\theta(\lambda^{-1})|=|\theta'(\vartheta_k)||c_k-\lambda^{-1}|$ 
			where $\vartheta_k$ lies on the line joining $c_k$ and $\lambda^{-1}$. By induction hypothesis $c_k\in ((1-\upsilon)/2\eta,(1+\upsilon)/2\eta)$. It is easy to check that $\upsilon\in (|1-2\eta/\lambda|,1)$ implies $\lambda^{-1}\in ((1-\upsilon)/2\eta,(1+\upsilon)/2\eta)$. Therefore $\vartheta_k\in ((1-\upsilon)/2\eta,(1+\upsilon)/2\eta)$ and consequently by \eqref{eq:thetaprop}, $|\theta'(\vartheta_k)|\le \upsilon$. As a result, 
			$$|c_{k+1}-\lambda^{-1}|\le \upsilon |c_k-\lambda^{-1}|\le \upsilon\times \upsilon^{k}|c_0-\lambda^{-1}|=\upsilon^{k+1}|c_0-\lambda^{-1}|.$$
			In the above display, we have used the induction hypothesis $|c_k-\lambda^{-1}|\le \upsilon^k |c_0-\lambda^{-1}|$. This establishes part 2 of \cref{prop:uniGauss}. Moreover, the above display also yields $|c_{k+1}-\lambda^{-1}|\le |c_0-\lambda^{-1}|\le \delta$ where $\delta>0$ is defined in the proposition as $\delta=\min\{2\eta-\lambda(1-\upsilon),\lambda(1+\upsilon)-2\eta\}/2\eta\lambda$. So $c_{k+1}\in (\lambda^{-1}-\delta,\lambda^{-1}+\delta)\subseteq ((1-\upsilon)/2\eta,(1+\upsilon)/2\eta)$. This establishes part 1 of the proposition. Finally, as $\psi_k'(\cdot)$s are all linear and $\psi_k'(0)=0$ for all $k$, we immediately get that $\rho_k$ is the density of $N(0,\sigma_k^2)$ where $\sigma_k=c_k\lambda$. Therefore, 
			$$|\sigma_{k+1}-1|=|(c_{k+1}-\lambda^{-1})\lambda|\le \upsilon |\sigma_k-1| \le \upsilon\times \upsilon^k |\sigma_0-1|=\upsilon^{k+1}|\sigma_0-1|.$$
			This establishes part 3 of the proposition.
		\end{proof}
		
		\noindent \cref{prop:uniGauss} proves a locally exponential rate of convergence to the target distribution $N(0,1)$. In other words, once the algorithm reaches a sufficiently ``local ball" around the target, the convergence is exponentially fast. To fix ideas, let us fix the step-size $\eta$. Then by choosing $\lambda \approx 2\eta$, we have $1-2\eta/\lambda \approx 0$. This means we can choose $\upsilon\approx 0$ resulting in fast convergence. However the trade-off is that $\upsilon\approx 0$ and $\lambda\approx 2\eta$ implies $\delta\approx 0$, i.e., the local convergence radius needs to be very close to the target in order to witness such fast rates. 
		
		Next we will discuss the proof of \cref{prop:Sinkapprox}. We begin with an elementary observation for strictly convex functions from $\R^d\to\R$. We refer the reader to \cite[Lemma 2.3]{berman2020sinkhorn} for a proof.
		\begin{prop}\label{obs:elemprop}
			Suppose $\psi:\R^d\to\R$ is a strictly convex function and $\psi^*$ is its Fenchel dual. Then $\nabla\psi^*(\nabla\psi(y))=y$, $\nabla^2\psi(y)=(\nabla^2 \psi^*(\nabla\psi(y))^{-1}$, and $\psi^*(\nabla\psi(y))+\psi(y)=\langle y,\nabla\psi(y)\rangle$, for all $y\in\R^d$.
		\end{prop}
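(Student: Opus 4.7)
The plan is to work directly from the definition of the Fenchel dual, namely $\psi^*(x) = \sup_{y' \in \R^d} (\langle x,y'\rangle - \psi(y'))$, and exploit strict convexity to ensure that the supremum is attained at a unique maximizer. I would begin with the third identity since it is the most direct: fix $y \in \R^d$ and consider the concave function $y' \mapsto \langle \nabla\psi(y), y'\rangle - \psi(y')$. Strict convexity of $\psi$ makes this strictly concave, and the first-order condition $\nabla\psi(y') = \nabla\psi(y)$ is satisfied uniquely at $y' = y$. Hence $\psi^*(\nabla\psi(y)) = \langle \nabla\psi(y), y\rangle - \psi(y)$, which rearranges to the Fenchel--Young equality $\psi^*(\nabla\psi(y)) + \psi(y) = \langle y, \nabla\psi(y)\rangle$.

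Next, for the first identity, I would use the envelope-theorem-style argument. Writing $y(x)$ for the unique maximizer in the definition of $\psi^*(x)$, we have $\psi^*(x) = \langle x, y(x)\rangle - \psi(y(x))$, and differentiating in $x$ yields
\begin{align*}
\nabla \psi^*(x) = y(x) + \bigl(\nabla y(x)\bigr)^\top \bigl(x - \nabla\psi(y(x))\bigr) = y(x),
\end{align*}
since $x = \nabla\psi(y(x))$ by the first-order condition. Setting $x = \nabla \psi(y)$ (so that $y(x) = y$ by the uniqueness argument of the previous step) gives $\nabla\psi^*(\nabla\psi(y)) = y$. Alternatively, one can avoid differentiating under the sup by invoking the standard convex-analysis fact that $\partial\psi^*(x) = \{y : x \in \partial\psi(y)\}$, which for differentiable strictly convex $\psi$ reduces to the claimed identity.

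Finally, for the Hessian identity, I would differentiate the relation $\nabla\psi^*(\nabla\psi(y)) = y$ coordinate-wise in $y$. Applying the chain rule to both sides produces
\begin{align*}
\nabla^2 \psi^*(\nabla\psi(y))\, \nabla^2 \psi(y) = I,
\end{align*}
which gives $\nabla^2 \psi(y) = \bigl(\nabla^2 \psi^*(\nabla\psi(y))\bigr)^{-1}$ after verifying that $\nabla^2 \psi(y)$ is invertible, which follows from strict convexity.

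The main obstacle here is a mild regularity issue: the envelope-theorem step requires differentiability of the argmax map $x \mapsto y(x)$, and the Hessian identity requires $\psi^*$ to be twice differentiable at $\nabla\psi(y)$. Under the stated hypothesis of strict convexity alone these are not automatic, but they are standard consequences of strict convexity combined with $\mathcal{C}^2$ regularity (which is the setting in which this observation is invoked in the main text). The cleanest path is to cite the standard convex-analysis reference (e.g.\ \cite[Lemma 2.3]{berman2020sinkhorn}, as noted in the excerpt), and note that the three assertions are algebraic consequences of the Fenchel--Young equality and the inverse function theorem applied to $\nabla\psi$.
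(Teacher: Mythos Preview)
The paper does not actually give its own proof of this proposition; it simply refers the reader to \cite[Lemma~2.3]{berman2020sinkhorn}. Your argument is the standard convex-analysis derivation and is essentially what that reference (and any textbook treatment) does: establish Fenchel--Young equality via the first-order condition, read off $\nabla\psi^* = (\nabla\psi)^{-1}$, and differentiate once more for the Hessian relation. So there is nothing to compare against, and your write-up is fine as a self-contained replacement for the citation.

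One small correction: you write that invertibility of $\nabla^2\psi(y)$ ``follows from strict convexity,'' but this is false in general (e.g.\ $\psi(y)=y^4$ at $y=0$). Strict convexity gives injectivity of $\nabla\psi$, not nondegeneracy of $\nabla^2\psi$. What you actually need---and what the paper in fact uses whenever this proposition is invoked---is that $\psi$ is $\mathcal{C}^2$ and \emph{strongly} convex (or at least has positive-definite Hessian), in which case the inverse function theorem gives $\nabla\psi^*\in\mathcal{C}^1$ and the chain-rule computation goes through. You already flag the regularity gap in your final paragraph, so just tighten the wording there to say ``strong convexity'' rather than ``strict convexity'' and the argument is clean.
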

		
		\begin{proof}[Proof of \cref{prop:Sinkapprox}]
			Let $\phi=\psi^*$ denote the Fenchel dual of $\psi$. Observe the following identity:
			\begin{align}\label{eq:pmacondis1}
				\exp(\frac{\langle x, y \rangle - \psi(y)}{\epsilon}) = \exp( - \frac{D_{\psi}(y | \nabla \phi(x))}{\epsilon}) \exp( \frac{\phi(x)}{\epsilon} )
			\end{align}
			As a result, we get: 
			\begin{align*}
				\exp(\frac{\tilde{\psi}^{\epsilon}(y) - \psi(y)}{\epsilon}) \nonumber
				&= \int \frac{\exp(\frac{\langle x, y \rangle - \psi(y)}{\epsilon})}{ \int \exp(\frac{\langle x, y' \rangle - \psi(y')}{\epsilon}) \exp( - g(y'))  \dd y'} \exp( - f(x)) \dd x \nonumber \\
				&=\int \frac{\exp( - \frac{D_{\psi}(y | \nabla \phi(x))}{\epsilon}-f(x))}{ \int \exp( - \frac{D_{\psi}(y' | \nabla \phi(x))}{\epsilon}) \exp( - g(y'))  \dd y'}  \dd x \nonumber \\ &=\int \frac{\exp( - \frac{D_{\phi}(x | \nabla \psi(y))}{\epsilon}-f(x))}{ \int \exp( - \frac{D_{\psi}(y' | \nabla \phi(x))}{\epsilon}) \exp( - g(y'))  \dd y'}\,\dd x, 
			\end{align*}
			where the second to last equality uses \eqref{eq:pmacondis1}, whereas the last equality uses
			fact that $D_{\psi}(y|\nabla \phi(x))=D_{\phi}(x|\nabla\psi(y))$.  
			Next, by using the Laplace approximation (see \cite{shun1995laplace}), we note that 
			\begin{align*}
				\lim_{\epsilon\to 0}(2\pi\epsilon)^{-d/2}\int\exp( - \frac{D_{\psi}(y' | \nabla \phi(x))}{\epsilon}) \exp(-g(y')) \,\dd y' = \exp(-g(\nabla \phi(x)))  | \det ( \nabla^2 \psi (\nabla \phi(x) )) |^{-1/2}
			\end{align*}
			uniformly in $x$. By combining the above displays with another application of the Laplace approximation, we have
	
			\begin{align*}
				\int \frac{\exp( - \frac{D_{\phi} ( x | \nabla \psi(y) ) }{\epsilon}) \exp( - f(x)) }{ \int \exp( - \frac{D_{\psi}(y' | \nabla \phi(x))}{\epsilon}) \exp( - g(y'))\dd y'}\dd x &
				 = \left(\int \frac{\exp( - \frac{D_{\phi} ( x | \nabla \psi(y) )}{\epsilon}) \exp( - f(x)) }{ \exp(-g(\nabla \phi(x))) \cdot (2\pi \epsilon)^{d/2} | \det ( \nabla^2 \psi (\nabla \phi(x) )) |^{-1/2} }\dd x\right)(1 + o(1))
				\\
				& = \exp\left(-f(\nabla \psi(y)) + g(y) -  \frac{1}{2}\log\det\left((\nabla^2 \psi(y))^{-1} \nabla^2\phi( \nabla\psi(y ))\right)\right)(1+ o(1))
			\end{align*}		
			as $\epsilon\to 0$, where the last line uses the fact $\nabla \psi = ( \nabla \phi)^{-1}$. The conclusion now follows from \cref{obs:elemprop}.
		\end{proof}
		
		\section{Proofs for Section~\ref{sec:Bregman}}\label{sec:Bregp}
		In this section, we will prove the two main results, namely \cref{lem:mirrid} and \cref{lem:unineq}, followed by two auxiliary lemmas on some fundamental properties of the Bregman divergence $B_G(\cdot|\cdot)$ (see \cref{eg:mirror-G}), that will help prove the main results. First we state the two supporting lemmas. Recall that given a probability density $\rho\in\ptac(\R^d)$, we write $\phi_{\rho}$ to denote a Brenier potential from $\rho$ to the reference distribution $e^{-g}$. We also recall $G(\cdot)=(1/2)W_2^2(\cdot,e^{-g})$.
		\begin{lemma}\label{lem:Bregprop}
			Let $\rho_1,\rho_2$ be two absolutely continuous probability measures. Then $B_G(\rho_2|\rho_1)\ge 0$ and $B_G(\rho_2|\rho_1)=0$ if and only if $\rho_1=\rho_2$. Further, the map $\rho_2\mapsto B_G(\rho_2|\rho_1)$ is convex for all $\rho_1$.
		\end{lemma}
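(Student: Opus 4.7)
The plan is to derive an explicit integral representation of $B_G(\rho_2|\rho_1)$ as an expected Euclidean Bregman divergence, from which both non-negativity and the equality case fall out immediately; convexity will then be handled separately via a short coupling argument.

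First I would unpack $G(\rho) = \tfrac{1}{2}W_2^2(\rho,e^{-g})$ using Kantorovich--Brenier duality, together with the Legendre identity $\phi_\rho(x)+\phi_\rho^*(\nabla\phi_\rho(x))=\langle x,\nabla\phi_\rho(x)\rangle$ and the push-forward $\nabla\phi_\rho\#\rho=e^{-g}$, to obtain
\[
G(\rho) = \int\bigl(\tfrac{1}{2}\|x\|^2-\phi_\rho(x)\bigr)\rho(x)\,\dd x + \int\bigl(\tfrac{1}{2}\|y\|^2-\phi_\rho^*(y)\bigr)e^{-g(y)}\,\dd y.
\]
Substituting this expression for $G(\rho_1)$ and $G(\rho_2)$ into \cref{def:Breg}, using $\frac{\delta G}{\delta\rho}(\rho_1)=\tfrac{1}{2}\|\cdot\|^2-\phi_{\rho_1}$ from \cref{eg:mirror-G}, the $\tfrac{1}{2}\|x\|^2$ contributions cancel. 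Rewriting the reference-domain integral via the push-forward as $\int\phi_{\rho_1}^*(y)e^{-g(y)}\,\dd y=\int\phi_{\rho_1}^*(\nabla\phi_{\rho_2}(x))\rho_2(x)\,\dd x$, a short computation collapses the cross-terms and yields the target formula
\[
B_G(\rho_2|\rho_1) = \int\bigl[\phi_{\rho_1}(x)+\phi_{\rho_1}^*(\nabla\phi_{\rho_2}(x))-\langle x,\nabla\phi_{\rho_2}(x)\rangle\bigr]\rho_2(x)\,\dd x = \int D_{\phi_{\rho_1}}\bigl(x\,\big|\,\nabla\phi_{\rho_1}^*(\nabla\phi_{\rho_2}(x))\bigr)\rho_2(x)\,\dd x,
\]
where the second equality invokes the Euclidean Bregman definition \eqref{eq:EucliDBreg}.

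From this representation the first two claims are immediate. Convexity of the Brenier potential $\phi_{\rho_1}$ makes the integrand pointwise non-negative, so $B_G(\rho_2|\rho_1)\ge 0$. For the equality case, $B_G(\rho_2|\rho_1)=0$ forces $D_{\phi_{\rho_1}}(x\,|\,\nabla\phi_{\rho_1}^*(\nabla\phi_{\rho_2}(x)))=0$ for $\rho_2$-a.e.\ $x$, which by strict convexity of $\phi_{\rho_1}$ on the interior of the support of $\rho_1$ (a consequence of absolute continuity of $e^{-g}$ and Brenier's uniqueness theorem) implies $\nabla\phi_{\rho_1}(x)=\nabla\phi_{\rho_2}(x)$ $\rho_2$-a.e.; pushing forward then gives $\nabla\phi_{\rho_1}\#\rho_2=\nabla\phi_{\rho_2}\#\rho_2=e^{-g}$, and applying the inverse $\nabla\phi_{\rho_1}^*$ yields $\rho_2=\nabla\phi_{\rho_1}^*\#e^{-g}=\rho_1$.

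For convexity of $\rho_2\mapsto B_G(\rho_2|\rho_1)$, I would note that after subtracting a constant in $\rho_2$ and a term linear in $\rho_2$, the claim reduces to convexity of $\rho_2\mapsto\tfrac{1}{2}W_2^2(\rho_2,e^{-g})$ under linear interpolation of densities. This is standard: if $\pi',\pi''$ are optimal couplings for $(\rho_2',e^{-g})$ and $(\rho_2'',e^{-g})$, then $t\pi'+(1-t)\pi''$ is a valid coupling of $t\rho_2'+(1-t)\rho_2''$ with $e^{-g}$, and evaluating the quadratic cost gives $W_2^2(t\rho_2'+(1-t)\rho_2'',e^{-g})\le tW_2^2(\rho_2',e^{-g})+(1-t)W_2^2(\rho_2'',e^{-g})$. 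The main technical subtlety I anticipate is the equality case: since Brenier potentials are only guaranteed convex rather than strictly convex everywhere, one must carefully restrict to the interior of the support of $\rho_1$ and invoke $\rho_1$-a.e.\ invertibility of $\nabla\phi_{\rho_1}$ via $\nabla\phi_{\rho_1}^*$ to close the uniqueness argument cleanly.
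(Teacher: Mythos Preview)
Your proof is correct, but it follows a different route from the paper's argument for non-negativity and the equality case. The paper does not first derive the explicit integral representation; instead it appeals directly to the Kantorovich dual form
\[
G(\rho)=\sup_{\phi\in L^1(\rho)}\left[\int\Bigl(\tfrac{1}{2}\|x\|^2-\phi(x)\Bigr)\rho(x)\,\dd x+\int\Bigl(\tfrac{1}{2}\|y\|^2-\phi^*(y)\Bigr)e^{-g(y)}\,\dd y\right],
\]
and simply plugs the suboptimal potential $\phi_{\rho_1}$ into the supremum defining $G(\rho_2)$. The resulting inequality rearranges immediately to $B_G(\rho_2|\rho_1)\ge 0$, and equality forces $\phi_{\rho_1}$ to be an optimizer for the $\rho_2$ problem, so McCann's uniqueness gives $\rho_1=\rho_2$. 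Your approach instead computes the identity
\[
B_G(\rho_2|\rho_1)=\int D_{\phi_{\rho_1}}\bigl(x\,\big|\,\nabla\phi_{\rho_1}^*(\nabla\phi_{\rho_2}(x))\bigr)\rho_2(x)\,\dd x
\]
and reads off non-negativity pointwise. This is a bit more labor up front, but it has the payoff of establishing along the way the expected-Bregman representation that the paper proves separately as \cref{lem:Bregprop2}; the paper's suboptimality-gap argument is shorter but yields no such formula. For convexity, both you and the paper reduce to linear convexity of $\rho_2\mapsto W_2^2(\rho_2,e^{-g})$, so there is no difference there. One small correction: in your equality argument you do not actually need strict convexity of $\phi_{\rho_1}$; equality in Fenchel--Young at $(x,y)$ is equivalent to $y\in\partial\phi_{\rho_1}(x)$, and since $\phi_{\rho_1}$ is convex it is differentiable Lebesgue-a.e., hence $\rho_2$-a.e.\ by absolute continuity of $\rho_2$, which is enough to conclude $\nabla\phi_{\rho_1}(x)=\nabla\phi_{\rho_2}(x)$ $\rho_2$-a.e.
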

		
		\begin{lemma}\label{lem:Bregprop2}
			Given probability densities $\rho_1,\rho_2\in\ptac(\R^d)$, let $\phi_{\rho_1}$, $\phi_{\rho_2}$ be Brenier potentials from $\rho_1,\rho_2$ to $e^{-g}$. Also define $\psi_{\rho_1}:=\phi_{\rho_1}^*$ and $\psi_{\rho_2}:=\phi_{\rho_2}^*$.  Then we have  
			$$B_G(\rho_2|\rho_1)=\E_{Y\sim e^{-g}}D_{\phi_{\rho_1}}(\nabla \psi_{\rho_2}(Y)|\nabla \psi_{\rho_1}(Y))=\E_{X\sim \rho_2}D_{\psi_{\rho_1}}(\nabla \phi_{\rho_2}(X)|\nabla \phi_{\rho_1}(X)).$$
			As a result, we have: 
			$$\frac{1}{2}\bar{d}^2 \sup_y \lmx(\nabla^2 \psi_{\rho_1}(y))\ge B_G(\rho_2|\rho_1)\ge \frac{1}{2}\bar{d}^2\inf_y \lmn(\nabla^2 \psi_{\rho_1}(y)),$$
			where $\bar{d}^2:=\E_{X\sim\rho_2}\lVert \nabla \phi_{\rho_2}(X)-\nabla\phi_{\rho_1}(X)\rVert^2$. 
			Similarly, 
			$$\frac{1}{2}\underline{d}^2 \sup_x \lmx(\nabla^2 \phi_{\rho_1}(x)) \ge B_G(\rho_2|\rho_1)\ge\frac{1}{2}\underline{d}^2\inf_x \lmn(\nabla^2 \phi_{\rho_1}(x)),$$
			where $\underline{d}^2:=\E_{Y\sim e^{-g}}\lVert \nabla \psi_{\rho_2}(Y)-\nabla\psi_{\rho_1}(Y)\rVert^2$. 
		\end{lemma}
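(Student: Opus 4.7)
The plan is to start from the dual (Kantorovich) form of the squared $2$-Wasserstein cost combined with the Fenchel--Young identity, and convert the definition of $B_G(\rho_2 \mid \rho_1)$ into an integral involving only the Brenier potentials. Concretely, using $\nabla \phi_{\rho_i} \# \rho_i = e^{-g}$ and the identity $\phi_{\rho_i}(x) + \psi_{\rho_i}(\nabla \phi_{\rho_i}(x)) = \langle x, \nabla \phi_{\rho_i}(x)\rangle$, I would first write
\[
G(\rho_i) = \tfrac{1}{2}\!\int \|x\|^2 \rho_i(x)\,dx - \int \phi_{\rho_i}(x)\,\rho_i(x)\,dx - \int \psi_{\rho_i}(y)\,e^{-g(y)}\,dy + \tfrac{1}{2}\!\int\|y\|^2 e^{-g(y)}dy,
\]
so that the last term cancels on subtracting $G(\rho_1)$ from $G(\rho_2)$. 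Plugging this together with the expression $\tfrac{\delta G}{\delta \rho}(\rho_1)(x) = \tfrac{1}{2}\|x\|^2 - \phi_{\rho_1}(x)$ from Example~\ref{eg:mirror-G} into Definition~\ref{def:Breg} makes the $\tfrac{1}{2}\|x\|^2$ terms cancel and leaves the compact identity
\[
B_G(\rho_2\mid \rho_1) = \int \bigl(\phi_{\rho_1}(x) - \phi_{\rho_2}(x)\bigr)\rho_2(x)\,dx \;-\; \int \bigl(\psi_{\rho_2}(y) - \psi_{\rho_1}(y)\bigr) e^{-g(y)}\,dy.
\]

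The next step is to recognize that this expression coincides with $\E_{Y\sim e^{-g}} D_{\phi_{\rho_1}}(\nabla\psi_{\rho_2}(Y) \mid \nabla\psi_{\rho_1}(Y))$. Using the Euclidean Bregman formula from \eqref{eq:EucliDBreg} together with Proposition~\ref{obs:elemprop}, one has
\[
D_{\phi_{\rho_1}}\!\bigl(\nabla \psi_{\rho_2}(Y)\,\big|\,\nabla \psi_{\rho_1}(Y)\bigr) = \phi_{\rho_1}(\nabla \psi_{\rho_2}(Y)) + \psi_{\rho_1}(Y) - \phi_{\rho_2}(\nabla \psi_{\rho_2}(Y)) - \psi_{\rho_2}(Y),
\]
after applying Fenchel--Young once at $\psi_{\rho_1}(Y)$ and once at $\psi_{\rho_2}(Y)$. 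Taking expectation under $Y \sim e^{-g}$ and using $\nabla\psi_{\rho_2}\# e^{-g} = \rho_2$ yields exactly the right-hand side of the identity above. This proves the first equality.

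For the second equality, I would invoke the standard Bregman duality $D_{\phi}(a\mid b) = D_{\phi^*}(\nabla\phi(b)\mid \nabla\phi(a))$ applied with $\phi = \phi_{\rho_1}$, $a = \nabla\psi_{\rho_2}(Y)$, and $b = \nabla\psi_{\rho_1}(Y)$; combined with $\nabla \phi_{\rho_1}(\nabla\psi_{\rho_1}(Y)) = Y$ from Proposition~\ref{obs:elemprop}, this rewrites each summand as $D_{\psi_{\rho_1}}(Y \mid \nabla\phi_{\rho_1}(\nabla\psi_{\rho_2}(Y)))$. A change of variable $X = \nabla \psi_{\rho_2}(Y) \sim \rho_2$, so that $Y = \nabla \phi_{\rho_2}(X)$, then converts the expectation into $\E_{X\sim \rho_2} D_{\psi_{\rho_1}}(\nabla \phi_{\rho_2}(X) \mid \nabla \phi_{\rho_1}(X))$.

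Finally, the sandwich bounds are a direct consequence of Taylor's theorem applied to the Euclidean Bregman divergence: for any twice-differentiable convex $\phi$, $D_{\phi}(a\mid b) = \tfrac{1}{2}(a-b)^{\top}\nabla^2 \phi(\xi)(a-b)$ for some $\xi$ on the segment $[a,b]$. Applying this to the second representation (mirror $\psi_{\rho_1}$) and taking expectation over $X\sim \rho_2$ produces the bounds in terms of $\inf \lambda_{\min}(\nabla^2 \psi_{\rho_1})$, $\sup \lambda_{\max}(\nabla^2 \psi_{\rho_1})$, and $\bar{d}^2$; applying it to the first representation (mirror $\phi_{\rho_1}$) with expectation over $Y \sim e^{-g}$ produces the analogous bounds involving $\nabla^2 \phi_{\rho_1}$ and $\underline{d}^2$. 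The main bookkeeping obstacle is keeping the change-of-variable and Fenchel--Young substitutions straight between the primal domain (under $\rho_i$) and the reference domain (under $e^{-g}$); once the identity for $B_G$ in terms of $\phi_{\rho_i}, \psi_{\rho_i}$ is established, everything else reduces to routine manipulations.
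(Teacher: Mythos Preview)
Your proposal is correct and follows essentially the same route as the paper: derive the compact identity $B_G(\rho_2\mid\rho_1)=\E_{X\sim\rho_2}(\phi_{\rho_1}-\phi_{\rho_2})(X)+\E_{Y\sim e^{-g}}(\psi_{\rho_1}-\psi_{\rho_2})(Y)$ from the definition of $B_G$ and the dual form of $G$, then rewrite it as an expected Euclidean Bregman divergence via Fenchel--Young and the change of variables $X=\nabla\psi_{\rho_2}(Y)$, and finally invoke the standard quadratic bounds on $D_\phi$. The only cosmetic difference is that for the second representation you appeal to the duality $D_\phi(a\mid b)=D_{\phi^*}(\nabla\phi(b)\mid\nabla\phi(a))$ before changing variables, whereas the paper repeats the direct substitution $Y=\nabla\phi_{\rho_2}(X)$ in the compact identity; both are equivalent one-line manipulations.
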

		
		\subsection{Proof of Main Results}
		
		\begin{proof}[Proof of \cref{lem:mirrid}] 
			By \eqref{eq:baseineq}, we have: 
			\begin{align*}
				B_G(\pi|\rho_1)=\E_{X\sim \pi}(\psi_{\rho_1}^*-\psi_{\pi}^*)(X)+\E_{Y\sim e^{-g}}(\psi_{\rho_1}-\psi_{\pi})(Y), \quad 			
				B_G(\pi|\rho_2)=\E_{X\sim \pi}(\psi_{\rho_2}^*-\psi_{\pi}^*)(X)+\E_{Y\sim e^{-g}}(\psi_{\rho_2}-\psi_{\pi})(Y),
			\end{align*}
			and 
			\begin{align*}
				B_{G_{\pi}}(\pi_1|\pi_2)=\E_{X\sim \pi}(\psi_{\rho_2}^*-\psi_{\rho_1}^*)(X)+\E_{Y\sim \pi_1}(\psi_{\rho_2}-\psi_{\rho_1})(Y).
			\end{align*}
			The conclusion follows from direct computation.
		\end{proof}
		
		\begin{proof}[Proof of \cref{lem:unineq}]
			Let $\psi_{\pi}^*$ be the Brenier potential from $\pi$ to $e^{-g}$. By the change of variable formula, we have: 
			$$\log\pi = -g(\nabla \psi_{\pi}^*)+\log\det(\nabla^2 \psi_{\pi}^*), \quad \mbox{and}\quad \log\rho = -g(\nabla \psi_{\rho}^*)+\log\det(\nabla^2 \psi_{\rho}^*).$$
			By subtracting the above equations, we have: 
			$$\log\frac{\pi}{\rho}=g(\nabla\psi_{\rho}^*)-g(\nabla\psi_{\pi}^*)-\log\frac{\det(\nabla^2 \psi_{\rho}^*)}{\det(\nabla^2 \psi_{\pi}^*)}.$$
			Let us write $Y=\nabla\psi_{\pi}^*(X)$, where $X\sim \pi$. Then $Y\sim e^{-g}$. By integrating the above equation with respect to $\pi$, we get: 
			\begin{align*}
				\;\;\;\;KL(\pi|\rho)&=\E_{X\sim \pi} \left[g(\nabla\psi_{\rho}^*(X))-g(\nabla\psi_{\pi}^*(X))-\log\frac{\det(\nabla^2 \psi_{\rho}^*(X))}{\det(\nabla^2 \psi_{\pi}^*(X))}\right] \\ &=\E_{ Y\sim e^{-g}}\left[g(\nabla\psi_{\rho}^*\circ \nabla\psi_{\pi}(Y))-g(Y)-\log\frac{\det(\nabla^2 \psi_{\rho}^*(\nabla\psi_{\pi}(Y)))}{\det(\nabla^2 \psi_{\pi}^*(\nabla\psi_{\pi}(Y)))}\right] \\ &=\E_{Y\sim e^{-g}} D_g(\nabla\psi_{\rho}^*\circ \nabla\psi_{\pi}(Y)|Y) + \int \left(\langle \nabla g(y),\nabla\psi_{\rho}^*\circ\nabla\psi_{\pi}(y)-y\rangle-\log\frac{\det(\nabla^2 \psi_{\rho}^*(\nabla\psi_{\pi}(y)))}{\det(\nabla^2 \psi_{\pi}^*(\nabla\psi_{\pi}(y)))}\right) e^{-g(y)}\,\dd y \\ &\ge \frac{\lambda}{2}\int \lVert \nabla\psi_{\rho}^*\circ \nabla\psi_{\pi}(y)-y\rVert^2 e^{-g(y)}\,\dd y + \int \left(\langle \nabla g(y),\nabla\psi_{\rho}^*\circ\nabla\psi_{\pi}(y)-y\rangle-\log\frac{\det(\nabla^2 \psi_{\rho}^*(\nabla\psi_{\pi}(y)))}{\det(\nabla^2 \psi_{\pi}^*(\nabla\psi_{\pi}(y)))}\right)e^{-g(y)}\,\dd y.
			\end{align*}
			Therefore it suffices to show that 
			\begin{align}\label{eq:toshow1}
				\int \lVert \nabla\psi_{\rho}^*\circ \nabla\psi_{\pi}(y)-y\rVert^2 e^{-g(y)}\,\dd y \ge \frac{2}{\beta} B_G(\pi|\rho),
			\end{align}
			and
			\begin{align}\label{eq:toshow2}
				&\int \left(\langle \nabla g(y),\nabla\psi_{\rho}^*\circ\nabla\psi_{\pi}(y)-y\rangle-\log\frac{\det(\nabla^2 \psi_{\rho}^*(\nabla\psi_{\pi}(y)))}{\det(\nabla^2 \psi_{\pi}^*(\nabla\psi_{\pi}(y)))}\right)e^{-g(y)}\,\dd y\ge 0.
			\end{align}
			
			\emph{Proof of \eqref{eq:toshow1}.} By the change of variable $Y=\nabla\psi_{\pi}^*(X)$ for $X\sim\pi$, we have: 
			\begin{align*}
				\int \int \lVert \nabla\psi_{\rho}^*\circ \nabla\psi_{\pi}(y)-y\rVert^2 e^{-g(y)}\,\dd y & =\int \lVert\nabla\psi_{\rho}^*(x)-\nabla\psi_{\pi}^*(x)\rVert^2\,\pi(x)\dd x\\ &\ge \frac{2}{\beta}\frac{\sup_y \lmx(\nabla^2 \psi_{\rho}(y))}{2} \int \lVert\nabla\psi_{\rho}^*(x)-\nabla\psi_{\pi}^*(x)\rVert^2\,\pi(x)\dd x \\ & \ge \frac{2}{\beta}B_G(\pi|\rho).
			\end{align*}
			Here the second to last inequality uses the assumption that $\sup_y \lmx(\nabla^2 \psi_{\rho}(y))\le \beta$, and the last inequality follows from \cref{lem:Bregprop2}.

			\emph{Proof of \eqref{eq:toshow2}.} By \cref{obs:elemprop}, note that $(\nabla^2\psi_{\pi}^*(\nabla\psi_{\pi}(y)))^{-1}=\nabla^2\psi_{\pi}(y)$. For $y\in\R^d$, let $\lambda_1(y)\ge \ldots \lambda_d(y)$ denote the eigenvalues of the matrix $\nabla^2 \psi_{\rho}^*(\nabla\psi_{\pi}(y))\nabla^2 \psi_{\pi}(y)$. By applying integration by parts and the multivariate chain rule, we have: 
			\begin{align*}
				\int \langle \nabla g(y),\nabla \psi_{\rho}^*\circ \nabla\psi_{\pi}(y)-y\rangle e^{-g(y)}\,\dd y  &=\sum_{i=1}^d \int \partial_i(e^{-g(y)})\left(y_i-(\partial_i\psi_{\rho}^*)(\nabla\psi_{\pi}(y))\right)\,\dd y \\ &=\sum_{i=1}^d \int \left(\frac{\partial}{\partial y_i}\left((\partial_i\psi_{\rho}^*)(\nabla\psi_{\pi}(y))\right)-1\right) e^{-g(y)}\,\dd y \\ &=\sum_{i=1}^d \int\left(\sum_{j=1}^d \frac{\partial^2}{\partial y_i\partial y_j}\psi_{\rho}^*(\nabla\psi_{\pi}(y))\frac{\partial^2}{\partial y_i\partial y_j}\psi_{\pi}(y))-1\right)e^{-g(y)}\,\dd y\\ &=\int \left(\mbox{Trace}(\nabla^2 \psi_{\rho}^*(\nabla \psi_{\pi}(y))\nabla^2 \psi_{\pi}(y))-d\right) e^{-g(y)}\,\dd y =\sum_{i=1}^d \int (\lambda_{i}(y)-1)e^{-g(y)}\,\dd y.
			\end{align*}
			On the other hand, we also have 
			$$\int \log\det\frac{\nabla^2\psi_{\rho}^*(\nabla\psi_{\pi}(y))}{\nabla^2\psi_{\pi}^*(\nabla\psi_{\pi}(y))}e^{-g(y)}\,\dd y=\sum_{i=1}^d \int \big(\log{\lambda_i(y)}\big)e^{-g(y)}\,\dd y.$$
			Therefore, 
			\begin{align*}
				&\;\;\;\;\int \left(\langle \nabla g(y),\nabla\psi_{\rho}^*\circ\nabla\psi_{\pi}(y)-y\rangle-\log\frac{\det(\nabla^2 \psi_{\rho}^*(\nabla\psi_{\pi}(y)))}{\det(\nabla^2 \psi_{\pi}^*(\nabla\psi_{\pi}(y)))}\right)e^{-g(y)}\,\dd y =\sum_{i=1}^d \int(\lambda_i(y)-1-\log{\lambda_i(y)})e^{-g(y)}\,\dd y.
			\end{align*}
			The conclusion now follows from the fact that $\log{z}\le z-1$ for all $z>0$. 
		\end{proof}
		
		\subsection{Proof of Auxiliary Lemmas}
		\begin{proof}[Proof of \cref{lem:Bregprop}]
			By Brenier's Theorem \cite{brenier1991polar,knott1984optimal}, we have for any probability density $\rho\in\ptac(\R^d)$, we have: 
			$$G(\rho)=\sup_{\phi\in L^1(\rho)}\left[\int \left(\frac{1}{2}\lVert x\rVert^2-\phi(x)\right)\,\rho(x)\dd x+\int \left(\frac{1}{2}\lVert y\rVert^2-\phi^*(y)\right)e^{-g(y)}\,\dd y\right].$$
			By the optimality of $\phi_{\rho_2}$ and $\psi_{\rho_1}=\phi_{\rho_1}^*$ (the Fenchel conjugate of $\phi_{\rho_1}$), we therefore have: 
			\begin{align*}
				G(\rho_2) & \ge \int \left(\frac{1}{2}\lVert x\rVert^2-\phi_{\rho_1}(x)\right)\rho_2(x)\dd x+\int \left(\frac{1}{2}\lVert y\rVert^2-\psi_{\rho_1}(y)\right)e^{-g(y)}\dd y  =G(\rho_1)+\int\left(\frac{1}{2}\lVert x\rVert^2-\phi_{\rho_1}(x)\right)(\rho_2-\rho_1)(x)\dd x.
			\end{align*}
			By plugging in the above inequality into the definition of $B_G(\rho_2|\rho_1)$ (see \cref{eg:mirror-G}), we get $B_G(\rho_2|\rho_1)\ge 0$. Here equality holds if and only if $\phi_{\rho_2}$ and $\phi_{\rho_1}$ are equal a.e. up to translations. This implies $\rho_1=\rho_2$ by the uniqueness of optimal transport maps \cite{McCann1995}. The convexity of $\rho_2\mapsto B_G(\rho_2|\rho_1)$ follows from the convexity of the map $\rho_2\mapsto W_2^2(\rho_2,e^{-g})$.
		\end{proof}
		
		\begin{proof}[Proof of \cref{lem:Bregprop2}]
			From direct simplification, it follows that 
			\begin{align}\label{eq:baseineq}
				B_G(\rho_2|\rho_1) &=\int \left(\frac{1}{2}\lVert x\rVert^2-\phi_{\rho_2}(x)\right)\,\rho_2(x)\,dx+\int \left(\frac{1}{2}\lVert y\rVert^2-\psi_{\rho_2}(y)\right)e^{-g(y)}\,dy \nonumber \\ &\quad - \int \left(\frac{1}{2}\lVert x\rVert^2-\phi_{\rho_1}(x)\right)\rho_{1}(x)\dd x-\int \left(\frac{1}{2}\lVert y\rVert^2-\psi_{\rho_1}(y)\right)e^{-g(y)}\,dy \nonumber \\ &\quad -\int \left(\frac{1}{2}\lVert x\rVert^2-\phi_{\rho_1}(x)\right)(\rho_2-\rho_1)(x)\dd x \nonumber \\ &=\E_{X\sim\rho_2}(\phi_{\rho_1}-\phi_{\rho_2})(X)+\E_{Y\sim e^{-g}}(\psi_{\rho_1}-\psi_{\rho_2})(Y).
			\end{align}
			Recall the definition of Bregman divergence on the Euclidean space (see \eqref{eq:EucliDBreg}). As $X\overset{d}{=}\nabla \psi_{\rho_2}(Y)$ and $\phi_{\rho_2}(\nabla \psi_{\rho_2}(y))+\psi_{\rho_2}(y)=\langle y,\nabla \psi_{\rho_2}(y)\rangle$ (by \cref{obs:elemprop}), we can further simplify $B_G(\rho_2|\rho_1)$ as 
			\begin{align*}
				\E_{Y\sim e^{-g}}\big(\phi_{\rho_1}(\nabla \psi_{\rho_2}(Y))+\psi_{\rho_1}(Y)-\phi_{\rho_2}(\nabla\psi_{\rho_2}(Y))-\psi_{\rho_2}(Y)\big) &=\E_{Y\sim e^{-g}} (\phi_{\rho_1}(\nabla \psi_{\rho_2}(Y))+\psi_{\rho_1}(Y)-\langle Y,\nabla \psi_{\rho_2}(Y)\rangle) \\ &=\E_{Y\sim e^{-g}} D_{\phi_{\rho_1}}(\nabla \psi_{\rho_2}(Y)|\nabla \psi_{\rho_1}(Y)).
			\end{align*}
			By a similar computation, replacing $Y=\nabla \phi_{\rho_2}(X)$, $X\sim \rho_2$ in \eqref{eq:baseineq}, we get: 
			$B_G(\rho_2|\rho_1)=\E_{X\sim \rho_2}D_{\psi_{\rho_1}}(\nabla \phi_{\rho_2}(X)|\nabla \phi_{\rho_1}(X))$.	The inequalities now follow from the following standard property of Bregman divergences for strictly convex $\phi$, namely, 
			$$\frac{1}{2}\sup_z \lmx(\nabla^2\phi(z))\lVert x-\nabla \phi^*(y)\rVert^2 \ge D_{\phi}(x|\nabla \phi^*(y))\ge \frac{1}{2}\inf_z \lmn(\nabla^2\phi(z))\lVert x-\nabla \phi^*(y)\rVert^2.$$
		\end{proof}

		\section{Proofs for Section~\ref{sec:regcon}}\label{sec:pfmainres}
		In this section, we prove \cref{thm:dualcon}--\cref{thm:ptcon}. Our key tool will be the one-step EVI \cref{lem:basicineq}.
		\begin{proof}[Proof of \cref{thm:dualcon}]
			Recall from \eqref{eq:measupdate} that $\hr_k:=(\nabla \wps_k)\#e^{-g}$. Using the change of variable formula, we have $\hr_k(\nabla \wps_k(y)) \det(\nabla^2 \wps_k(y)) = e^{-g(y)}$. Note the following equation from the discretization \eqref{eq:discretedual}.
			\begin{align*}
				\wps_{k+1}(y) - \wps_k(y) = -\eta_k\big(f(\nabla \wps_k(y)) + \log \hr_k(\nabla \wps_k(y)) \big)
			\end{align*}
			
			Now integrate both sides with respect to $Y  \sim e^{-g}$ to get:
			\begin{align*}
				\E_{Y \sim e^{-g}}[ \wps_{k+1}(Y) - \wps_k(Y) ] &= -\eta_k \E_{Y \sim e^{-g}}[ f(\nabla \wps_k(Y)) + \log \hr_k(\nabla \wps_k(Y))] \\
				& = - \eta_k \E_{Y \sim e^{-g}}\big[ \log \frac{\hr_k(\nabla \wps_k(Y))}{e^{-f(\nabla \wps_k(Y))}} \big] \\
				& = - \eta_k \E_{X \sim \hr_k}\big[ \log \frac{\hr_k(X)}{e^{-f(X)}} \big] = -\eta_kKL\big( \hr_k | e^{-f} \big)   
			\end{align*}
			where the last line uses the fact that if $X = \nabla \wps_k(Y)$, $Y\sim e^{-g}$, then $X\sim \hr_k$.
			By summing over $k$, we get: 
			\begin{align}
				\frac{1}{S_T} \sum_{k=0}^{T-1} \eta_kKL(\hr_k|e^{-f}) = \frac{1}{S_T} \E_{Y \sim e^{-g}} [ \wps_{0}(Y) - \wps_{T}(Y) ].
			\end{align}
			The conclusion now follows from Jensen's inequality.
		\end{proof}
		
		\begin{proof}[Proof of \cref{thm:regbd1}]
			Recall the parameter specifications in the statement of \cref{thm:regbd1}. Using \cref{lem:basicineq} and the non-negativity of the Bregman divergence (see \cref{lem:Bregprop}) and the KL divergence, we then get: 
			\begin{align*}
				\sum_{k=0}^T (KL(\wrh_k|e^{-f})-KL(\pi|e^{-f})) &\le \sqrt{T}\sum_{k=0}^T (B_G(\pi|\wrh_k)-B_G(\pi|\wrh_{k+1}))+\frac{A T}{2m\sqrt{T}} - \sum_{k=0}^T KL(\pi|\wrh_k) \le \sqrt{T}\left(B_G(\pi|\wrh_0)+\frac{A}{2m}\right).
			\end{align*}
		\end{proof}
		
		\begin{proof}[Proof of \cref{thm:regbd2}]
			Note that by \cref{lem:unineq}, we have:
			$KL(\pi|\rho_k)\ge \frac{\lambda}{M_k} B_G(\pi|\rho_k)$. 
			
			\emph{Part (i).} Set $S_0=0$. Observe that 
			$S_{k+1}-\lambda/M_k=1/\eta_k-\lambda/M_k=S_k$. 
			By using \cref{lem:basicineq}, coupled with the above observation and the corresponding parameter specifications in part (i), we get: 
			\begin{align*}
				\sum_{k=0}^T (KL(\wrh_k|e^{-f})-KL(\pi|e^{-f}))  &\le \sum_{k=0}^T S_{k+1}(B_G(\pi|\wrh_k)-B_G(\pi|\wrh_{k+1}))+\frac{1}{2}\sum_{k=0}^T \frac{m_{k+1}^{-1}}{S_{k+1}}\int \lVert \boldsymbol{\xi}_k(y)\rVert^2\pi_k(y)\dd y-\frac{\lambda}{M_k} B_G(\pi|\wrh_k) \\ &=\sum_{k=0}^T (S_k B_G(\pi|\wrh_k) - S_{k+1} B_G(\pi|\wrh_k))+\frac{1}{2}\sum_{k=0}^T \frac{m_{k+1}^{-1}}{S_{k+1}}\int \lVert \boldsymbol{\xi}_k\rVert^2\,d\pi_k \\ &\le \frac{1}{2}\sum_{k=0}^T \frac{m_{k+1}^{-1}}{S_{k+1}}\int \lVert \boldsymbol{\xi}_k(y)\rVert^2\pi_k(y)\dd y.
			\end{align*}
			The last line uses $S_0=0$ and the non-negativity of $B_G$ (see \cref{lem:Bregprop}).
			
			\noindent \emph{Part (ii).} As $\sum_{k=0}^T (k+1)^{-1}\le 1+\log{(T+1)}$, the conclusion follows from part (i) after replacing $M_k$, $m_k$, and $\int \lVert \boldsymbol{\xi}_k\rVert^2\,d\pi_k$ with $M$, $m$, and $A$ respectively.
		\end{proof}
		
		\begin{proof}[Proof of \cref{thm:ptcon}]
			\noindent \emph{Part (i).} Set $\pi=e^{-f}$ and note that $KL(\rho_k|e^{-f})\ge 0$. Using \cref{lem:basicineq} and $KL(e^{-f}|\rho_k)\ge \frac{\lambda}{M} B_G(\pi|\rho_k)$ from \cref{lem:unineq}, we get 
			\begin{align*}
				0 &\le \left(\frac{1}{\eta_k}-\frac{\lambda}{M}\right)B_G(e^{-f}|\rho_k)-\frac{1}{\eta_k}B_G(e^{-f}|\rho_{k+1})+\frac{1}{2m}\eta_k A \\ \implies B_G(e^{-f}|\rho_{k+1})&\le \left(1-\eta_k\frac{\lambda}{M}\right)B_G(e^{-f}|\rho_k)+\frac{1}{2m}\eta_k^2 A \\ \implies \frac{1}{\eta_k}\frac{B_G(e^{-f}|\rho_{k+1})-B_G(e^{-f}|\rho_k)}{B_G(e^{-f}|\rho_k)} &\le -\frac{\lambda}{M} + \frac{1}{2m}\eta_k A.
			\end{align*}
			The result follows by taking $\limsup$ as $\eta_k\to 0$ on both sides.
			
			\noindent \emph{Part (ii).} Define $x_k:=B_G(e^{-f}|\rho_k)$ for $k=0,1,2,\ldots ,T+1$. As $\eta_k=(C M x_0\log{T})/(\lambda  T)$, by part (i), we have:
			$$x_{k+1}\le \left(1-\frac{C x_0\log{T}}{T}\right)x_k+ \frac{C^2 M^2 A x_0^2}{2\lambda^2 m}\left(\frac{\log{T}}{T}\right)^2.$$
			By a standard induction argument, we get: 
			\begin{align*}
				x_{T}& \le \left(1-\frac{C x_0 \log{T}}{T}\right)^T x_0 + \frac{C^2 M^2 A x_0^2}{2\lambda^2 m}\left(\frac{\log{T}}{T}\right)^2 \frac{T}{C x_0\log{T}}\\ &\le \left(1-\frac{C x_0\log{T}}{T}\right)^T x_0+ \frac{C M^2 A x_0}{2\lambda^2 m}\frac{\log{T}}{T} \le \left(\frac{1}{T^{C x_0}}+\frac{C M^2 A}{2\lambda^2 m}\frac{\log{T}}{T}\right)x_0.
			\end{align*}
			The last line follows from the inequality $(1-x)^a\le e^{-ax}$ for $x\in (0,1)$ and $a>0$. This completes the proof. 
		\end{proof}
		
		\section{Proofs from Section~\ref{sec:applications}}\label{sec:pfappl}
		
		\begin{proof}[Proof of \cref{prop:varrep}]
			We note that 
			$$\tilde{F}(\nabla \psi)=\int \big(f(\nabla \psi(y))-g(y)-\log\det(\nabla^2\psi(y))\big)e^{-g(y)} \dd y.$$
			We compute the first variation of $\tilde{F}$ at a point $\nabla \psi$ in the direction, say $\nabla \Theta\in \mathrm{Tan}_{e^{-g}}$ where $\Theta\in \mathcal{C}_c^{\infty}$. In other words, we are interested in finding $A_{\nabla\psi}:\R^d\to\R^d$ such that 
			$$\lim_{\epsilon\to 0} \frac{\tilde{F}(\nabla \psi+\epsilon \nabla\Theta)-\tilde{F}(\nabla \psi) - \langle A_{\nabla \psi},\nabla \Theta\rangle_{e^{-g}}}{\epsilon} = 0.$$
			In the sequel, we will use the fact that given any positive definite matrix $(\partial/\partial B)\log\det(B)=B^{-1}$ (see \cite[Section A.4.1]{boyd2004convex}). We note that 
			\begin{align*}
				\tilde{F}(\nabla \psi+\epsilon \nabla\Theta)-\tilde{F}(\nabla \psi) &=\epsilon \int \langle (\nabla f)(\nabla \psi(y)),\nabla \Theta(y)\rangle e^{-g(y)}\,\dd y - \epsilon \int \big\langle (\nabla^2 \psi(y))^{-1},\nabla^2 \Theta(y)\big\rangle e^{-g(y)}\,\dd y + o(\epsilon) \\ &=\epsilon \int \langle (\nabla f)(\nabla \psi(y)),\nabla \Theta(y)\rangle e^{-g(y)}\,\dd y - \epsilon \int \big\langle \nabla^2 \psi^*(\nabla \psi(y)),\nabla^2 \Theta(y)\big\rangle e^{-g(y)}\,\dd y + o(\epsilon), 
			\end{align*}
			where the last equality uses $\nabla^2 \psi^*(\nabla \psi(y))=(\nabla^2\psi(y))^{-1}$ (see \cref{obs:elemprop}). Moreover  by using integration by parts and the chain rule, we get: 
			\begin{align*}
				&\;\;\;\;\int \big\langle \nabla^2 \psi^*(\nabla \psi(y)),\nabla^2 \Theta(y)\big\rangle e^{-g(y)}\,\dd y \\ &=\sum_{i=1}^d \int \sum_{j=1}^d \frac{\partial^2}{\partial y_i \partial y_j}\psi^*(\nabla\psi(y)) \frac{\partial^2}{\partial y_i\partial y_j}\Theta(y) e^{-g(y)}\dd y\\ &= \int \Bigg(\sum_{i,j=1}^d  \frac{\partial}{\partial y_i}g(y)\frac{\partial^2}{\partial y_i \partial y_j}\psi^*(\nabla\psi(y))\frac{\partial}{\partial y_j}\Theta(y) - \sum_{i,j,k=1}^d  \frac{\partial^3}{\partial y_i \partial y_j \partial y_k}\psi^*(\nabla \psi(y)) \frac{\partial^2}{\partial y_i \partial y_k}\psi(y)\frac{\partial}{\partial y_j}\Theta(y)\Bigg) e^{-g(y)}\dd y.
			\end{align*}
			Define the vector 
			$\mathbf{u}_{\nabla\psi}^{\top}(y):=(u_{\nabla \psi,1}(y),\ldots ,u_{\nabla\psi,d}(y))$,
			where 
			$$u_{\nabla \psi,j}(y):=\sum_{i,k=1}^d \frac{\partial^3}{\partial y_i \partial y_j \partial y_k}\psi^*(\nabla \psi(y)) \frac{\partial^2}{\partial y_i \partial y_k}\psi(y).$$
			By combining the above observations, we get that 
			$$A_{\nabla\psi}(y)=(\nabla f)(\nabla\psi(y))-\nabla^2\psi^*(\nabla\psi(y))\nabla g(y)+\mathbf{u}_{\nabla\psi}(y).$$
			Therefore, the stationary condition for the optimization problem \eqref{eq:targetoptim} is given by 
			$$\nabla\tilde{\psi}(y)-\nabla\psi_1(y)=-\tau \nabla^2\psi_1(y)\left((\nabla f)(\nabla\psi_1(y))-\nabla^2\psi^*(\nabla\psi_1(y))\nabla g(y)+\mathbf{u}_{\nabla\psi_1}(y)\right).$$
			It thus suffices to show that $\nabla^2\psi_1(y)\mathbf{u}_{\nabla\psi_1}(y)=-\nabla(\log\det(\nabla^2\psi_1(y))$. To wit, note that by the chain rule, we get:
			$$\frac{\partial}{\partial y_j}(\log\det(\nabla^2\psi_1(y))=\sum_{i,k=1}^d \frac{\partial^2}{\partial y_i\partial y_k}\psi_1^*(\nabla\psi_1(y))\frac{\partial^3}{\partial y_i \partial y_k \partial y_j}\psi_1(y).$$
			Moreover, as $\nabla^2\psi_1^*(\nabla\psi_1(y))=(\nabla^2 \psi_1(y))^{-1}$ by \cref{obs:elemprop}, we have: 
			\begin{align*}
				\sum_{i,k=1}^d \frac{\partial^2}{\partial y_i\partial y_k}\psi_1^*(\nabla\psi_1(y))\frac{\partial^2}{\partial y_k\partial y_i}\psi_1(y)=d \qquad \implies \frac{\partial}{\partial y_j}\left(\sum_{i,k=1}^d \frac{\partial^2}{\partial y_i\partial y_k}\psi_1^*(\nabla\psi_1(y))\frac{\partial^2}{\partial y_k\partial y_i}\psi_1(y)\right)=0.
			\end{align*}
			By applying the chain rule, the LHS of the above display simplifies to 
			\begin{align*}
				&\;\;\;\; \frac{\partial}{\partial y_j}\left(\sum_{i,k=1}^d \frac{\partial^2}{\partial y_i\partial y_k}\psi_1^*(\nabla\psi_1(y))\frac{\partial^2}{\partial y_k\partial y_i}\psi_1(y)\right) \\ 
				& = \sum_{i,k=1}^d \frac{\partial^2}{\partial y_i\partial y_k}\psi_1^*(\nabla\psi_1(y))\frac{\partial^3}{\partial y_i\partial y_k\partial y_j}\psi_1(y) + \sum_{i,k,\ell=1}^d \frac{\partial^3}{\partial y_i \partial y_k\partial y_{\ell}}\psi_1^*(\nabla\psi_1(y))\frac{\partial^2}{\partial y_{\ell}\partial y_j}\psi_1(y)\frac{\partial^2}{\partial y_i\partial y_k}\psi_1(y) \\ & =  \frac{\partial}{\partial y_j}\log\det(\nabla^2\psi_1(y))+(\nabla^2 \psi_1(y)\mathbf{u}_{\nabla\psi_1}(y))_j.
			\end{align*}
			This completes the proof.
		\end{proof}
		
		\section{Additional discussions on implementation}\label{sec:numvar}
		
		\paragraph*{On variational inference} We provide a simple illustration of \cref{alg:VI} below. Suppose that the target $e^{-f}$ is the logistic distribution with location parameter $0$ and scale parameter $1$. Then $$f'(x)=2\sigma(x)-1, \quad \mbox{and} \quad f''(x)=2\sigma(x)(1-\sigma(x)), \quad \mbox{where} \quad \sigma(x)=(1+e^{-x})^{-1}.$$
		We run \cref{alg:VI} with $\lambda=1$. Expectations under $Y\sim N(0,\lambda)\equiv N(0,1)$ are approximated using $1000$ Monte-Carlo draws. The initializers are $m_0=10$ and $\sigma_0=1$. The step-sizes $\{\eta_k\}_{k\ge 0}$ are chosen adaptively as follows: 
		$$\eta_k=\frac{1}{2}\frac{\sigma_{k-1}}{1+|\sigma_{k-1}^2\E_{Y\sim N(0,1)}f''(\sigma_{k-1}Y+m_{k-1})-1|}.$$
		This will ensure that $\sigma_k>0$. Let $(m,\sigma)$ denote the mean and the variance of the Gaussian variational approximation for the logistic target. Then by \cite[Equation 1.9]{Katsevich2024}, $(m,\sigma)$ satisfy the stationary conditions $$\E_{X\sim N(m,\sigma^2)} f'(X)=0, \quad \mbox{and} \quad \E_{X\sim N(m,\sigma^2)} f''(X)=\sigma^{-2}.$$
		Let us define 
		$\mbox{Err}_{k,1}:=\E_{X\sim N(m_k,\sigma_k^2)} f'(X) \quad \mbox{and} \quad \mbox{Err}_{k,2}:=\E_{X\sim N(m_k,\sigma_k^2)} f''(X)-\sigma_k^{-2}$. 
		We should then expect $\mbox{Err}_{k,1}$ and $\mbox{Err}_{k,2}$ to converge to $0$ as $k\to\infty$. \cref{fig:vi} illustrates this phenomenon. Even with moderate number of iterations ($k=50$), we see that $\mbox{Err}_{k,1}$ and $\mbox{Err}_{k,2}$ are very close to $0$. This phenomenon persists across different initializations.

		\begin{figure}
			\centering
			\includegraphics[width=0.5\linewidth]{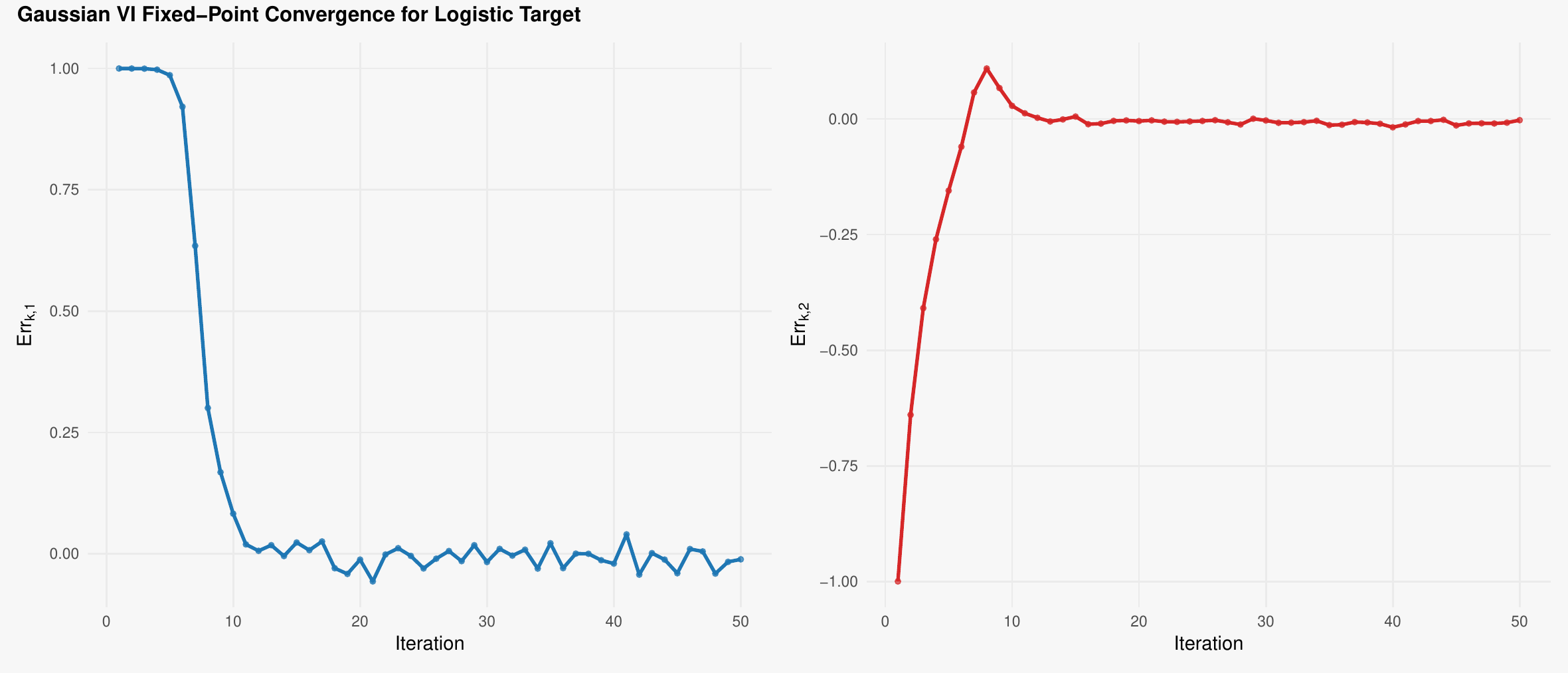}
			\caption{Convergence to stationarity of \cref{alg:VI} with logistic target}
			\label{fig:vi}
		\end{figure}
		
		\paragraph*{A distillation-free implementation} In \cref{sec:simulations}, distillation was introduced purely for memory overflow considerations and is not intrinsic to our algorithm. We discuss a computationally tractable method that avoids distillation in the context of our logistic regression based method (see \cref{alg:neural-pde-logistic-reg}). We hope to expand on this approach in future work. For now, we present it as an alternate algorithm which can be summarized as follows: 
		
		\noindent \emph{Step 1}: Run the logistic regression version of the parabolic Monge-Amp\`{e}re (PMA) flow (see \cref{alg:neural-pde-logistic-reg}) for a small number of steps, say $B$ (e.g., $5$--$6$), with reference distribution $e^{-g}$. Note that $B$ being small ensures that the memory does not overflow and that the numerical errors do not accumulate much.
		
		\noindent \emph{Step 2}: Suppose $\rho_B$ denotes the distribution after $B$ steps. Use that as the reference distribution, and \emph{restart} the parabolic Monge-Amp\`{e}re (PMA) flow with new reference distribution $\rho_B$ (instead of $e^{-g}$). Run it again for $B$ more steps to get $\rho_{2B}$. 
		
		\noindent \emph{Step 3}: Repeat step 2 by changing the reference distribution to $\rho_{2B}$. Keep repeating until convergence. 
		
		\begin{figure}[h]
			\centering
			\begin{tikzpicture}[
				node distance=4cm,
				every node/.style={font=\small},
				box/.style={draw, rounded corners, align=center, minimum height=1.1cm, minimum width=2.6cm}
				]
				
				\node (ref0) {Reference for block $0$: $e^{-g}$};
				\node[box, right of=ref0] (block0) {$B$ PMA steps};
				\node[right of=block0] (rhoB) {$\rho_B$};
				
				\draw[->] (ref0) -- (block0);
				\draw[->] (block0) -- (rhoB);
				
				\node[below=1.6cm of ref0] (ref1) {Reference for block $1$: $\rho_B$};
				\node[box, right of=ref1] (block1) {$B$ PMA steps};
				\node[right of=block1] (rho2B) {$\rho_{2B}$};
				
				\draw[->] (ref1) -- (block1);
				\draw[->] (block1) -- (rho2B);
				
				\node[below=1.6cm of ref1] (ref2) {Reference for block $2$: $\rho_{2B}$};
				\node[box, right of=ref2] (block2) {$B$ PMA steps};
				\node[right of=block2] (rho3B) {$\rho_{3B}$};
				
				\draw[->] (ref2) -- (block2);
				\draw[->] (block2) -- (rho3B);
				
			\end{tikzpicture}
			\caption{\small Parabolic Monge-Amp\`{e}re flow with block-wise refresh after every $B$ steps.}
			\label{fig:newrefresh}
		\end{figure}
		\noindent The above approach eliminates full backpropagation as the potentials from earlier updates can be discarded after each block, thereby avoiding memory overflow issues. Logistic regression itself provides regularization. We refer you to the figure above for an illustration of the new approach. We note that as \cref{alg:neural-pde-logistic-reg} requires only sample access (and not analytic density forms), refreshing the reference distribution after each block is straightforward. Intuitively, we believe that restarting after every $B$ steps brings the reference distribution close to the target $e^{-f}$ as we move from one block to the next. A detailed theoretical treatment of this algorithm  would be an interesting direction for future work.
    \end{appendix}

\end{document}